\newcommand{\ReLU}{\mathrm{ReLU}}
\newcommand{\var}{\mathrm{Var}}
\newcommand{\calH}{\mathcal{H}}
\newcommand{\calT}{\mathcal{T}}
\newcommand{\calD}{\mathcal{D}}
\newcommand{\calC}{\mathcal{C}}
\newcommand{\calP}{\mathcal{P}}
\newcommand{\calA}{\mathcal{A}}
\newcommand{\asj}{\calA^s_{\theta}}
\newcommand{\tp}{\tilde{p}}
\newcommand{\Cjk}{C_{j,k}}
\newcommand{\pjk}{p_{j,k}}
\newcommand{\jk}{{j,k}}
\newcommand{\cjk}{\mathbf{c}_{j,k}}
\newcommand{\din}{{d_{\rm in}}}
\newcommand{\ttimes}{\widetilde{\times}}
\newtheorem{examplea}{Example}
\newtheorem{exampleb}{Example}
\newtheorem{examplec}{Example}
\date{}
\newcommand{\commentout}[1]{}
\begin{document}

	\title{Deep Neural Networks are Adaptive to Function Regularity and Data Distribution in Approximation and Estimation}
	\author{
		Hao Liu, Jiahui Cheng, Wenjing Liao
		\thanks{Hao Liu is affiliated with the Math department of Hong Kong Baptist University; Jiahui Cheng and Wenjing Liao are affiliated with the School of Math at Georgia Tech; Email: \text{haoliu@hkbu.edu.hk, $\{$jcheng328, wliao60$\}$@gatech.edu}. This research is partially supported by National Natural Science Foundation of China  12201530, HKRGC ECS 22302123, HKBU 179356, NSF DMS--2012652,  NSF DMS-2145167 and DOE SC0024348.}}
	
	\maketitle
	
	\begin{abstract}

		Deep learning has exhibited remarkable results across diverse areas. To understand its success, substantial research has been directed towards its theoretical foundations. Nevertheless, the majority of these studies examine how well deep neural networks can model functions with uniform regularity. In this paper, we explore a different angle: how deep neural networks can adapt to different
		regularity in functions across different locations and scales and nonuniform data distributions. More precisely, we focus on a broad class of functions defined by nonlinear tree-based approximation. This class encompasses a range of function types, such as functions with uniform regularity and discontinuous functions. We develop nonparametric  approximation and estimation theories  for this function class using deep ReLU networks. Our results show that deep neural networks are adaptive to different
		regularity of functions and nonuniform data distributions at different locations and scales. We apply our results to several function classes, and derive the corresponding approximation and generalization errors. The validity of our results is demonstrated through  numerical experiments.
	\end{abstract}
	
	\section{Introduction}

	Deep learning has achieved significant success in  practical applications with high-dimensional data, such as computer vision \citep{krizhevsky2012imagenet}, natural language processing \citep{graves2013speech,young2018recent}, health care \citep{miotto2018deep,jiang2017artificial} and bioinformatics \citep{Alipanahi2015PredictingTS,zhou2015predicting}. 
	The success of deep learning  demonstrates the  power of neural networks in representing and learning complex operations on high-dimensional data.

	In the past decades, the representation power of neural networks has been extensively studied. Early works in literature focused on shallow (two-layer) networks with continuous sigmoidal activations (a function $\sigma(x)$ is sigmoidal, if $\sigma(x) \rightarrow 0$ as $x \rightarrow -\infty$, and $\sigma(x) \rightarrow 1$ as $x \rightarrow \infty$) for a universal approximation of continuous functions in a unit hypercube \citep{irie1988capabilities, funahashi1989approximate, cybenko1989approximation, hornik1991approximation, chui1992approximation, leshno1993multilayer,barron1993universal, mhaskar1996neural}. The universal approximation theory of feedforward neural networks with a ReLU activation $\sigma(x)=\max(0,x)$ was studied in  \cite{lu2017expressive,hanin2017universal,daubechies2022nonlinear,yarotsky2017error,schmidt2017nonparametric,suzuki2018adaptivity}. In particular, the approximation theories of ReLU networks have been established for the Sobolev $W^{k,\infty}$ \citep{yarotsky2017error}, H\"{o}lder \citep{schmidt2017nonparametric} and Besov \citep{suzuki2018adaptivity} functions. These works guarantee that, the Sobolev, H\"{o}lder, or Besov function class can be well approximated by a ReLU network function class with a properly chosen network architecture. The approximation error in these works was given in terms of certain function norm. Furthermore, the works in \cite{guhring2020error,hon2021simultaneous,liu2022benefits} proved the approximation error in terms of the  Sobelev norm, which guaranteed the approximation error for the function and its derivatives simultaneously. In terms of the network architecture, feedforward neural networks were considered in the vast majority of approximation theories. Convolutional neural networks were considered in \cite{zhou2020universality,petersen2020equivalence}, and convolutional residual networks were considered in \cite{oono2019approximation,liu2021besov}.

	It has been widely believed that deep neural networks are adaptive to complex data structures. Recent progresses have been made towards theoretical justifications that deep neural networks are adaptive to low-dimensional structures in data.
	Specifically, function approximation theories have been established for H\"older and Sobolev functions supported on low-dimensional manifolds \citep{chen2019efficient,schmidt2019deep, cloninger2020relu,nakada2020adaptive,liu2021besov}. The network size in these works crucially depends on the intrinsic dimension of data, instead of the ambient dimension. In the task of regression and classification, the sample complexity of neural networks \citep{chen2019nonparametric,nakada2020adaptive,liu2021besov} depends on the intrinsic dimension of data, while the ambient dimension does not affect the rate of convergence.

	This paper answers another interesting question about the adaptivity of deep neural networks: 
	\textit{ How does deep neural networks adapt to the function regularity and data distribution at different locations and scales?}
	The answer of this question is beyond the scope of existing function approximation and estimation theory of neural networks. The Sobolev $W^{k,\infty}$ and H\"older functions are uniformly regular within the whole domain. The analytical technique to build the approximation theory of these functions relies on accurate local approximations everywhere within the domain. In real applications, functions of interests often exhibit different regularity at different locations and scales. Empirical experiments have demonstrated that deep neural networks are capable of extracting interesting information at various locations and scales \citep{chung2016hierarchical,haber2018learning}. However, there are limited works on theoretical justifications of the adaptivity of neural networks.
	
	In this paper, we re-visit the nonlinear approximation theory \citep{devore1998nonlinear} in the classical multi-resolution analysis \citep{mallat1999wavelet,daubechies1992ten}. Nonlinear approximations allow one to approximate functions beyond linear spaces. The smoothness of the function can be defined according to the rate of approximation error versus the complexity of the elements. In many settings, such characterization of smoothness is significantly weaker than the uniform regularity condition in the Sobolev or H\"older class \citep{devore1998nonlinear}.

	We focus on the tree-based nonlinear approximations with piecewise polynomials \citep{binev2007universal,binev2005universal,cohen2001tree}. Specifically, the domain of functions is partitioned to multiscale dyadic cubes associated with a master tree. If we build piecewise polynomials on these multiscale dyadic cubes, we naturally obtain multiscale piecewise polynomial approximations. A refinement quantity is defined at every node to quantify how much the error decreases when the node is refined to its children. A thresholding of the master tree based on this refinement quantity gives rise to a truncated tree, as well as an adaptive partition of the domain. Thanks to the thresholding technique, we can define a function class whose regularity is characterized by how fast the size of the truncated tree grows with respect to the level of the threshold. This is a large function class containing the H\"older and piecewise H\"older functions as special cases. 
	
	Our main contributions can be summarized as:
	
	\begin{enumerate}
		\item We establish the approximation theory of deep ReLU networks for a large class of functions whose regularity is defined  according to the nonlinear tree-based approximation theory. This function class allows the regularity of the function to vary at different locations and scales.   
		
		\item We  provide several examples of functions in this class which exhibit different information at different locations and scales. These examples are beyond the characterization of function classes with uniform regularity, such as the H\"older class.
		
		\item A nonparametric estimation theory for this large function class is established with deep ReLU networks, which is validated by  numerical experiments. 
		
		\item Our results demonstrate that, when deep neural networks are representing functions, it does not require a uniform regularity everywhere on the domain. Deep neural networks are automatically adaptive to the regularity of functions at different locations and scales.
	\end{enumerate}
	
	In literature, adaptive function approximation and estimation has been  studied for classical methods \citep{devore1998nonlinear}, including free-knot spline \citep{jupp1978approximation}, adaptive smoothing splines \citep{wahba1995discussion,pintore2006spatially,liu2010data,wang2013smoothing}, nonlinear wavelet  \citep{cohen2001tree,donoho1998minimax,donoho1995wavelet},
	and adaptive piecewise polynomial approximation \citep{binev2007universal,binev2005universal}. 
	Based on traditional methods for estimating functions with uniform regularity, these methods allow the smoothing parameter, the kernel band width or knots placement to vary spatially to adapt to the varying regularity.  Kernel methods with variable bandwidth were studied in \citet{muller1987variable} and local polynomial estimators were studied in \citet{fan1996local}. Based on traditional smoothing splines with a global smoothing parameter, \citet{wahba1995discussion} suggested to replace the smoothing parameter by a roughness penalty function. This idea was then studied in \citet{pintore2006spatially,liu2010data} by using piecewise constant roughness penalty, and in \cite{wang2013smoothing} with a more general roughness penalty. 
	A locally penalized spline estimator was proposed and studied in \citet{ruppert2000theory}, in which a penalty function was applied to spline coefficients and was knot-dependent. 
	Adaptive wavelet shrinkage was studied in \cite{donoho1994ideal,donoho1995adapting,donoho1998minimax}, in which the authors used selective wavelet reconstruction, adaptive thresholding and nonlinear wavelet shrinkage to achieve  adaptation to spatially varying regularity, and proved the minimax optimality. 
	A Bayesian mixture of splines method was proposed in \citet{wood2002bayesian}, in which each component spline had a locally defined smoothing parameter. Other methods include regression splines \citep{fridedman1991multivariate,smith1996nonparametric,denison1998automatic},  hybrid  smoothing splines and regression splines \citep{luo1997hybrid}, and the trend filtering method \citep{tibshirani2014adaptive}.
	The minimax theory for adaptive nonparametric estimator was established in \cite{cai2012minimax}. Most of the works mentioned above focused on one-dimensional problems.  For high dimensional problems, an additive model was considered in \citet{ruppert2000theory}. Recently, the Bayesian additive regression trees  were studied in  \citet{jeong2023art} for estimating a class of sparse piecewise
	heterogeneous anisotropic H\"{o}lder continuous functions in high dimension. 
	
	Classical methods mentioned above adapt to varying regularity of the target functions through a careful selection of some adaptive parameter, such as the location of knots, kernel bandwidth, roughness penalty and adaptive tree structure.
	These methods require the knowledge about or an estimation of how the regularity of the target function changes. Compared to classical methods, deep learning solves the regression problem by minimizing the empirical risk in \eqref{eq.empirial.loss}, so the same optimization problem can be applied to various functions without explicitly figuring out where the regularity of the underlying function changes. Such kind of automatic adaptivity is crucial for real-world applications.

	The connection between neural networks and adaptive spline approximation has been studied in \cite{daubechies2022nonlinear,devore2021neural,liu2022adaptive,petersen2018optimal,imaizumi2019deep}. In particular, an adaptive network enhancement method was proposed in \cite{liu2022adaptive} for the best least-squares approximation using two-layer neural networks. The adaptivity of neural networks to data distributions was considered in \cite{zhang2023effective}, where the concept of an effective Minkowski dimension was introduced and applied to anisotropic Gaussian distributions. The approximation error and generalization error for learning piecewise H\"{o}lder functions in $\RR^d$ are developed in \citet{petersen2018optimal} and \citet{imaizumi2019deep}, respectively. In the settings of \citet{petersen2018optimal} and \citet{imaizumi2019deep}, each discontinuity boundary is parametrized by 
	a $(d-1)$-dimensional H\"{o}lder function, which is called a horizon function.
	In this paper, we consider a function class based on nonlinear tree-based approximation, and provide approximation and generalization theories of deep neural networks for this function class, as well as several examples related with practical applications, which are not implied by existing works. For piecewise H\"{o}lder functions, our setting only assumes the boundary of each piece has Minkowski dimension $d-1$, which is more general than that considered in \citet{petersen2018optimal,imaizumi2019deep}, see Section \ref{sec.pieceholdernd.approx} and \ref{sec.pieceholdernd.gene} for more detailed discussions.
	
	Our paper is organized as follows: In Section \ref{sec.notation}, we introduce notations and concepts  used in this paper. Tree based adaptive approximation and some examples are presented in Section \ref{sec.adaptive}. We present our main results, the adaptive approximation and generalization theories of deep neural networks in Section \ref{sec.main}, and the proofs are deferred to Section \ref{sec.proof}. Our theories is validated by  numerical experiments in Section \ref{sec.numerical}. This paper is concluded in Section \ref{sec.conclusion}.

	\section{Notation and Preliminaries}
	\label{sec.notation}
	In this section, we introduce our notation, some preliminary definitions and ReLU networks.
	
	\subsection{Notation}
	We use normal lower case letters to denote scalars, and bold lower case letters to denote vectors.
	For a vector $\xb\in \RR^d$, we use $x_i$ to denote the $i$-th entry of $\xb$. The standard 2-norm of $\xb$ is  $\|\xb\|_2 = (\sum_{i=1}^d x_i^2)^{\frac 1 2}$. For a scalar $a>0$, $\lfloor a \rfloor$ denotes the largest integer that is no larger than $a$, $\lceil a \rceil$ denotes the smallest integer that is no smaller than $a$.
	Let $I$ be a set. We use $\chi_I$ to denote the indicator function on $I$ such that $\chi_I(\xb) =1$ if $\xb \in I$ and $\chi_I(\xb) =0$ if $\xb \notin I$. The notation $\#I$ denotes the cardinality of $I$.

	Denote the domain $X =[0,1]^d$. For a function $f:X\rightarrow \RR$ and a multi-index $\balpha=[\alpha_1,\dots,\alpha_d]^\top$, $\partial^{\balpha} f$ denotes $\partial^{|\balpha|} f /\partial x_1^{\alpha_1}\cdots \partial x_d^{\alpha_d}$, where $|\balpha| =\sum_{k=1}^d\alpha_k$. We denote $\bx^{\balpha} = x_1^{\alpha_1}x_2^{\alpha_2} \cdots x_d^{\alpha_d}$. 
	Let $\rho$ be a measure on $X$. The $L^2$ norm of $f$ with respect to the measure $\rho$ is $\|f\|^2_{L^2(\rho)}=\int_{X} |f(\xb)|^2 d \rho$.  We say $f \in L^2(\rho)$ if $\|f\|^2_{L^2(\rho)} < \infty$. We  denote $\|f\|_{L^2(\rho(\Omega))}^2=\int_{\Omega} |f(\xb)|^2d\rho$ for any $\Omega\subset X$. 
	
	The notation of $f\lesssim g$ means that there exists  a constant $C$ independent of any variable upon which $f$ and $g$ depend, such that $f\le Cg$; similarly for $\gtrsim$. $f \asymp g$ means that $f\lesssim g$ and $f\gtrsim g$. 
	
	\subsection{Preliminaries}
	
	\begin{definition}[H\"{o}lder functions]
		A function $f:X \rightarrow \RR$ belongs to the H\"{o}lder space $\calH^{r}(X)$ with a H\"{o}lder index $r>0$, if 
		\begin{align}
			\|f\|_{\calH^{r}(X)} &= \max_{|\balpha|<\lceil r-1\rceil}\sup\limits_{\xb\in X} |\partial^{\balpha}f(\xb)| + \max\limits_{|\balpha|=\lceil r-1\rceil}
			\sup\limits_{ \xb\neq \zb \in X}\frac{|\partial^{\balpha}f(\xb)- \partial^{\balpha}f (\zb)|}{\|\xb-\zb\|_2^{r - \lceil r-1\rceil}} < \infty.
			\label{eq:holdernorm}
		\end{align}
		
	\end{definition}

	\begin{definition}[Minkowski dimension]
		\label{def.mikowski}
		Let $\Omega\subset [0,1]^d$. For any $\varepsilon>0$, $\cN(\varepsilon,\Omega,\|\cdot\|_\infty)$ denotes the fewest number of $\varepsilon$-balls that cover $\Omega$ in terms of $\|\cdot\|_\infty$.
		The (upper) Minkowski dimension of $\Omega$ is defined as
		\[d_{M}(\Omega):=\limsup_{\varepsilon\rightarrow 0^{+}} \frac{\log \cN(\varepsilon,\Omega,\|\cdot\|_\infty)}{
			\log(1/\varepsilon)}.\]
		{We further define the Minkowski dimension constant of $\Omega$ as
			$$
			c_{M}(\Omega)=\sup_{\varepsilon>0} \cN(\varepsilon,\Omega,\|\cdot\|_{\infty})\varepsilon^{d_M(\Omega)}.
			$$
			Such a constant is an upper bound on the rate of how $\cN(\varepsilon,\Omega,\|\cdot\|_{\infty})$ scales with $\varepsilon^{-d_M(\Omega)}$.}
	\end{definition}
	\
	
	\noindent\textbf{ReLU network.}
	In this paper, we consider the feedforward neural networks defined over in the form of 
	\begin{align}
		f_{\rm NN}(\xb)=  W_L\cdot\ReLU\big( W_{L-1}\cdots \ReLU(W_1\xb+\bbb_1) + \cdots +\bbb_{L-1}\big)+\bbb_L,
		\label{eq.FNN.f}
	\end{align}
	where $W_l$'s are weight matrices, $\bbb_l$'s are biases, and $\ReLU(a)=\max\{a,0\}$ denotes the rectified linear unit (ReLU). Define the network class as
	\begin{align}
		&\cF_{\rm NN}(L,w,K,\kappa,M)=\big\{f_{\rm NN}:\RR^{d}\rightarrow \RR| f_{\rm NN}(\xb) \mbox{ is in the form of (\ref{eq.FNN.f}) with } L \mbox{ layers,} \label{eq.FNN} \\
		&\mbox{width bounded by } w,  \|f\|_{L^\infty}\leq M, \ \|W_l\|_{\infty,\infty}\leq \kappa,  \|\bbb_l\|_{\infty}\leq \kappa,\  \mbox{ and } \textstyle \sum_{l=1}^L \|W_l\|_0+\|\bbb_l\|_0\leq K   \big\}, 
		\nonumber
	\end{align}
	$\|W\|_{\infty,\infty}=\max_{i,j} |W_{i,j}|,\ \|\bbb\|_{\infty}=\max_i |b_i|$ for any matrix $W$ and vector $\bbb$, and $\|\cdot\|_0$ denotes the number of nonzero elements of its argument.

	\section{Adaptive approximation}
	\label{sec.adaptive}
	This section is an introduction to tree-based nonlinear approximation and a function class whose regularity is defined through nonlinear approximation theory. We   re-visit  tree-based nonlinear approximations and define this function class in Subsection \ref{sectree}. Several examples of this function class are given in Subsection \ref{secexample}.
	
	\subsection{Tree-Based Nonlinear Approximations}
	\label{sectree}
	In  the classical tree-based nonlinear approximations \citep{binev2007universal,binev2005universal,cohen2001tree}, piecewise polynomials are used to approximate the target function on an adaptive partition. For simplicity, we focus on the case that the function domain is $X= [0,1]^d$. Let $\rho$ be a probability measure on $X$ and $f\in L^2(\rho)$.   The multiscale dyadic partitions of $X$ give rise to a tree structure. It is natural to consider nonlinear approximations based on this tree structure.
	
	Let $\calC_j = \{C_{j,k}\}_{k=1}^{2^{jd}}$ be the collection of dyadic subcubes of $X$ of sidelength $2^{-j}$. Here $j$ denotes the scale of $C_{j,k}$ with a small $j$ and $k$ denotes the location.
	A small $j$ represents the coarse scale, and a large $j$ represents the fine scale. These dyadic cubes are naturally associated with a tree $\calT$. Each node of this tree corresponds to a cube $\Cjk$. The dyadic partition of the 2D cube $[0,1]^2$ and its associated tree are illustrated in Figure \ref{figdyadictree}. Every node $\Cjk$ at scale $j$ has $2^d$ children at scale $j+1$. We denote the set of children of $\Cjk$ by $\calC(\Cjk)$. When the node $ \Cjk$ is a child of the node $C_{j-1,k'}$, we call $C_{j-1,k'}$ the parent of $ \Cjk$, denoted by $\calP(\Cjk)$. A proper subtree $\calT_0$ of $\calT$ is a collection of nodes such that: (1) the root node $X$ is in $\calT_0$; (2) if $\Cjk \neq X$ is in $\calT_0$, then its parent is also in $\calT_0$. Given a proper subtree $\calT_0$ of $\calT$, the outer leaves of $\calT_0$ contain all $\Cjk \in \calT$ such that $\Cjk \notin \calT_0$ but the parent of $\Cjk$ belongs to $\calT_0$: $\Cjk \notin \calT_0$ but $\calP(\Cjk) \in \calT_0$. The collection of the outer leaves of $\calT_0$, denoted by $\Lambda =\Lambda(\calT_0)$, forms a partition of $X$.

	\begin{figure}[t]
		\centering
		\subfigure[Dyadic partition]{\includegraphics[width=0.2\textwidth]{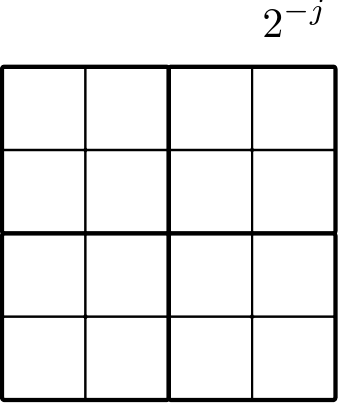}}
		\hspace{2cm}
		\subfigure[Tree]{\includegraphics[width=0.4\textwidth]{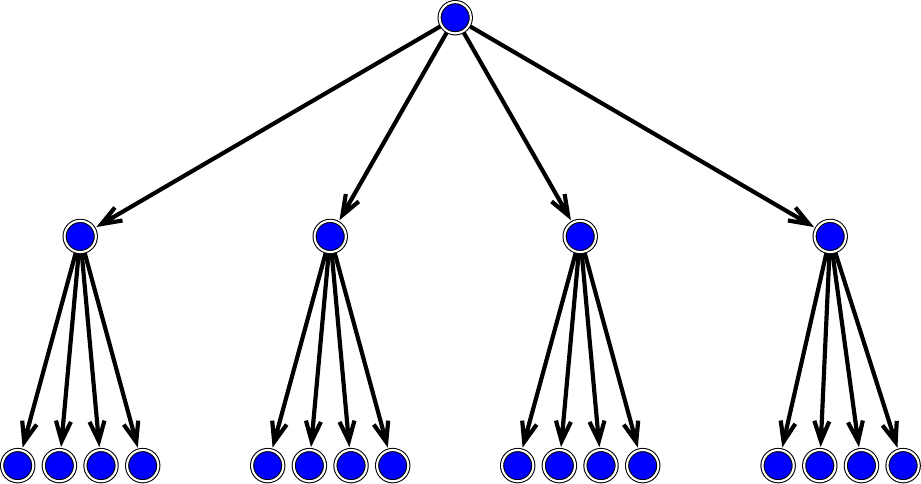}}
		\caption{The dyadic partition of the 2D unit cube $[0,1]^2$ and the associated tree.}
		\label{figdyadictree}
	\end{figure}
	
	The tree-based nonlinear approximation generates an adaptive partition with a thresholding technique. In certain cases, this thresholding technique boils down to wavelet thresholding. Specifically, one defines a refinement quantity on each node of the tree, and then thresholds the tree to the smallest proper subtree containing all the nodes whose refinement quantity is above certain value. Adaptive partitions are given by the outer leaves of this proper subtree after thresholding.
	
	We consider piecewise polynomial approximations of $f$ by polynomials of degree $\theta$, where $\theta$ is a nonnegative integer. 
	Let $\calP_{\theta}$ be the space of $d$-variable polynomials of degree no more than $\theta$. 
	For any cube $\Cjk$, the best polynomial approximating $f$ on $\Cjk$ is 
	\begin{equation}
		p_{j,k} = p_{j,k}(f) = \argmin_{p \in \calP_{\theta}} \|(f-p)\chi_{\Cjk}\|_{L^2(\rho)}.
		\label{eqpjk}
	\end{equation}
	At a fixed scale $j$, $f$ can be approximated by the piecewise polynomial
	$f_j = \sum_{k} p_{j,k}\chi_{\Cjk}$. Denote $V_j$ as the space of $\theta$-order piecewise polynomial functions on the partition $\cup_k \Cjk$. By definition, $V_j$ is a linear subspace and $V_j \subset V_{j+1}$. We have $f_j\in V_j$ and $f_j$ is the best approximation of $f$ in $V_j$.
	Let $V^{\perp}_j$ be the orthogonal complement of $V_j$ in $V_{j+1}$, and then  
	$$V_{j+1} = V_j \oplus V^{\perp}_j \ \text{ and  } \ V^{\perp}_j \perp V^{\perp}_{j'} \text{ if } j \neq j'.$$

	When the node $\Cjk$ is refined to its children $\calC(\Cjk)$, the difference of the approximations between these two scales on $\Cjk$ is defined as
	\begin{equation}
		\psi_\jk = \psi_\jk(f) = \sum_{C_{j+1,k'} \in \calC(\Cjk)} p_{j+1,k'}(f)\chi_{C_{j+1,k'}} - p_\jk(f) \chi_{\Cjk} .
		\label{eqrefinementquantity1}
	\end{equation}
	For $C_{0,1} = X$, we let $\psi_{0,1} = p_{0,1}$. Note that $\sum_k \psi_\jk \in V^{\perp}_j$ and therefore $\sum_k \psi_\jk $ and $\sum_{k'} \psi_{j',k'} $ are orthogonal if $j \neq j'$.

	The refinement quantity on the node $\Cjk$ is defined as the norm of $\psi_\jk$:
	\begin{equation}
		\delta_\jk = \delta_\jk(f) = \|\psi_\jk\|_{L^2(\rho)} .
		\label{eqrefinementquantity2}
	\end{equation}
	In the case piecewise constant approximations, i.e. $\theta=0$, $\psi_\jk(f)$ corresponds to the Haar wavelet coefficient of $f$, and $\delta_\jk$ is the magnitude of the Haar wavelet coefficient.
	
	The target function $f$ can be decomposed as 
	$$f =\sum_{j \ge 0, k} \psi_\jk(f).$$
	Due to the orthogonality of the $\psi_\jk$'s, we have
	$$\|f\|^2_{L^2(\rho)} = \sum_{j \ge 0, k} [\delta_\jk(f)]^2.$$

	\begin{figure}[t]
		\centering
		\subfigure[Red nodes satisfy $\delta_\jk >\eta$]{\includegraphics[width=0.4\textwidth]{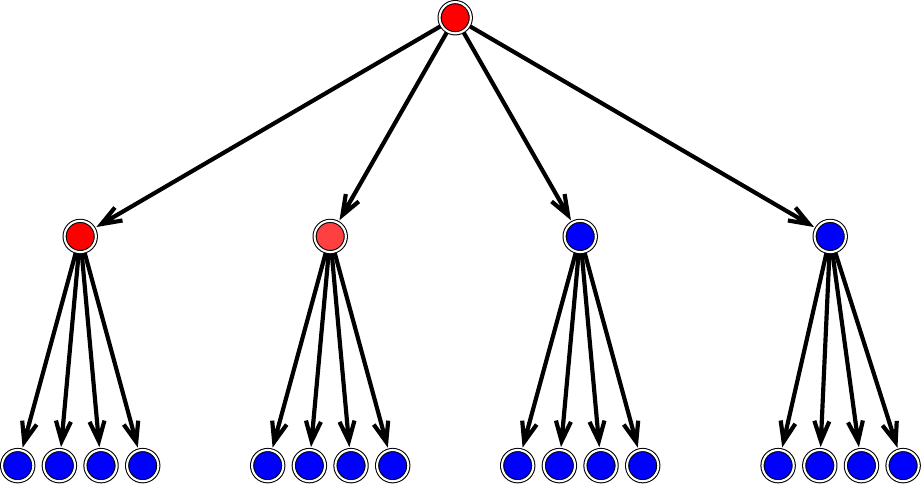}}
		\hspace{1cm}
		\subfigure[The truncated tree]{\includegraphics[width=0.4\textwidth]{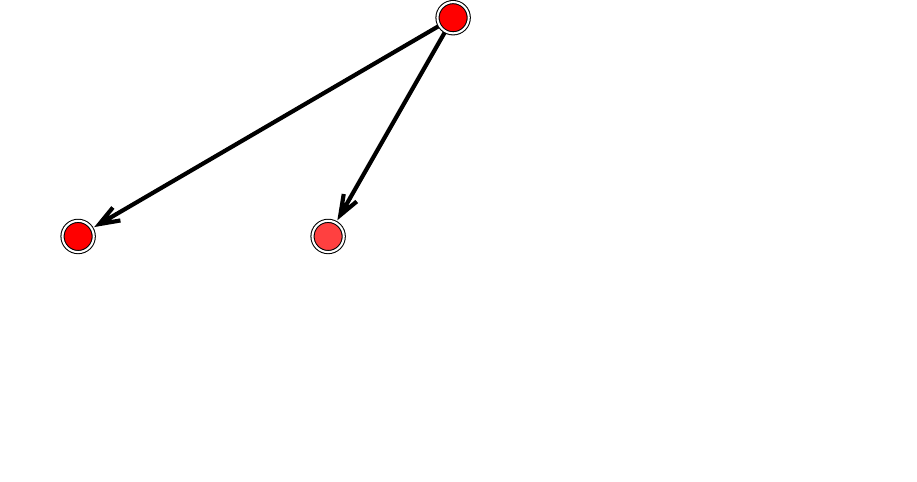}}
		\\
		\subfigure[Outer leaves given by the green nodes]{\includegraphics[width=0.41\textwidth]{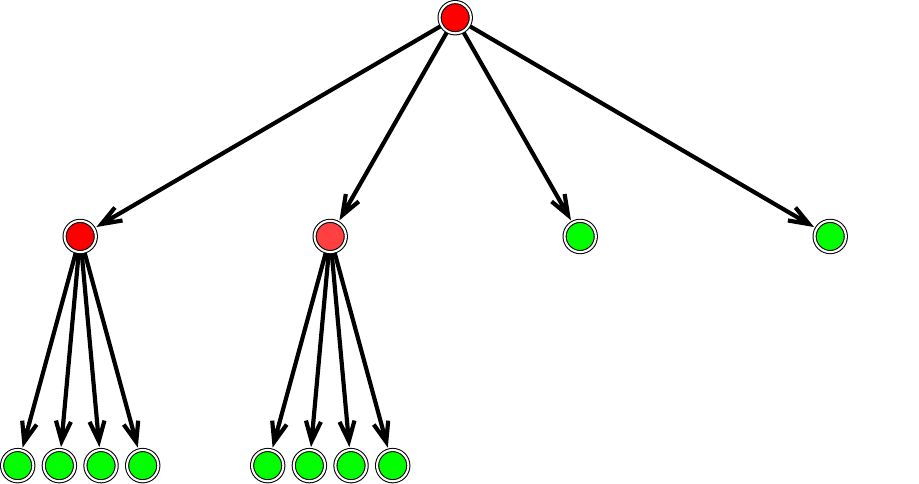}}
		\hspace{2.5cm}
		\subfigure[Dyadic partition]{\includegraphics[width=0.3\textwidth]{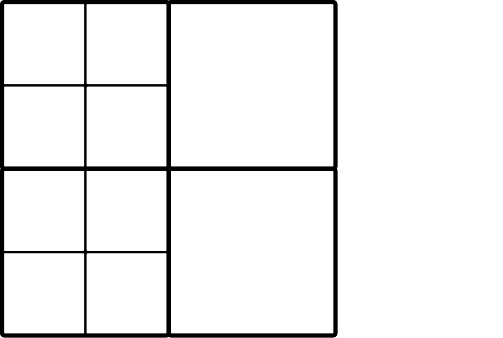}}
		\caption{(a) For a fixed $\eta>0$, the red nodes have the refinement quantity above $\eta$: $\delta_\jk(f) >\eta$. The master tree is then truncated to the smallest subtree containing the red nodes in (b). In (c), the outer leaves of the truncated tree are given by the green nodes. The corresponding adaptive partition is given in (d).}
		\label{figtruncatetree}
	\end{figure}
	
	In the tree-based nonlinear approximation, one fixes a threshold value $\eta>0$, and truncate $\calT$ to $\calT(f,\eta)$ -- the smallest subtree that contains all $\Cjk \in \calD$ with $\delta_\jk(f) >\eta$. The collection of outer leaves of $\calT(f,\eta)$,  denoted by $\Lambda(f,\eta)$, gives rise to an adaptive partition. This truncation procedure is illustrated in Figure \ref{figtruncatetree}. In Figure \ref{figtruncatetree}, the red nodes have the refinement quantity above $\eta$, and then  the master tree $\calT$ is truncated to the smallest subtree containing the red nodes in (b). The outer leaves of this truncated tree are given by the green nodes in Figure \ref{figtruncatetree} (c), and the corresponding adaptive partition is given in Figure \ref{figtruncatetree} (d).

	The piecewise polynomial approximation of $f$ on this adaptive partition is
	\begin{align}
		p_{\Lambda(f,\eta)} = \sum_{\Cjk \in \Lambda(f,\eta)}p_\jk(f)\chi_{\Cjk}.
		\label{eq.plambda}
	\end{align}
	In the adaptive approximation, the regularity of $f$ can be defined by the size of the tree $\#\calT(f,\eta)$. 
	\begin{definition}[(2.19) in \cite{binev2007universal}]
		\label{defasl}
		For a fixed $s>0$, a polynomial degree $\theta$, we let the function class $\calA^s_{\theta}$ be the collection of all $f \in L^2(X)$, such that
		\begin{equation}
			|f|_{\asj}^m =\sup_{\eta>0} \eta^m \#\calT(f,\eta) <\infty, \quad \text{with }\ m=\frac{2}{2s+1},
			\label{eqasldef}
		\end{equation}
		where $\calT(f,\eta)$ is the truncated tree of approximating $f$ with piecewise $\theta$-th order polynomials with threshold $\eta$.
	\end{definition}
	
	In Definition \ref{defasl}, the complexity of the adaptive approximation is measured by the cardinality of the truncated tree  $\calT(f,\eta)$. In fact, the cardinality of the adaptive partition $\Lambda(f,\eta)$ is related with the cardinality of the truncated tree  $\calT(f,\eta)$ such that
	\begin{equation}
		\#\calT(f,\eta ) \le   \#\Lambda(f,\eta) \le 
		2^d\#\calT(f,\eta ) .
		\label{eq:cardinality}
	\end{equation}
	The lower bound follows from 
	\begin{align*}
		\#\calT(f,\eta ) &\le  \sum_{k=1}^{\infty} \frac{\#\Lambda(f,\eta)}{(2^d)^k} = \frac{\#\Lambda(f,\eta)}{2^d(1-\frac{1}{2^d})} \le \#\Lambda(f,\eta).
	\end{align*}

	The definition of $\asj$ does not explicitly depend on the dimension $d$. The dimension $d$ is actually hidden in the regularity parameter $s$ (see Example \ref{exampleholderinasl}). This way of definition has the advantage of adapting to low-dimensional structures in the data distribution (see Example \ref{examplelowd}).

	When $f \in \asj$, we have the approximation error
	\begin{equation}
		\|f-p_{\Lambda(f,\eta)}\|_{L^2(\rho)}^2 \le C_s |f|_{\asj}^{m}\eta^{2-m} \le C_s |f|_{\asj}^{2} (\#\calT(f,\eta))^{-{2s}},
		\label{eqaslapproxerror}
	\end{equation}
	where 
	\begin{equation}
		C_s =2^m \sum_{\ell \ge 0} 2^{\ell(m-2)}, \quad \text{with }\ m=\frac{2}{2s+1}.
		\label{eq:cm}
	\end{equation}
	The approximation error  in \eqref{eqaslapproxerror} is proved in Appendix \ref{appendixproofeqaslapproxerror}. The original proof can be found in \cite{binev2007universal,binev2005universal,cohen2001tree}.

	\subsection{Case Study of the $\asj$ Function Class}
	\label{secexample}
	
	The  $\asj$  class contains a large collection of functions, including  H\"older functions, piecewise H\"older functions, functions which are irregular on a set of measure zero, and regular functions with distribution concentrated on a low-dimensional manifold. For some examples to be studied below, we make the following assumption on the measure $\rho$:
	\begin{assumption} \label{assum.rho}
		There exists a constant $C_{\rho}>0$ such that  any subset $S\subset X$ satisfies $$\rho(S)\leq C_{\rho}|S|,$$ where $|S|$ is the Lebesgue measure of $S$.
	\end{assumption}

	\subsubsection{H\"older functions}
	\begin{examplea}[H\"older functions]
		\label{exampleholderinasl} 
		Let $r>0$. Under Assumption \ref{assum.rho}, the $r$-H\"older function class  $\calH^r(X)$ belongs to $\calA^{r/d}_{\lceil r-1\rceil}$. 
		If $f \in \calH^{r}(X)$, then we have $f \in \calA^{r/d}_{\lceil r-1\rceil}$. Furthermore, if $\|f\|_{\calH^{r}(X)} \le 1$, then 
		\begin{equation}
			|f|_{\calA^{r/d}_{\lceil r-1\rceil}} \le C(r,d,C_\rho)
			\label{eq:1a}
		\end{equation}
		for some constant $C(r,d,C_\rho)$ depending on $r,d$ and $C_\rho$ in Assumption \ref{assum.rho}. 
	\end{examplea}
	Example \ref{exampleholderinasl}  is proved in Section \ref{sec.proof.exampleholderinasl}. At the end of Example \ref{exampleholderinasl},  we assume $\|f\|_{\calH^{r}(X)} \le 1$ without loss of generality. The same statement holds if $\|f\|_{\calH^{r}(X)}$ is bounded by an absolute constant. Such a constant only changes the $|\cdot|_{\calA^{r/d}_{\lceil r-1\rceil}}$ bound in \eqref{eq:1a}, i.e. $C(r,d,C_\rho)$ will  depends on this constant.

	The neural network approximation theory for H\"older functions is given in Example \ref{ex.holder.approx} and the generalization theory is given in Example \ref{ex.holder.gene}.

	\begin{figure}[t!]
		\centering
		\subfigure[1D piecewise H\"older function]{\includegraphics[width=0.42\textwidth]{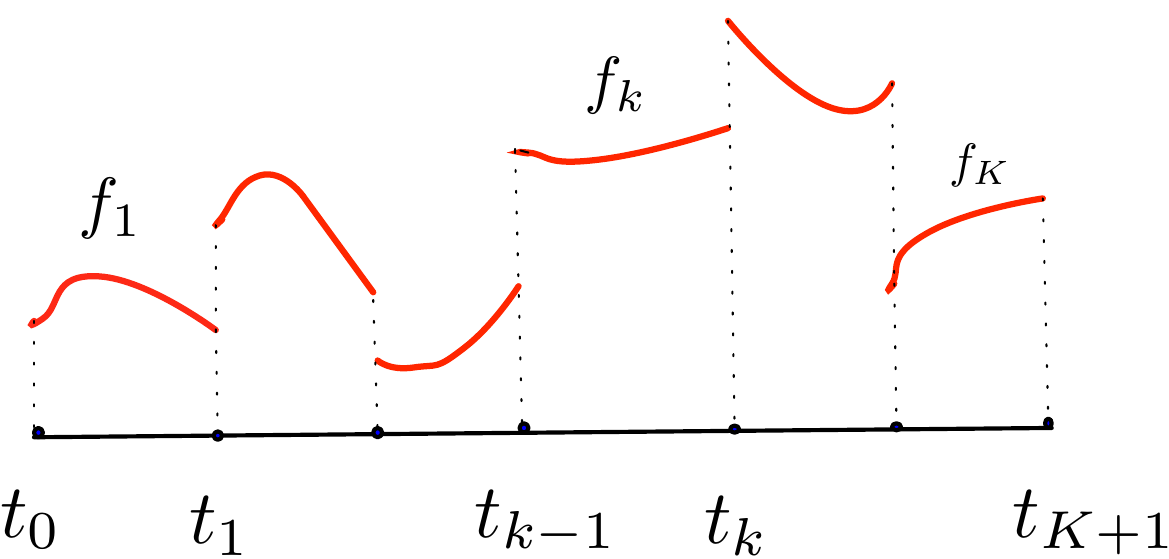}}
		\hspace{1cm}
		\subfigure[2D piecewise domain]{\includegraphics[width=0.21\textwidth]{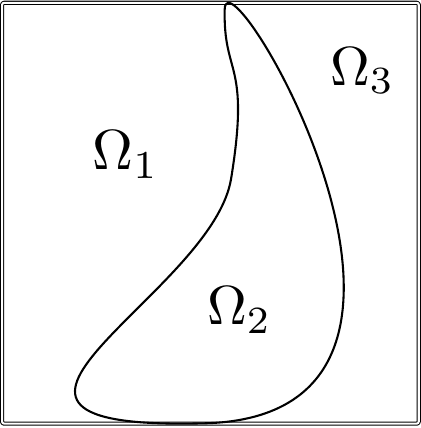}}
		\caption{(a) Example \ref{examplepieceholderinasl1d}: 1D piecewise H\"older function with $K$ discontinuity points; (b) Example \ref{examplepieceholderinasl}: A 2D piecewise domain. The functions in Example \ref{examplepieceholderinasl} are $r$-H\"older in the interior of $\Omega_1,\Omega_2,\Omega_3$.}
		\label{figexample}
	\end{figure}
	\subsubsection{Piecewise H\"older functions in 1D}
	\begin{examplea}[Piecewise H\"older functions in 1D]
		\label{examplepieceholderinasl1d}
		Let $d=1$, $r>0$ and $K$ be a positive integer. 
		Under Assumption \ref{assum.rho}, all bounded piecewise  $r$-H\"older functions with $K$ discontinuity points belong to
		$f\in \calA^{r}_{\lceil r-1\rceil}$. Specifically, let $f$ be a piecewise $r$-H\"older function such that $f = \sum_{k=1}^{K+1} f_k\chi_{[t_{k-1},t_{k})}$, where  $0= t_0 <t_1<\ldots<t_K<t_{K+1} = 1$. Each function  $f_k: [t_{k-1},t_{k}) \rightarrow \RR $ is $r$-H\"older in $(t_{k-1},t_{k})$, and $f$ is discontinuous at $t_1,t_2,\ldots t_K$. Assume $f$ is bounded such that $\|f\|_{L^{\infty}([0,1])} \le 1$. In this case, we have $f\in \calA^{r}_{\lceil r-1\rceil}$. See Figure \ref{figexample} (a) for an illustration of a piecewise H\"older function in 1D. Furthermore, if $\max_k\|f\|_{\calH^{r}(t_k,t_{k+1})} \le 1,$ then $$|f|_{\calA^{r}_{\lceil r-1\rceil}} \le C(r,d,C_\rho,K)$$ for some constant $C(r,d,C_\rho,K)$ depending on $r,d,K$ and $C_\rho$ in Assumption \ref{assum.rho}, and does not depend on specific $t_k$'s.
	\end{examplea}

	Example \ref{examplepieceholderinasl1d} is proved in Section \ref{sec.proof.examplepieceholderinasl1d}. 
	Example \ref{examplepieceholderinasl1d} demonstrates that, for 1D bounded piecewise $r$-H\"older functions with a finite number of discontinuities, the overall regularity index is $s =r$ under Definition \ref{defasl}. In comparison with Example \ref{exampleholderinasl}, we prove that, a finite number of discontinuities in 1D does not affect the regularity index in Definition \ref{defasl}.  
	
	The neural network approximation theory for piecewise H\"older functions in 1D is given in Example \ref{ex.pieceiwiseholder1d.approx} and the generalization theory is given in Example \ref{ex.pieceiwiseholder1d.gene}.
	
	\subsubsection{Piecewise H\"older functions in multi-dimensions}
	In the next example, we will see that, for piecewise $r$-H\"older functions in multi-dimensions, the overall approximation error is dominated either by the approximation error in the interior of each piece or by the error along the discontinuity. The overall regularity index $ s$ depends on $r$ and $d$. 
	
	\begin{examplea}[Piecewise H\"older functions in multi-dimensions]
		\label{examplepieceholderinasl}
		Let $d\ge 2$, $r>0$ and $\{\Omega_t\}_{t=1}^T$ be subsets of $[0,1]^d$ such that $\cup_{t=1}^T \Omega_t = [0,1]^d$ and the $\Omega_t$'s only overlap at their boundaries.  Each $\Omega_t$ is a connected subset of $[0,1]^d$ and the union of their boundaries  $\cup_t\partial\Omega_t $ has upper Minkowski dimension $d-1$.  See Figure \ref{figexample} (b) for an illustration of the $\Omega_t$'s. 
		When $\rho$ satisfies Assumption \ref{assum.rho}, all piecewise $r$-H\"older functions with discontinuity on $\cup_t\partial \Omega_t$
		belong to 
		\begin{equation} 
			\calA^{s}_{\lceil r-1\rceil},\ \text{where}\  s=\min\left\{ \frac{r}{d}, \frac{1}{2(d-1)}\right\}.
			\label{eqtildes}
		\end{equation} 
		Specifically, let $f$ be a piecewise $r$-H\"older function such that $f = \sum_{t=1}^T f_t\chi_{\Omega_t}$ where $\chi_{\Omega_t}$ is the indicator function on $\Omega_t$.
		Each function  $f_t: \Omega_t \rightarrow \RR $ is $r$-H\"older in the interior of $\Omega_t$: $f_t \in \calH^{r}(\Omega_t^o)$ where $\Omega_t^o$ denotes the interior of $\Omega_t$, and $f$ is discontinuous at $\cup_t\partial \Omega_t$. 
		Assume $f$ is bounded such that $\|f\|_{L^{\infty}([0,1]^d)} \le 1$.
		In this case, $f\in \calA^{s}_{\lceil r-1\rceil}$ with the $ s$ given in \eqref{eqtildes}. Furthermore, if 
		$\max_t\|f\|_{\calH^{r}(\Omega_t^o)} \le 1$, then 
		$$|f|_{\calA^{s}_{\lceil r-1\rceil}} \le C(r,d,c_M(\cup_t\partial\Omega_t),C_{\rho})$$
		for some $C(r,d,c_M(\cup_t\partial\Omega_t),C_{\rho})$ depending on $r,d,c_M(\cup_t\partial\Omega_t)$  (the Minkowski dimension constant of $\cup_t \partial\Omega_t$ defined in Definition \ref{def.mikowski}) and $C_\rho$ in Assumption \ref{assum.rho}. 
	\end{examplea} 
	
	Example \ref{examplepieceholderinasl} is proved in Section \ref{sec.proof.examplepieceholderinasl}. 
	Example \ref{examplepieceholderinasl} demonstrates that, for piecewise $r$-H\"older functions with discontinuity on a subset with upper Minkowski dimension $d-1$, the overall regularity index $ s$ has a phase transition. 
	When $\frac r d \le \frac{1}{2(d-1)}$, the approximation error is dominated by that in the interior of the $\Omega_t$'s. When $\frac r d > \frac{1}{2(d-1)}$, the approximation error is dominated by that around the boundary of the $\Omega_t$'s. As the result, the overall regularity index $s$ is the minimum of $\frac r d$ and $\frac{1}{2(d-1)}$.

	The neural network approximation theory for piecewise H\"older functions in multi-dimensions is given in Example \ref{ex.pieceiwiseholdernd.approx} and the generalization theory is given in Example \ref{ex.pieceiwiseholdernd.gene}.

	\subsubsection{Functions  irregular on a set of measure zero}
	The definition of $\asj$ is dependent on the measure $\rho$, since the refinement quantity $\delta_\jk$ is the $L^2$ norm with respect to $\rho$.
	This measure-dependent definition is not only adaptive to the regularity of $f$, but also adaptive to the distribution $\rho$.
	In the following example, we show that, the definition of $\asj$ allows the function $f$ to be irregular on a set of measure zero. For $\delta>0$, $\Omega_\delta$ denotes the set within $\delta$ distance to $\Omega$ such that 
	$\Omega_\delta = \{x \in X:\ {\rm dist}(x,\Omega) = \inf_{z\in \Omega} \|x-z\|\le \delta\}$.
	
	\begin{examplea}[Functions  irregular on a set of measure zero]
		\label{examplemeasure0}
		Let $\delta>0$, $\Omega$ be a subset of $X=[0,1]^d$ and $\Omega^\complement = X\setminus \Omega$.  If $f $ is an $r$-H\"older function on $\Omega_\delta$ and $\rho(\Omega^\complement) = 0$, then  $f\in \calA^{r/d}_{\lceil r-1\rceil}$. Furthermore, if $\|f\|_{\calH^{r}(\Omega_\delta)} \le 1$, then 
		$$|f|_{\calA^{r/d}_{\lceil r-1\rceil}} \le C(r,d,C_\rho)$$ for some constant $C(r,d,C_\rho)$ depending on $r,d$ and $C_\rho$ in Assumption \ref{assum.rho}.

	\end{examplea}
	
	Example \ref{examplemeasure0} is proved in Section \ref{sec.proof.examplemeasure0}. 
	In Example \ref{examplemeasure0}, $f$ is $r$-H\"older on $\Omega_\delta$, and can be irregular on a measure zero set. In comparison with Example \ref{exampleholderinasl}, such  irregularity on a set of measure zero does not affect the smoothness parameter in Definition \ref{defasl}. In this example, we set $\Omega_\delta$ to be a larger set than $\Omega$, in order to avoid the discontinuity effect at the boundary of $\Omega$.
	
	The neural network approximation theory for functions in Example \ref{examplemeasure0} is given in Example \ref{ex.holderirregular.approx} and the generalization theory is given in Example \ref{ex.holderirregular.gene}.

	\subsubsection{H\"{o}lder functions  with distribution concentrated on a low-dimensional manifold}
	Since the definition of $\asj$ is dependent on the probability measure $\rho$, this definition is also adaptive to lower-dimensional sets in $X$. We next consider  a probability measure $\rho$  concentrated on a $d_{\rm in}$-dimensional manifold isometrically embedded in $X$.

	\begin{examplea}[H\"{o}lder functions with distribution concentrated on a low-dimensional manifold] 
		\label{examplelowd}
		Let $r>0$.
		Suppose $X=[0,1]^d$ can be decomposed to $\Omega$ and $\Omega^\complement$, i.e. $X = \Omega \cup \Omega^\complement$ where $\Omega$ is a compact $d_{\rm in}$-dimensional Riemannian manifold isometrically embedded in $X$. Assume that $\rho(\Omega^\complement) =0$, and $\rho$ conditioned on $\Omega$ is the uniform distribution on $\Omega$. If $f \in \calH^r(X)$, then $f\in \calA^{r/{d_{\rm in}}}_{\lceil r-1\rceil}$. Furthermore, if $\|f\|_{\cH^r(X)}\leq 1$, then $$|f|_{\cA^{r/d_{\rm in}}_{\lceil r-1\rceil}} <C(r,d,d_{\rm in},\tau,|\Omega|)$$ with $C(r,d,d_{\rm in},\tau,|\Omega|)$ depending on $r,d,d_{\rm in},\tau$ and $|\Omega|$, where $\tau$ is the reach \citep{federer1959curvature} of $\Omega$ and $|\Omega|$ is the surface area of $\Omega$.
		
	\end{examplea}
	
	Example \ref{examplelowd} is proved in Section \ref{sec.proof.examplelowd}. In this example, the function $f$ is $r$-H\"older on $X$, but the measure $\rho$ is supported on a lower dimensional manifold with intrinsic dimension $d_{\rm in}$. The regularity index under Definition \ref{defasl} is $r/{d_{\rm in}}$ instead of $r/d$.

	\section{Adaptive approximation and generalization theory of deep neural networks}
	\label{sec.main}
	
	This section contains our main results: approximation and generalization theories of deep ReLU networks for the $\asj$ function class. We  present some preliminaries in Subsection \ref{subsec:proofprelim}, the approximation theory in Subsection \ref{subsecapprox}, case studies of the approximation error in Subsection \ref{subseccasea}, the generalization theory in Subsection \ref{subsecgeneralization}, and case studies of the generalization error in Subsection \ref{subseccaseg}.

	\subsection{Preliminaries}
	\label{subsec:proofprelim}
	Each $C_{j,k}$ is a hypercube in the form of $\otimes_{\ell=1}^d[r_{\ell,j,k},r_{\ell,j,k}+2^{-j}]$, where $r_{\ell,j,k}\in [0,1]$ is a scalar and  
	\begin{align}
		\otimes_{\ell=1}^d[r_{\ell,j,k},r_{\ell,j,k}+2^{-j}]=[r_{1,j,k},r_{1,j,k}+2^{-j}]\times \cdots \times [r_{d,j,k},r_{d,j,k}+2^{-j}]
		\label{eq.rb}
	\end{align}
	is a hypercube with edge length $2^{-j}$ in $\RR^d$.
	
	The collection of polynomials $ \left((\xb-\rb_{j,k})/{2^{-j}}\right)^{\balpha}$ form a basis for the space of $d$-variable polynomials of degree no more than $\theta$. Let $\rho$ be a measure on $[0,1]^d$ and  $f\in L^2(\rho)$. The piecewise polynomial approximator $p_{j,k}$ for $f$ can be written as  \begin{align}
		p_{j,k}(\xb)=\sum_{|\balpha|\leq \theta} a_{\balpha}\left(\frac{\xb-\rb_{j,k}}{2^{-j}}\right)^{\balpha},
		\label{eq.poly.bound.form}
	\end{align}
	where $\rb_{j,k}=[r_{1,j,k},...,r_{d,j,k}]$.

	In this paper, we focus on the set of functions with bounded coefficients in the piecewise polynomial approximation.
	
	\begin{assumption}\label{assum.f}
		Let $s,R_{\cA},R, R_p>0$, $\theta$ be a nonnegative integer, and $\rho$ be a probability measure on $[0,1]^d$. We assume $f\in \asj$ and 
		\begin{itemize}
			\item[(i)]  $|f|_{\asj}\leq R_{\cA}$,
			\item[(ii)] $\|f\|_{L^{\infty}([0,1]^d)}\leq R$,
			\item[(iii)] On every $\Cjk$, the polynomial approximator $p_{j,k}$ for $f$ in the form of (\ref{eq.poly.bound.form}) satisfies $|a_{\balpha}|\leq R_p$ for all $\balpha$ with $ |\balpha|\leq \theta$. 
		\end{itemize} 
	\end{assumption}
	By Assumption \ref{assum.f} (i) and (ii), $f$ has a bounded $|\cdot|_{\asj}$ quantity and $L^{\infty}$ norm, which is a common assumption in nonparametric estimation theory \citep{gyorfi2002distribution}. Assumption \ref{assum.f} (iii) requires the polynomial coefficients in the best polynomial approximating $f$ 
	on every $\Cjk$ to be uniformly bounded by $R_p$.  
	The following lemma shows that Assumption \ref{assum.f} (iii) can be implied from Assumption \ref{assum.f} (ii) when $\rho$ is the Lebesgue measure on $X=[0,1]^d$.

	\begin{lemma}\label{lem.poly.uniform}
		Let $R>0$, $\theta$ be a fixed nonnegative integer, and $\rho$ be the Lebesgue measure on $X=[0,1]^d$.  There exists a constant $R_p>0$ depending on $\theta,d$ and $R$ such that, for any  function $f$ on $[0,1]^d$ satisfying $\|f\|_{L^{\infty}(X)}\leq R$, the $p_{j,k}$ in (\ref{eqpjk}) has the form of (\ref{eq.poly.bound.form}) with $|a_{\balpha}|\leq R_p, \ \forall \balpha $ with $ |\balpha|\leq \theta$ and $\|p_{j,k}\|_{L^{\infty}(C_{j,k})}\leq CR_p$ for some $C$ depending on $d$ and $\theta$.
	\end{lemma}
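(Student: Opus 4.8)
The plan is to eliminate the scale $j$ from the problem by an affine change of variables onto the unit cube, after which the claim reduces to a single scale-independent statement about the $L^2([0,1]^d)$ projection onto $\calP_\theta$. I would substitute $\xb=\rb_{j,k}+2^{-j}\mathbf{y}$ with $\mathbf{y}\in[0,1]^d$ into the minimization \eqref{eqpjk}. Since $\rho$ is Lebesgue measure, $d\xb=2^{-jd}\,d\mathbf{y}$, and writing $p$ in the rescaled basis of \eqref{eq.poly.bound.form} gives $p_{j,k}(\xb)=\sum_{|\balpha|\le\theta}a_{\balpha}\mathbf{y}^{\balpha}=:q(\mathbf{y})$. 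The objective then equals $2^{-jd}\int_{[0,1]^d}|g(\mathbf{y})-q(\mathbf{y})|^2\,d\mathbf{y}$, where $g(\mathbf{y})=f(\rb_{j,k}+2^{-j}\mathbf{y})$ is the pullback of $f$ to the unit cube. The constant prefactor $2^{-jd}$ does not change the minimizer, so the coefficients $\{a_{\balpha}\}$ are exactly those of the best $L^2([0,1]^d)$ approximation of $g$ in $\calP_\theta$; crucially $\|g\|_{L^\infty([0,1]^d)}=\|f\|_{L^\infty(C_{j,k})}\le R$, and all dependence on $j,k$ has been absorbed into $g$.

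I would then solve this fixed problem through the normal equations. With $q=\sum_{|\balpha|\le\theta}a_{\balpha}\mathbf{y}^{\balpha}$ and $N=\#\{\balpha:|\balpha|\le\theta\}$, the coefficient vector $\mathbf{a}=(a_{\balpha})$ satisfies $G\mathbf{a}=\mathbf{b}$, where $G_{\balpha,\balpha'}=\int_{[0,1]^d}\mathbf{y}^{\balpha+\balpha'}\,d\mathbf{y}$ is the monomial Gram matrix and $b_{\balpha}=\int_{[0,1]^d}g(\mathbf{y})\mathbf{y}^{\balpha}\,d\mathbf{y}$. The monomials $\{\mathbf{y}^{\balpha}\}_{|\balpha|\le\theta}$ are linearly independent as functions on $[0,1]^d$, so $G$ is symmetric positive definite and invertible; the key point is that $G$ depends only on $\theta$ and $d$, never on $j$ or $k$, whence $\|G^{-1}\|_2$ is a constant determined by $\theta,d$ alone.

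Finally I would read off both bounds. Since $|\mathbf{y}^{\balpha}|\le1$ on $[0,1]^d$ and the cube has unit volume, $|b_{\balpha}|\le\|g\|_{L^\infty}\le R$, so $\|\mathbf{b}\|_2\le\sqrt{N}R$ and $\|\mathbf{a}\|_2\le\|G^{-1}\|_2\sqrt{N}R=:R_p$, a constant depending only on $\theta,d,R$; in particular $|a_{\balpha}|\le R_p$ for every $\balpha$. For the sup-norm estimate, the triangle inequality on $[0,1]^d$ yields $\|p_{j,k}\|_{L^\infty(C_{j,k})}=\|q\|_{L^\infty([0,1]^d)}\le\sum_{|\balpha|\le\theta}|a_{\balpha}|\le NR_p$, so $C=N$ works. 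The only genuine content is the reduction in the first step, namely recognizing that the rescaled basis makes the coefficients scale-invariant so that a single unit-cube Gram matrix controls every $C_{j,k}$ simultaneously; the remaining estimates are routine. A minor point to check is that each $C_{j,k}\subset[0,1]^d$, so that $g$ is well defined and inherits the global bound $R$.
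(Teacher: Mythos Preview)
Your proof is correct and follows essentially the same approach as the paper: both reduce to the unit cube by the affine change of variables $\xb=\rb_{j,k}+2^{-j}\mathbf{y}$, observe that the coefficients $a_{\balpha}$ are then exactly those of the $L^2([0,1]^d)$ projection of the pullback $g$ onto $\calP_\theta$, and bound them by a constant depending only on $\theta,d,R$. The only cosmetic difference is that the paper carries out the unit-cube projection via a Gram--Schmidt orthonormal basis $\{\widetilde{\phi}_\ell\}$ and bounds the resulting coefficients, whereas you invert the monomial Gram matrix directly via the normal equations; both arguments exploit the same scale-invariance and yield equivalent bounds.
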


	Lemma \ref{lem.poly.uniform} is proved in Appendix  \ref{proof.lem.poly.uniform}.
	Lemma \ref{lem.poly.uniform} implies that under the Lebesgue measure, for any $p_{j,k}$ in the form of (\ref{eq.poly.bound.form}), the coefficients $a_{\balpha}$'s are uniformly bounded by a constant  depending on $\theta,d,R$, and is independent to the index $(j,k)$. Thus Assumption \ref{assum.f}(iii) holds.

	\subsection{Approximation Theory}
	\label{subsecapprox}

	Our approximation theory shows that deep neural networks give rise to universal approximations for functions in the $\asj$ class under Assumption \ref{assum.f} if the network architecture is properly chosen.

	\begin{theorem}[Approximation] \label{thm.approx}
		Let $s,d,C_{\rho},R_{\cA},R,R_p>0$ and $\theta$ be a nonnegative integer.
		For any $\varepsilon>0$, there is a ReLU network class $\cF=\cF_{\rm NN}(L,w,K,\kappa,M)$ with parameters
		\begin{align}
			&L=O\left(\log \frac{1}{\varepsilon}\right), \ w=O(\varepsilon^{-\frac{1}{s}}), \ K=O\left(\varepsilon^{-\frac{1}{s}}\log \frac{1}{\varepsilon}\right),\ 
			\kappa=O(\varepsilon^{-\max\{2,\frac{1}{s}\}}), \  M=R,
			\label{thm1.parameter}
		\end{align}
		such that,
		for any $\rho$ satisfying Assumption \ref{assum.rho} and any $f\in \calA^s_\theta$ satisfying Assumption \ref{assum.f}, if the weight parameters of the network are properly chosen,
		the network yields a function  $\widetilde{f} \in \cF$  such that 
		\begin{align}
			\|\widetilde{f}-f\|_{L^{2}(\rho)}^2\leq \varepsilon.
			\label{eq.approx}
		\end{align}
		The constants hidden in $O(\cdot)$ depends on $d$ (the  dimension of the domain for $f$), 
		$C_{\rho}$ (in Assumption \ref{assum.rho}), $\theta$ (the polynomial order), $s$, $R$, $R_p$  and $R_{\cA}$ (in Assumption \ref{assum.f}). 
	\end{theorem}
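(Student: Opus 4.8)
The plan is to split the squared error into an approximation-theoretic part and a network-realization part. First I would invoke the adaptive piecewise-polynomial bound \eqref{eqaslapproxerror}: since $f\in\asj$ with $|f|_{\asj}\le R_{\cA}$, I choose the threshold $\eta$ small enough that $N=\#\calT(f,\eta)$ satisfies $C_s R_{\cA}^2 N^{-2s}\le \varepsilon/2$, which is possible with $N=O(\varepsilon^{-1/(2s)})$ (the upper bound coming from $\#\calT(f,\eta)\le |f|_{\asj}^m\eta^{-m}$), so that $\|f-p_{\Lambda(f,\eta)}\|_{L^2(\rho)}^2\le \varepsilon/2$. By \eqref{eq:cardinality} the adaptive partition $\Lambda(f,\eta)$ is a disjoint family of at most $2^d N$ dyadic cubes, and since every surviving node has $\delta_{j,k}>\eta$, the finest scale present is $j_{\max}=O(\log(1/\varepsilon))$. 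It then remains to realize the piecewise polynomial $p_{\Lambda(f,\eta)}=\sum_{\Cjk\in\Lambda(f,\eta)}p_{j,k}\chi_{\Cjk}$ by a ReLU network $\widetilde f$ with squared $L^2(\rho)$ error $\varepsilon/2$, using a single architecture $\cF_{\rm NN}(L,w,K,\kappa,M)$ whose parameters depend only on $\varepsilon,s,d,\theta,C_\rho,R,R_p,R_{\cA}$ and into which the cube positions, scales, and coefficients enter only through the trainable weights.

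Next I would assemble the network from three parameter-agnostic modules. For each cube $\Cjk=\otimes_\ell[r_{\ell,j,k},r_{\ell,j,k}+2^{-j}]$ I approximate $\chi_{\Cjk}$ by a product of $d$ one-dimensional trapezoidal bumps, each built from $O(1)$ ReLU units whose breakpoints encode $r_{\ell,j,k}$ and $2^{-j}$ and whose ramp has a small transition width $\delta$; the $d$-fold product is formed by the standard ReLU multiplication gadget of depth $O(\log(1/\varepsilon))$. The local polynomial $p_{j,k}$ in the form \eqref{eq.poly.bound.form} is evaluated by an affine map producing the rescaled variable $(\xb-\rb_{j,k})/2^{-j}$, followed by ReLU approximations of the monomials $((\xb-\rb_{j,k})/2^{-j})^{\balpha}$, $|\balpha|\le\theta$, with coefficients bounded by $R_p$ via Assumption \ref{assum.f}(iii). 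A final multiplication gadget forms $\tilde\chi_{\Cjk}\cdot\tilde p_{j,k}$, and the last linear layer sums the $\le 2^d N$ products; because the partition is disjoint and the bumps form an approximate partition of unity, this reproduces $p_{\Lambda(f,\eta)}$ away from the transition shells.

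I would then bound $\|\widetilde f-f\|_{L^2(\rho)}^2$ by the piecewise-polynomial error plus the monomial/product error plus the transition-shell error. The first is $\le\varepsilon/2$ by construction; the monomial and product errors are driven below any prescribed level by taking the multiplication gadgets of depth $O(\log(1/\varepsilon))$, which is the source of $L$ and the $\log(1/\varepsilon)$ factor in $K$. The delicate term is the transition error: on the width-$\delta$ shell around $\partial\Cjk$ the approximate indicator is off by $O(1)$ while $|\tilde p_{j,k}|\lesssim R_p$, so by Assumption \ref{assum.rho} cube $\Cjk$ contributes $\lesssim C_\rho R_p^2\,2^{-j(d-1)}\delta$; summing over $\Lambda(f,\eta)$ against the scale bound $j_{\max}=O(\log(1/\varepsilon))$ shows that a $\delta$ polynomially small in $\varepsilon$ makes the total $\le\varepsilon/2$. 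Reading off the architecture then yields depth $O(\log(1/\varepsilon))$, the claimed width $w=O(\varepsilon^{-1/s})$ and nonzero-parameter count $K=O(\varepsilon^{-1/s}\log(1/\varepsilon))$ from hosting and wiring all cube-modules in parallel, and weight magnitude $\kappa=O(\varepsilon^{-\max\{2,1/s\}})$ forced by the steep ramps ($\sim1/\delta$) together with the rescaling factor $2^{j}$.

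The hard part is twofold. First, the architecture is required to be one fixed class that simultaneously realizes \emph{every} admissible adaptive partition: the positions, scales, and polynomial coefficients vary with $f$ and $\rho$ and may only enter through the weights, so the modules must be genuine position-agnostic templates with bounded, reusable structure. Second, the transition-shell error must be controlled \emph{uniformly} across cubes whose scales range over all of $0,\dots,j_{\max}$, which is exactly what couples the choice of $\delta$ (hence $\kappa$) to both the number of cubes and the finest scale. Balancing a $\delta$ small enough to kill the shell error against a bounded $\kappa$, and reconciling the module count with the stated $w$ and $K$, is the technical core; the piecewise-polynomial reduction and the individual gadget estimates are, by contrast, routine once \eqref{eqaslapproxerror} and standard ReLU multiplication bounds are in hand.
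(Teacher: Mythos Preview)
Your proposal is correct and follows essentially the same route as the paper: split the error against the adaptive piecewise polynomial $p_{\Lambda(f,\eta)}$, bound the first piece by \eqref{eqaslapproxerror}, realize each cube's indicator by a product of one-dimensional trapezoids, approximate the local monomials and products by ReLU multiplication gadgets, and control the transition-shell error by taking the ramp width $\delta$ polynomially small in $\varepsilon$ (with the finest-scale bound $j_{\max}=O(\log(1/\varepsilon))$ supplying the needed control on $2^{j}$ and hence on $\kappa$).

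The only substantive difference is how you bound the shell error: you sum the per-cube contributions $C_\rho R_p^2\,2^{-j(d-1)}\delta$ directly over $\Lambda(f,\eta)$, whereas the paper instead proves a combinatorial lemma (Lemma~\ref{lem.intersectionArea}) bounding the total $(d-1)$-dimensional common-boundary area by $2^{d+1}d(\#\calT)^{1/d}$ and then multiplies by $\delta$. The paper's lemma gives a sharper area bound, but since either route only needs $\delta$ polynomially small in $\varepsilon$, your cruder sum is enough for the stated $\kappa$. One bookkeeping slip: you correctly compute $N=\#\calT(f,\eta)=O(\varepsilon^{-1/(2s)})$ for \emph{squared} error $\varepsilon$, which would actually give $w=O(\varepsilon^{-1/(2s)})$, better than the claimed $O(\varepsilon^{-1/s})$; the paper's own final parameter choice has the same looseness, so this does not invalidate anything.
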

	
	Theorem \ref{thm.approx} is proved in Section \ref{proof.thm.approx}. Theorem \ref{thm.approx} demonstrates the universal approximation power of deep neural networks for the $\asj$ function class. The parameters in \eqref{thm1.parameter} specifies the network architecture. To approximate a specific function $f\in\asj$, there exist some proper weight parameters which give rise to a network function $\widetilde f$ to approximate $f$.

	\subsection{Case Studies of the Approximation Error}
	\label{subseccasea}

	In this subsection, we apply Theorem \ref{thm.approx} to the examples in Subsection \ref{secexample}, and derive the approximation theory for each example. In the following case studies, we need Assumptions \ref{assum.rho}, \ref{assum.f} (iii)  and \ref{assum.general}, but not Assumption \ref{assum.f} (i) - (ii). In each case, we have shown in Subsection \ref{secexample} that Assumption \ref{assum.f} (i) holds: $f\in \asj$ with a proper $\theta$ and the regularity index $s$ depends on each specific case.
	
	\subsubsection{H\"older functions}
	Consider H\"{o}lder functions  in Example \ref{exampleholderinasl}  such that $\cH^r\subset \calA^{r/d}_{\lceil r-1\rceil}$. Applying Theorem \ref{thm.approx} gives rise to the following neural network approximation theory for H\"{o}lder functions.
	\begin{exampleb}[H\"older functions]
		\label{ex.holder.approx}
		Let $r,d,C_{\rho},R_p>0$. 
		For any $\varepsilon>0$, there is a ReLU network class $\cF=\cF_{\rm NN}(L,w,K,\kappa,M)$ with parameters
		\begin{align*}
			&L=O\left(\log \frac{1}{\varepsilon}\right), \ w=O(\varepsilon^{-\frac{d}{r}}), \ K=O\left(\varepsilon^{-\frac{d}{r}}\log \frac{1}{\varepsilon}\right),\ 
			\kappa=O(\varepsilon^{-\max\{2,\frac{d}{r}\}}), \  M=1,
		\end{align*}
		such that for any $\rho$ satisfying Assumption \ref{assum.rho} and  any function $f\in \cH^r(X)$ satisfying $\|f\|_{\cH^r(X)}\leq 1$ and Assumption \ref{assum.f} (iii), if the weight parameters of the network are properly chosen,
		the network yields a function  $\widetilde{f} \in \cF$  such that 
		\begin{align*}
			\|\widetilde{f}-f\|_{L^{2}(\rho)}^2\leq \varepsilon.
		\end{align*}
		The constant hidden in $O(\cdot)$ depends on $r,d,C_\rho, R_p $.
	\end{exampleb}
	Example \ref{ex.holder.approx} is a corollary of Theorem \ref{thm.approx} with  $s=r/d$. Note that Assumption \ref{assum.f}(i) and (ii) are implied by the condition $\|f\|_{\cH^r(X)}\leq 1$. In particular, $\|f\|_{\cH^r(X)}\leq 1$ implies $\|f\|_{L^{\infty}(X)}\leq 1$ according to \eqref{eq:holdernorm}. Furthermore,  If $\|f\|_{\cH^r(X)}\leq 1$, then $|f|_ {\calA^{r/d}_{\lceil r-1\rceil}} \leq C(r,d,C_{\rho})$ according to our argument in Example \ref{exampleholderinasl}.

	In litertuare, the approximation theory of ReLU networks for Sobolev  functions in $W^{r,\infty}$ has been established in \citet[Theorem 1]{yarotsky2017error}. The proof in \citet[Theorem 1]{yarotsky2017error} can be applied to H\"older functions.
	Our network size in Example \ref{ex.holder.approx} is comparable to that in \citet[Theorem 1]{yarotsky2017error}.

	\subsubsection{Piecewise H\"older functions in 1D}
	Considering 1D piecewise H\"older functions in Example \ref{examplepieceholderinasl1d}, we have the following approximation theory:
	\begin{exampleb}[Piecewise H\"older functions in 1D]
		\label{ex.pieceiwiseholder1d.approx}
		Let $r,C_{\rho},R_p,K>0$.
		For any $\varepsilon>0$, there is a ReLU network class $\cF=\cF_{\rm NN}(L,w,K,\kappa,M)$ with parameters
		\begin{align}
			&L=O\left(\log \frac{1}{\varepsilon}\right), \ w=O(\varepsilon^{-\frac{1}{r}}), \ K=O\left(\varepsilon^{-\frac{1}{r}}\log \frac{1}{\varepsilon}\right),\ 
			\kappa=O(\varepsilon^{-\max\{2,\frac{1}{r}\}}), \  M=1,
		\end{align}
		such that for any $\rho$ satisfing Assumption \ref{assum.rho}, and any 
		piecewise $r$-H\"older function in the form of $f = \sum_{k=1}^{K+1} f_k\chi_{[t_{k-1},t_{k})}$ in Example \ref{examplepieceholderinasl1d} satisfying $\|f\|_{L^{\infty}([0,1])}\leq 1$, $\max_k\|f_k\|_{\cH^r(t_k,t_{k+1})}\leq 1$ 
		and Assumption \ref{assum.f}(iii), if the weight parameters of the network are properly chosen,
		the network yields a function  $\widetilde{f} \in \cF$  such that 
		\begin{align*}
			\|\widetilde{f}-f\|_{L^{2}(\rho)}^2\leq \varepsilon.
		\end{align*}
		The constant hidden in $O(\cdot)$ depends on $r,C_{\rho},R_p,K$. 
	\end{exampleb}
	
	Example \ref{ex.pieceiwiseholder1d.approx} is a corollary of Theorem \ref{thm.approx} with  $f\in \calA^{r}_{\lceil r-1\rceil}$. Assumption \ref{assum.f} (i) is not explicitly enforced in Example \ref{ex.pieceiwiseholder1d.approx} , but is implied from the condition of $\max_k\|f_k\|_{\cH^r(t_k,t_{k+1})}\leq 1$ by Example \ref{examplepieceholderinasl1d}.
	Example \ref{ex.pieceiwiseholder1d.approx} shows that, to achieve an $\varepsilon$ approximation error for 1D functions with a finite number of discontinuities, the network size is comparable to that for 1D H\"older functions in Example \ref{ex.holder.approx}.
	
	\subsubsection{Piecewise H\"{o}lder function in multi-dimensions}
	\label{sec.pieceholdernd.approx}
	Considering  piecewise H\"older functions in multi-dimensions  in Example \ref{examplepieceholderinasl}, we have the following approximation theory:
	\begin{exampleb}[Piecewise H\"older functions in multi-dimensions]
		\label{ex.pieceiwiseholdernd.approx}
		Let $d\ge 2$, $r,C_{\rho},R_p,T>0$ and $\{\Omega_t\}_{t=1}^T$ be subsets of $[0,1]^d$ such that $\cup_{t=1}^T \Omega_t = [0,1]^d$ and the $\Omega_t$'s only overlap at their boundaries.  Each $\Omega_t$ is a connected subset of $[0,1]^d$ and the union of their boundaries  $\cup_{t=1}^T \partial\Omega_t $ has upper Minkowski dimension $d-1$. Denote the Minkowski dimension constant of $\cup_{t=1}^T \partial \Omega_t$ by $c_M(\cup_t\partial\Omega_t)$.
		For any $\varepsilon>0$, there is a ReLU network class $\cF=\cF_{\rm NN}(L,w,K,\kappa,M)$ with parameters
		\begin{align*}
			&L=O\left(\log \frac{1}{\varepsilon}\right), \ w=O(\varepsilon^{-\max\left\{\frac{d}{r},2(d-1)\right\}}), \ K=O\left(\varepsilon^{-\max\left\{\frac{d}{r},2(d-1)\right\}}\log \frac{1}{\varepsilon}\right),\\ 
			&\kappa=O(\varepsilon^{-\max\{2,\frac{d}{r},2(d-1)\}}), \  M=1,
		\end{align*}
		such that for any $\rho$ satisfying Assumption \ref{assum.rho}, and any piecewise $r$-H\"{o}lder function in the form of $f = \sum_{t=1}^T f_t\chi_{\Omega_t}$ in Example \ref{examplepieceholderinasl}  satisfying $\|f\|_{L^{\infty}([0,1]^d)}\leq 1$, $\max_t \|f_t\|_{\cH^r(\Omega_t^o)}\leq 1$ and Assumption \ref{assum.f}(iii), if the weight parameters of the network are properly chosen,
		the network yields a function  $\widetilde{f} \in \cF$  such that 
		\begin{align*}
			\|\widetilde{f}-f\|_{L^{2}(\rho)}^2\leq \varepsilon.
		\end{align*}
		The constant hidden in $O(\cdot)$ depends on $r,d, c_M(\cup_t\partial\Omega_t),C_\rho, R_p$.
	\end{exampleb}
	
	Example \ref{ex.pieceiwiseholdernd.approx} is a corollary of Theorem \ref{thm.approx} with  $f\in \calA^{s}_{\lceil r-1\rceil}$ for the $s$ given in \eqref{eqtildes}. 
	Assumption \ref{assum.f} (i) is not explicitly enforced in Example \ref{ex.pieceiwiseholdernd.approx} , but is implied from the condition of $\max_k\|f_k\|_{\cH^r(t_k,t_{k+1})}\leq 1$ by Example \ref{examplepieceholderinasl}.
	Example \ref{ex.pieceiwiseholdernd.approx} implies that to approximate a piecewise H\"{o}lder function  in Example \ref{examplepieceholderinasl}, the number of nonzero weight parameters is in the order of $\varepsilon^{-\max\left\{\frac{d}{r},2(d-1)\right\}}\log \frac{1}{\varepsilon}$. The discontinuity set in Example \ref{ex.pieceiwiseholdernd.approx} has upper Minkowski dimension $d-1$, which is a weak assumption without additional regularity assumption or low-dimensional structures.

	Neural network approximation theory for piecewise smooth functions has been considered in \cite{petersen2018optimal}. The setting in \cite{petersen2018optimal} is similar but different from that of Example \ref{ex.pieceiwiseholdernd.approx}. In \cite{petersen2018optimal}, the authors considered piecewise functions $f: [-1/2,1/2]^d \rightarrow \RR$, where the different \lq\lq smooth regions\rq\rq \ of $f$ are separated by $\cH^\beta$ hypersurfaces.  If $f$ is a piecewise $r$-H\"{o}lder function and the H\"{o}lder norm of $f$ on each piece is bounded, it is shown in \citet[Corollary 3.7]{petersen2018optimal} that such $f$ can be universally approximated by a ReLU network with at most $c\varepsilon^{-p(d-1)/\beta}$ weight parameters to guarantee an  $L^p$ approximation error $\varepsilon$. 
	It is further shown in \citet[Theorem 4.2]{petersen2018optimal} that, to achieve an $L^p$ approximation error $\varepsilon$, the optimal required number of  weight parameters is lower bounded in the order of $\varepsilon^{-{p(d-1)}/{\beta}}/\log \frac{1}{\varepsilon}$.
	Our result in Example \ref{ex.pieceiwiseholdernd.approx} is comparable to that of \cite{{petersen2018optimal}} when $\beta = 1$ and $p=2$. When the discontinuity hypersurface has higher order regularity, i.e. $\beta >1$, the network size in \cite{{petersen2018optimal}} is smaller/better than that in Example \ref{ex.pieceiwiseholdernd.approx} since the higher order smoothness of the discontinuity hypersurface is exploited in the approximation theory. However, Example \ref{ex.pieceiwiseholdernd.approx} imposes a weaker assumption on the discontinuity hypersurface. Example \ref{ex.pieceiwiseholdernd.approx} requires the discontinuity hypersurface to have upper Minkowski dimension $d-1$, while the  the discontinuity hypersurface in \citet[Definition 3.3]{petersen2018optimal} is the  graph of a $\cH^\beta$ function on $d-1$ coordinates.
	In this sense, Example \ref{ex.pieceiwiseholdernd.approx} can be applied to a wider class of piecewise H\"older functions.

	\subsubsection{Functions  irregular on a set of measure zero}
	
	Functions in the $\asj$ class can be irregular on a set of measure zero, as in Example \ref{examplemeasure0}. The approximation theory for functions  in Example \ref{examplemeasure0} is given below:
	\begin{exampleb}[Functions  irregular on a set of measure zero]
		\label{ex.holderirregular.approx}
		Let $r,d,C_{\rho}, R_p,\delta>0$ and $\Omega$ be a subset of $X$.  For any $\varepsilon>0$, there is a ReLU network class $\cF=\cF_{\rm NN}(L,w,K,\kappa,M)$ with parameters
		\begin{align*}
			&L=O\left(\log \frac{1}{\varepsilon}\right), \ w=O(\varepsilon^{-\frac{d}{r}}), \ K=O\left(\varepsilon^{-\frac{d}{r}}\log \frac{1}{\varepsilon}\right),\ 
			\kappa=O(\varepsilon^{-\max\{2,\frac{d}{r}\}}), \  M=1.
		\end{align*} 
		For any $\rho$ satisfying Assumption \ref{assum.rho} and $\rho(\Omega^{\complement})=0$, and for any $f: X \rightarrow \RR$ satisfying $\|f\|_{\cH^r(\Omega_{\delta})}\leq 1$ and Assumption \ref{assum.f}(iii), if the weight parameters of the network are properly chosen,
		the network class yields a function  $\widetilde{f} \in \cF$  such that 
		\begin{align*}
			\|\widetilde{f}-f\|_{L^{2}(\rho)}^2\leq \varepsilon.
		\end{align*}
		The constant hidden in $O(\cdot)$ depends on $r,d,C_\rho, R_p$.
	\end{exampleb}
	
	Example \ref{ex.holderirregular.approx} is a corollary of Theorem \ref{thm.approx} with  $f\in \calA^{r/d}_{\lceil r-1\rceil}$ and $|f|_{\calA^{r/d}_{\lceil r-1\rceil}} \le C(r,d,C_\rho)$ by Example \ref{examplemeasure0}. 
	Example \ref{ex.holderirregular.approx} shows that function irregularity on a set of measure zero does not affect the network size in  approximation theory.

	\subsubsection{H\"{o}lder functions  with distribution concentrated on a low-dimensional manifold}
	Theorem \ref{thm.approx} cannot be directly applied to  Example \ref{examplelowd}, since Assumption \ref{assum.rho} is violated when $\rho$ is  supported on a low-dimensional manifold. In literature, neural network approximation theory  has been established in \cite{chen2019efficient} for functions on a low-dimensional manifold, and in  \cite{nakada2020adaptive} for H\"older functions in $[0,1]^D$ while the support of measure has a low Minkowski dimension. 
	See Section \ref{sec.low-d.gene} for a detailed discussion about the generalization error.
	
	\subsection{Generalization Error}
	\label{subsecgeneralization}
	
	Theorem \ref{thm.approx} proves the existence of a neural network $\widetilde{f}$ to approximate $f$ with an arbitrary accuracy $\varepsilon$, but it does not give an explicit method to find the weight parameters of  $\widetilde{f}$. In practice, the weight parameters are learned from data through the empirical risk minimization. The generalization error of the empirical risk minimizer is analyzed in this subsection.

	Suppose the training data set is $\cS=\{\xb_i,y_i\}_{i=1}^n$ where the $\xb_i$'s are i.i.d. samples from $\rho$, and the $y_i$'s have the form
	\begin{align}
		y_i=f(\xb_i)+\xi_i,
		\label{eqregmodel}
	\end{align}
	where the $\xi_i$'s are i.i.d. noise, independently of the $\xb_i$'s. In practice, one estimates the function $f$ by minimizing the empirical mean squared risk
	\begin{align}
		\widehat{f}=\argmin_{f_{\rm NN}\in \cF} \frac{1}{n}\sum_{i=1}^n |f_{\rm NN}(\xb_i)-y_i|^2
		\label{eq.empirial.loss}
	\end{align}
	for some network class $\cF$. The squared generalization error of $\widehat{f}$ is
	$$ \EE_{\cS}\EE_{\xb\sim \rho} |\widehat{f}(\xb)-f(\xb)|^2,$$
	where the expectation $\EE_{\cS}$ is taken over the joint the distribution of training data $\cS$.

	To establish an upper bound of the squared generalization error, we make the following assumption of noise.
	\begin{assumption}\label{assum.general}
		Suppose  the noise $\xi$ is a sub-Gaussian random variable  with mean $0$ and variance proxy $\sigma^2$.
	\end{assumption}
	
	Our second main theorem in this paper gives a generalization error bound of $\widehat{f}$.

	\begin{theorem}[Generalization error] \label{thm.general}
		Let $\sigma, s,d,C_{\rho},R_p,R_{\cA},R>0$ and $\theta$ be a nonnegative integer. Suppose $\rho$ satisfies Assumption \ref{assum.rho}, $f$ satisfies Assumption \ref{assum.f}, and Assumption \ref{assum.general} holds for noise.
		The training data $\cS$ are sampled according to \eqref{eqregmodel}.
		If the network class $\cF=\cF_{\rm NN}(L,w,K,\kappa,M)$ is set with parameters 
		\begin{align*}
			L=O(\log n), \ w=O(n^{\frac{1}{1+2s}}), \ K=O\left(n^{\frac{1}{1+2s}}\log n\right),\ \kappa=O\left(n^{\max\left\{\frac{2s}{1+2s}, \frac{1}{1+2s}\right\}}\right),\ M=R,
		\end{align*}
		then the minimizer $\widehat{f}$ of (\ref{eq.empirial.loss})
		satisfies
		\begin{align*}
			\EE_{\cS}\EE_{\xb\sim \rho} |\widehat{f}(\xb)-f(\xb)|^2\leq Cn^{-\frac{2s}{2s+1}}\log^3n.
		\end{align*}
		The constant $C$ and the constant hidden in $O(\cdot)$ depend on $\sigma$ (the variance proxy of noise in Assumption \ref{assum.general}), $d$ (the  dimension of the domain for $f$), 
		$C_{\rho}$ (in Assumption \ref{assum.rho}), $\theta$ (the polynomial order), $s$, $R$, $R_p$  and $R_{\cA}$ (in Assumption \ref{assum.f}).
	\end{theorem}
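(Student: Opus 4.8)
The plan is to bound the mean squared generalization error by the classical bias--variance (oracle) decomposition for least-squares estimators, playing the approximation power of $\cF$ furnished by Theorem \ref{thm.approx} against its statistical complexity. Since $\widehat{f}$ minimizes the empirical risk over $\cF$, for every competitor $\bar f\in\cF$ the optimality gives $\frac1n\sum_{i=1}^n(\widehat{f}(\xb_i)-y_i)^2\le \frac1n\sum_{i=1}^n(\bar f(\xb_i)-y_i)^2$. Substituting $y_i=f(\xb_i)+\xi_i$, cancelling the common term $\frac1n\sum_i\xi_i^2$, and writing $\|\cdot\|_n$ for the empirical $L^2$ norm, I would obtain the basic inequality $\|\widehat{f}-f\|_n^2\le \|\bar f-f\|_n^2+\frac2n\sum_{i=1}^n\xi_i(\widehat{f}-\bar f)(\xb_i)$. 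Because $M=R$ and $\|f\|_{L^\infty}\le R$, every element of $\cF$ and the error $\widehat{f}-f$ are uniformly bounded by $2R$, so the concentration steps below are carried out over a bounded function class.

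Next I would pass from the empirical norm $\|\cdot\|_n$ to the population norm $\|\cdot\|_{L^2(\rho)}$ uniformly over $\cF$ and control the noise term; both steps rest on a metric-entropy bound for $\cF=\cF_{\rm NN}(L,w,K,\kappa,M)$. Using the standard estimate $\log\cN(\delta,\cF,\|\cdot\|_\infty)\lesssim KL\log(\kappa w/\delta)$ together with $\kappa=n^{O(1)}$, the log covering number is $\lesssim KL\log(n/\delta)$. A Bernstein inequality applied on a $\delta$-net of $\cF$ with a union bound yields, uniformly in $f_{\rm NN}\in\cF$ and with high probability, that $\|f_{\rm NN}-f\|_{L^2(\rho)}^2$ and $\|f_{\rm NN}-f\|_n^2$ agree up to a term of order $\frac{\log\cN}{n}$ plus a fixed fraction of $\|f_{\rm NN}-f\|_{L^2(\rho)}^2$. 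For the noise cross-term I would truncate $\xi$ at level $\sim\sigma\sqrt{\log n}$, absorbing the negligible tail through the sub-Gaussian Assumption \ref{assum.general}, and then bound $\frac2n\sum_i\xi_i(\widehat{f}-\bar f)(\xb_i)$ by a chaining/peeling argument, obtaining a bound of order $\sqrt{\frac{(\log\cN)\log n}{n}}\,\|\widehat{f}-\bar f\|_{L^2(\rho)}+\frac{(\log\cN)\log n}{n}$, whose linear factor is re-absorbed into the left-hand side by Young's inequality.

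Taking expectation over $\cS$ and collecting terms, I would arrive at $\EE_{\cS}\|\widehat{f}-f\|_{L^2(\rho)}^2\lesssim \inf_{\bar f\in\cF}\|\bar f-f\|_{L^2(\rho)}^2+\frac{KL\log n}{n}$. The infimum (the bias) is controlled by Theorem \ref{thm.approx}, which for the stated architecture produces some $\bar f\in\cF$ whose squared approximation error matches the target rate, while the variance term, after inserting $K\asymp n^{1/(2s+1)}\log n$ and $L\asymp\log n$, is $\lesssim n^{-2s/(2s+1)}\log^3 n$; the architecture in the statement is precisely the one balancing bias and variance at this order, giving the claimed bound. I expect the main obstacle to be the stochastic term rather than the bias: one must produce an entropy bound essentially linear in the sparsity $K$ in spite of the large weight magnitude $\kappa=n^{O(1)}$ (so that $\log\kappa$ costs only a single $\log n$), and simultaneously tame the unbounded sub-Gaussian noise through truncation so that the final estimate forfeits no more than the $\log^3 n$ factor. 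Getting the variance proxy $\sigma^2$ and the truncation level to interact cleanly in the cross-term, without inflating the rate, is the delicate part.
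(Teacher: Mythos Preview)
Your proposal is correct and follows the same overall strategy as the paper: derive an oracle inequality trading the approximation error supplied by Theorem~\ref{thm.approx} against the statistical complexity of $\cF$ measured through its $L^\infty$ covering number, then balance the two at $\varepsilon=n^{-s/(2s+1)}$. The technical execution differs in two places. First, rather than starting from the basic inequality and then separately transferring $\|\cdot\|_n$ to $\|\cdot\|_{L^2(\rho)}$, the paper writes the population risk as $T_1+T_2$ with $T_1=2\,\EE_{\cS}\|\widehat f-f\|_n^2$ and $T_2=\EE_{\cS}\|\widehat f-f\|_{L^2(\rho)}^2-2\,\EE_{\cS}\|\widehat f-f\|_n^2$; this device (in the spirit of \cite{gyorfi2002distribution}) folds the empirical-to-population step into a single concentration lemma. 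Second, the paper does \emph{not} truncate the sub-Gaussian noise: it controls the cross term $\EE_{\cS}[\tfrac1n\sum_i\xi_i\widehat f(\xb_i)]$ directly by passing to a $\delta$-net, bounding $\EE[\max_j|z_j|^2]\lesssim\sigma^2\log\cN$ for the normalized increments, and solving the resulting quadratic inequality in $\sqrt{\EE_{\cS}\|\widehat f-f\|_n^2}$. Your truncation-plus-chaining route would also work and is equally standard; the paper's moment approach simply spares you from tracking the truncation tail and the associated $\sqrt{\log n}$ factor you flag as delicate. Both routes feed into the same entropy estimate (Lemma~\ref{lem.covering}), which with the stated architecture gives $\log\cN\lesssim n^{1/(2s+1)}\log^3 n$ and hence the identical final rate $n^{-2s/(2s+1)}\log^3 n$.
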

	Theorem \ref{thm.general} is proved in Section \ref{proof.thm.general}. Theorem \ref{thm.general} gives rise to a generalization error guarantee of deep neural networks for functions in the $\asj$ class. We will provide case studies in the following subsection.
	
	\subsection{Case Study of the Generalization Error}
	\label{subseccaseg}
	
	In this subsection, we apply Theorem \ref{thm.general} to the examples in Subsection \ref{secexample}, and derive the squared generalization error for each example. In the following case studies, we assume  Assumptions \ref{assum.rho}, \ref{assum.f} (iii)  and \ref{assum.general}, but not Assumption \ref{assum.f} (i) and (ii). 
	In each case, we have shown in Subsection \ref{secexample} that Assumption \ref{assum.f} (i) holds: $f\in \asj$ with a proper $\theta$ and the regularity index $s$ depends on each specific case.
	
	\subsubsection{H\"{o}lder functions} 
	Consider H\"{o}lder functions  in Example \ref{exampleholderinasl}  such that $\cH^r\subset \calA^{r/d}_{\lceil r-1\rceil}$. Applying Theorem \ref{thm.general} gives rise to the following generalization error bound for H\"{o}lder functions.

	\begin{examplec}[H\"older functions]
		\label{ex.holder.gene}
		Let $\sigma\geq0, r,d,C_{\rho},R_p>0$. Suppose $\rho$ satisfies Assumption \ref{assum.rho}, $f\in \cH^r(X)$ satisfies $\|f\|_{\cH^r(X)}\leq1$ and Assumption \ref{assum.f}(iii), and Assumptions \ref{assum.general} hold. The training data $\cS$ are sampled according to \eqref{eqregmodel}. If we set the network class $\cF_{\rm NN}(L,p,K,\kappa,M)$ as 
		\begin{align*}
			L=O(\log n), \ p=O(n^{\frac{d}{2r+d}}), \ K=O\left(n^{\frac{d}{2r+d}}\log n\right),\ \kappa=O\left(n^{\max\left\{\frac{2r}{2r+d}, \frac{d}{2r+d}\right\}}\right),\ M=1.
		\end{align*}
		Then the empirical minimizer $\widehat{f}$ of (\ref{eq.empirial.loss})
		satisfies
		\begin{align*}
			\EE_{\cS}\EE_{\xb\sim \rho} |\widehat{f}(\xb)-f(\xb)|^2\leq Cn^{-\frac{2r}{2r+d}}\log^3n.
		\end{align*}
		The constant $C$ and the constant hidden in $O(\cdot)$ depend on $\sigma,r,d,C_\rho, R_p$.
	\end{examplec}
	Example \ref{ex.holder.gene} is a corollary of Theorem \ref{thm.general} with  $s=r/d$. Our upper bound matches the rate in \citet[Theorem 1]{schmidt2017nonparametric} and is optimal up to a logarithmic factor in comparison with the minimax error given in \citet[Theorem 3.2]{gyorfi2002distribution}.

	\subsubsection{Piecewise H\"older functions in 1D}
	Considering 1D piecewise H\"older functions in Example \ref{examplepieceholderinasl1d}, we have the following generalization error bound:
	\begin{examplec}[Piecewise H\"older functions in 1D]
		\label{ex.pieceiwiseholder1d.gene}
		Let $\sigma\geq0, r,d,C_{\rho},R_p>0$. Suppose $\rho$ satisfies Assumption \ref{assum.rho}. 
		Let  $f$ be an 1D piecewise $r$-H\"{o}lder function in the form of $f = \sum_{k=1}^{K+1} f_k\chi_{[t_{k-1},t_{k})}$ in Example \ref{examplepieceholderinasl1d} satisfying  $\|f\|_{L^{\infty}([0,1])}\leq 1$, $\max_k\|f_k\|_{\cH^r(t_k,t_{k+1})}\leq 1$ and Assumption \ref{assum.f}(iii). Suppose Assumption \ref{assum.general} holds and the training data $\cS$ are sampled according to \eqref{eqregmodel}. Set the network class $\cF_{\rm NN}(L,p,K,\kappa,M)$ as 
		\begin{align}
			L=O(\log n), \ p=O(n^{\frac{1}{1+2r}}), \ K=O(n^{\frac{1}{1+2r}}\log n),\ \kappa=O(n^{\max\{\frac{2r}{1+2r}, \frac{1}{1+2r}\}}),\ M=1.
		\end{align}
		Then the empirical minimizer $\widehat{f}$ of (\ref{eq.empirial.loss}) satisfies
		\begin{align}
			\EE_{\cS}\EE_{\xb\sim \rho} |\widehat{f}(\xb)-f(\xb)|^2\leq Cn^{-\frac{2r}{2r+1}}\log^3n.
		\end{align}
		The constant $C$ and the constant hidden in $O(\cdot)$ depend on $\sigma,r, C_\rho, R_p,K$.
	\end{examplec}
	
	Example \ref{ex.pieceiwiseholder1d.gene} shows that a finite number of discontinuities in 1D does not affect the rate of convergence of the generalization error.
	
	\subsubsection{Piecewise H\"older functions in multi-dimensions}
	\label{sec.pieceholdernd.gene}
	Considering  piecewise H\"older functions in multi-dimensions  in Example \ref{examplepieceholderinasl}, we have the following generalization error bound:
	\begin{examplec}[Piecewise H\"older functions in multi-dimensions]
		\label{ex.pieceiwiseholdernd.gene}
		Let $\sigma\geq0, r,d,C_{\rho},R_p>0$. Let $\{\Omega_t\}_{t=1}^T$ be subsets of $[0,1]^d$ such that $\cup_{t=1}^T \Omega_t = [0,1]^d$ and the $\Omega_t$'s only overlap at their boundaries.  Each $\Omega_t$ is a connected subset of $[0,1]^d$ and the union of their boundaries  $\cup_t\partial\Omega_t $ has upper Minkowski dimension $d-1$. Denote the Minkowski dimension constant of $\cup_{t=1}^T \Omega_t$ by $c_M(\cup_t\partial\Omega_t)$. Suppose $\rho$ satisfies Assumption \ref{assum.rho}. Let $f$ be a piecewise $r$-H\"{o}lder function in the form of $f = \sum_{t=1}^T f_t\chi_{\Omega_t}$ in Example \ref{examplepieceholderinasl} satisfying $\|f\|_{L^{\infty}([0,1]^d)}\leq 1$, $\max_t \|f_t\|_{\cH^r(\Omega_t^o)}\leq 1$ and Assumption \ref{assum.f}(iii). Suppose Assumption \ref{assum.general} holds and the training data $\cS$ are sampled according to \eqref{eqregmodel}.
		Set the network class $\cF_{\rm NN}(L,p,K,\kappa,M)$ as 
		\begin{align}
			&L=O(\log n), \ p=O(n^{\max\{\frac{d}{2r+d},\frac{d-1}{d}\}}), \ K=O(n^{\max\{\frac{d}{2r+d},\frac{d-1}{d}\}}\log n),\nonumber\\
			&\kappa=O(n^{\max\left\{\min\{\frac{2r}{2r+d},\frac{1}{d}\}, \max\{\frac{d}{2r+d},\frac{d-1}{d}\}\right\}}),\ M=1.
		\end{align}
		Then the empirical minimizer $\widehat{f}$ of (\ref{eq.empirial.loss}) satisfies
		\begin{align}
			\EE_{\cS}\EE_{\xb\sim \rho} |\widehat{f}(\xb)-f(\xb)|^2\leq Cn^{-\min\{\frac{2r}{2r+d},\frac{1}{d}\}}\log^3n.
		\end{align}
		The constant $C$ and the constant hidden in $O(\cdot)$ depend on $\sigma,r,d,c_M(\cup_t\partial\Omega_t),C_\rho, R_p$.
	\end{examplec}
	Example \ref{ex.pieceiwiseholdernd.gene} shows that the convergence rate of the generalization error has a phase transition. When $\frac{r}{d}\leq \frac{1}{2(d-1)}$, the generalization error is dominated by that in the interior of the $\Omega_t$'s, so that the squared generalization error converges in the order of $n^{-\frac{2r}{2r+d}}$. When $\frac{r}{d}> \frac{1}{2(d-1)}$, the generalization error is dominated by that  around the boundary of the $\Omega_t$'s, so that the squared generalization error converges in the order of $n^{-\frac{1}{d}}$. As a result, the overall rate of convergence is $n^{-\min\{\frac{2r}{2r+d},\frac{1}{d}\}}$ up to log factors.

	Under the setting of \citet{petersen2018optimal} (discussed in  Section \ref{sec.pieceholdernd.approx} where  different \lq\lq smooth regions\rq\rq \ of $f$ are separated by $\cH^\beta$ hypersurfaces), the generalization error for estimating piecewise $r$-H\"{o}lder function by ReLU network is proved in the order of $\max(n^{-\frac{2r}{2r+d}},n^{-\frac{\beta}{\beta+d-1}})$ in \citet{imaizumi2019deep}.
	When $\beta=1$, our rate matches the result in \citet{imaizumi2019deep}. When $\beta>1$, the smoothness of boundaries is utilized in \citet{imaizumi2019deep}, leading to a better result than ours. 
	Nevertheless, the setting considered in this paper assumes a weaker assumption on the discontinuous boundaries, which are not required to be hypersurfaces.

	\subsubsection{Functions  irregular on a set of measure zero}
	
	Functions in the $\asj$ class can be irregular on a set of measure zero, as in Example \ref{examplemeasure0}. The generalization error for functions  in Example \ref{examplemeasure0} is given below:
	\begin{examplec}[Functions  irregular on a set of measure zero]
		\label{ex.holderirregular.gene}
		Let $\sigma\geq0, r,d,C_{\rho},R_p>0$.
		Let $\Omega$ be a subset of $X$ and $\delta>0$. Suppose $\rho$ satisfies Assumption \ref{assum.rho}, $f$ satisfies $\|f\|_{\cH^r(\Omega_{\delta})}\leq 1$ and Assumption \ref{assum.f}(iii), and Assumption \ref{assum.general} holds.
		Set the network class $\cF_{\rm NN}(L,w,K,\kappa,M)$ as 
		\begin{align*}
			L=O(\log n), \ p=O(n^{\frac{d}{2r+d}}), \ K=O\left(n^{\frac{d}{2r+d}}\log n\right),\ \kappa=O\left(n^{\max\left\{\frac{2r}{2r+d}, \frac{d}{2r+d}\right\}}\right),\ M=1.
		\end{align*}
		Then the empirical minimizer $\widehat{f}$ of (\ref{eq.empirial.loss})
		satisfies
		\begin{align*}
			\EE_{\cS}\EE_{\xb\sim \rho} |\widehat{f}(\xb)-f(\xb)|^2\leq Cn^{-\frac{2r}{2r+d}}\log^3n.
		\end{align*}
		The constant $C$ and the constant hidden in $O(\cdot)$ depend on $\sigma,r,d,C_\rho, R_p$.
	\end{examplec}
	
	Example \ref{ex.holderirregular.gene} shows that function irregularity on a set of measure zero does not affect the rate of convergence of the generalization error. Deep neural networks are adaptive to data distributions as well.
	
	\subsubsection{H\"{o}lder functions  with distribution concentrated on a low-dimensional manifold}
	\label{sec.low-d.gene}
	When the measure $\rho$ is concentrated on a low-dimensional manifold as considered in Example \ref{examplelowd}, Theorem \ref{thm.general} cannot be directly applied since Assumption \ref{assum.rho} is not satisfied. Instead, a more dedicated network structure can be designed to develop approximation and generalization error analysis. The setting of Example \ref{examplelowd} has been studied in \cite{nakada2020adaptive}, which assumes that the $\xb_i$'s are sampled from a measure supported on a set with an upper Minkowski dimension $d_{\rm in}$. If the target function $f$ is an $r-$H\"{o}lder function  on $[0,1]^d$ and if the network structure is properly set, the generalization error is in the order of $n^{-\frac{2r}{2r+d_{\rm in}}}$ up to a logarithmic factor \citep{nakada2020adaptive}. The result in \cite{nakada2020adaptive} can be applied to the setting of Example \ref{examplelowd}, since a $d_{\rm in}$ dimensional Riemannian manifold has the Minkowski dimension $d_{\rm in}$. 
	
	Another related setting is that $f$ is a $r$-H\"{o}lder function on a $d_{\rm in}$ dimensional Riemannian manifold embedded in $[0,1]^d$. This setting has been studied in \cite{chen2019efficient} for approximation theory and in \cite{chen2019nonparametric} for generalization theory by ReLU networks. In this setting, the squared generalization error converges in the order of $n^{-\frac{2r}{2r+d_{\rm in}}}$ up to a logarithmic factor.
	\section{Numerical experiments}
	\label{sec.numerical}
	In this section, we perform numerical experiments on 1D piecewise smooth functions, which fit in Example \ref{examplepieceholderinasl1d}.
	The following functions are included in our experiments: 
	\begin{itemize}
		
		\item Function with 1 discontinuity point, 
		\begin{equation*}
			f(x) =  
			\begin{cases}
				\sin \left(2\pi x\right)+1 & \text{ when } 0\leq x<\frac{1}{2}, \\ 
				\sin \left(2\pi x\right)-1 & \text{ when } \frac{1}{2}\leq x<1.
			\end{cases}
		\end{equation*}
		
		\item Function with 3 discontinuity points,
		\begin{equation*}
			f(x) = 
			\begin{cases}
				\sin \left(2\pi x \right) -1 & \text{ when } 0\leq x<\frac{1}{4}, \\ 
				\sin \left(2\pi x \right)  +1 & \text{ when } \frac{1}{4}\leq x<\frac{1}{2}, \\ 
				\sin \left(2\pi x \right)  -1 & \text{ when } \frac{1}{2}\leq x<\frac{3}{4}, \\ 
				\sin \left(2\pi x \right)  +1 & \text{ when } \frac{3}{4}\leq x<1.
			\end{cases}
		\end{equation*}
		
		\item Function with 5 discontinuity points,
		\begin{equation*}
			f(x) = 
			\begin{cases}
				\sin \left(2\pi x\right)-1 & \text{ when } 0\leq x<\frac{1}{6}, \\ 
				\sin \left(2\pi x\right) & \text{ when } \frac{1}{6}\leq x<\frac{1}{3}, \\ 
				\sin \left(2\pi x\right)+1 & \text{ when } \frac{1}{3}\leq x<\frac{1}{2}, \\ 
				\sin \left(2\pi x\right)-1 & \text{ when } \frac{1}{2}\leq x<1,\\
				\sin \left(2\pi x\right) & \text{ when } \frac{2}{3}\leq x<\frac{2}{3}, \\  
				\sin \left(2\pi x\right)+1 & \text{ when } \frac{5}{6}\leq x<1.
			\end{cases}
		\end{equation*}
		\item Function with 7 discontinuity points,
		\begin{equation*}
			f(x) =
			\begin{cases}
				\sin \left(2\pi x\right)-1 & \text{ when } 0\leq x<\frac{1}{8}, \\ 
				\sin \left(2\pi x\right)-\frac{1}{3} & \text{ when } \frac{1}{8}\leq x<\frac{1}{4}, \\ 
				\sin \left(2\pi x\right)+\frac{1}{3} & \text{ when } \frac{1}{4}\leq x<\frac{3}{8}, \\ 
				\sin \left(2\pi x\right)+1 & \text{ when } \frac{3}{8}\leq x<\frac{1}{2},\\
				\sin \left(2\pi x\right)-1 & \text{ when } \frac{1}{2}\leq x<\frac{5}{8} , \\ 
				\sin \left(2\pi x\right)-\frac{1}{3} & \text{ when } \frac{5}{8}\leq x<\frac{3}{4}, \\ 
				\sin \left(2\pi x\right)+\frac{1}{3} & \text{ when } \frac{3}{4}\leq x<\frac{7}{8}, \\ 
				\sin \left(2\pi x\right)+1 & \text{ when } \frac{7}{8}\leq x<1.
			\end{cases}
		\end{equation*}
	\end{itemize}

	\begin{figure}[t!]
		\centering
		\subfigure{\includegraphics[width=0.26\textwidth]{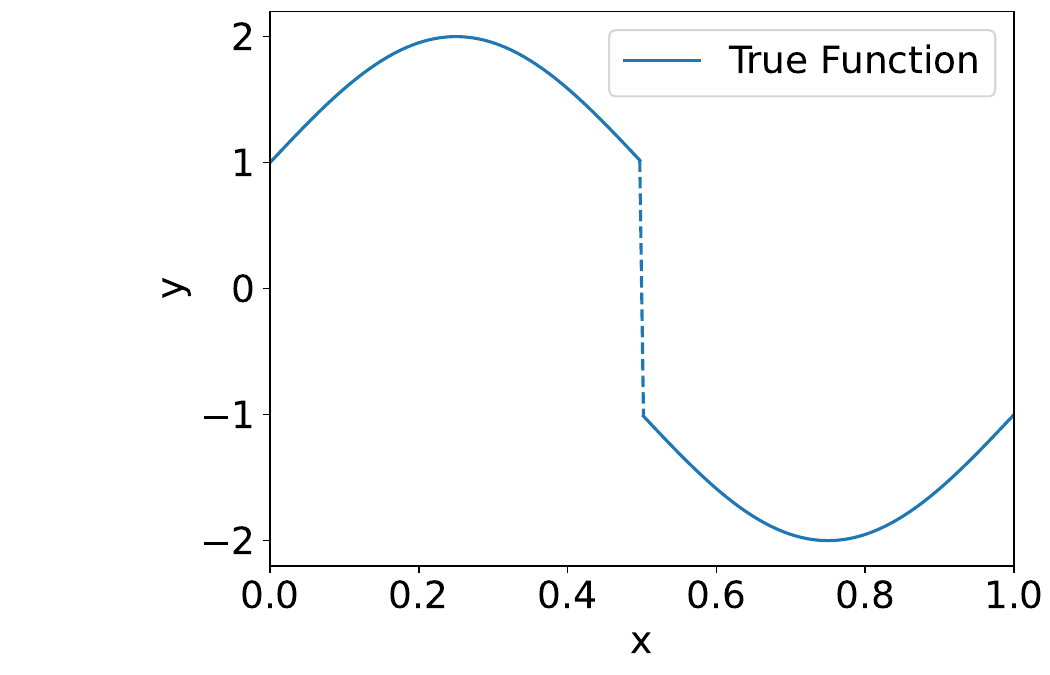}}
		\hspace{-0.1cm}
		\subfigure{\includegraphics[width=0.26\textwidth]{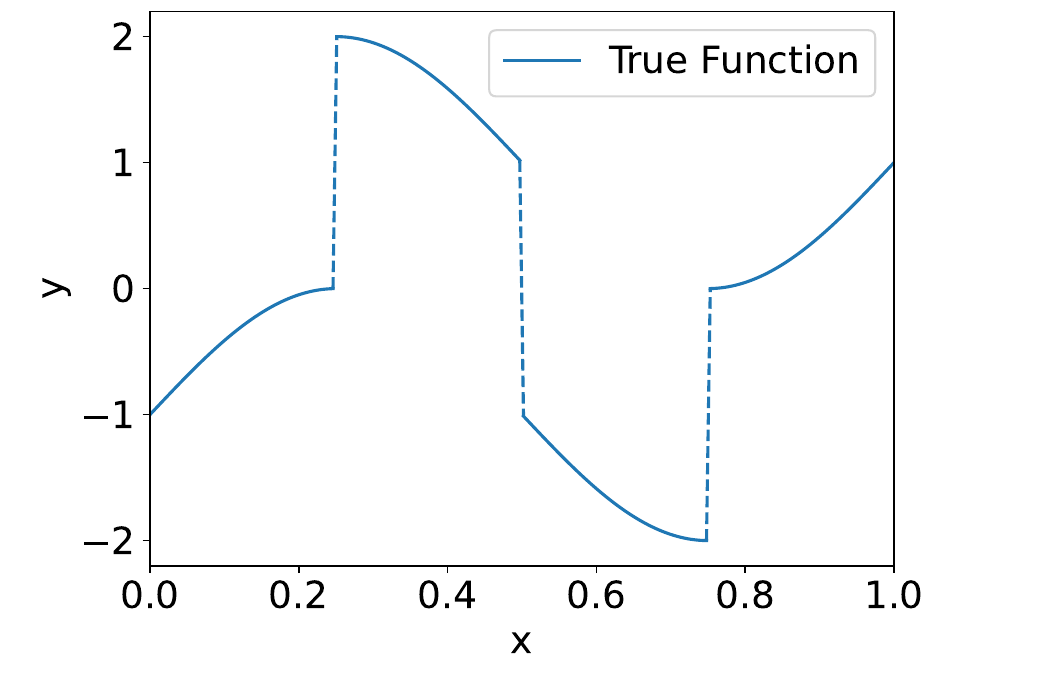}}
		\hspace{-1.2cm}
		\subfigure{\includegraphics[width=0.26\textwidth]{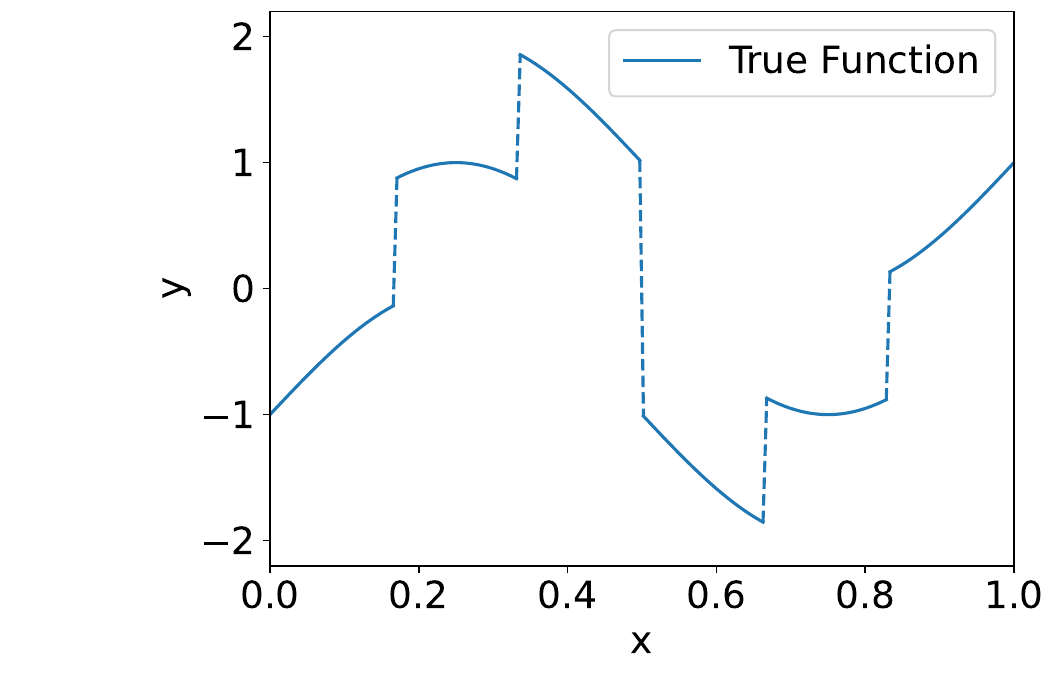}}
		\hspace{-0.1cm}
		\subfigure{\includegraphics[width=0.26\textwidth]{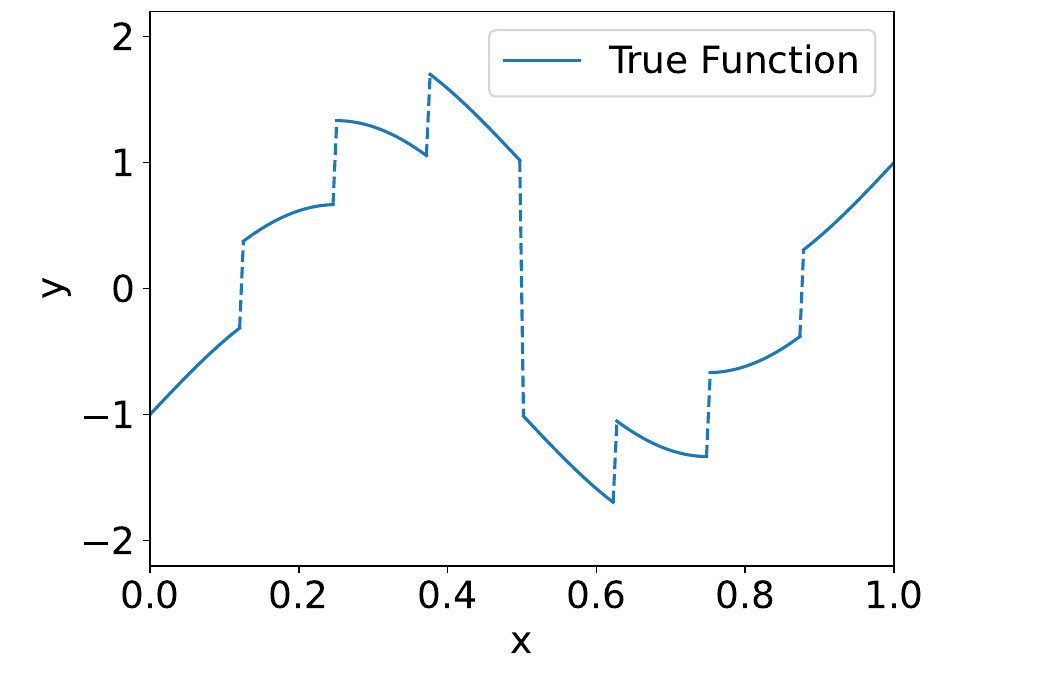}}
		\caption{Functions with different numbers of discontinuity points.}
		\label{fig:TargetFunction}
	\end{figure}

	\begin{figure}[t!]
		\centering    \subfigure[$\sigma = 0, n=16$]{\includegraphics[width=0.3\textwidth]{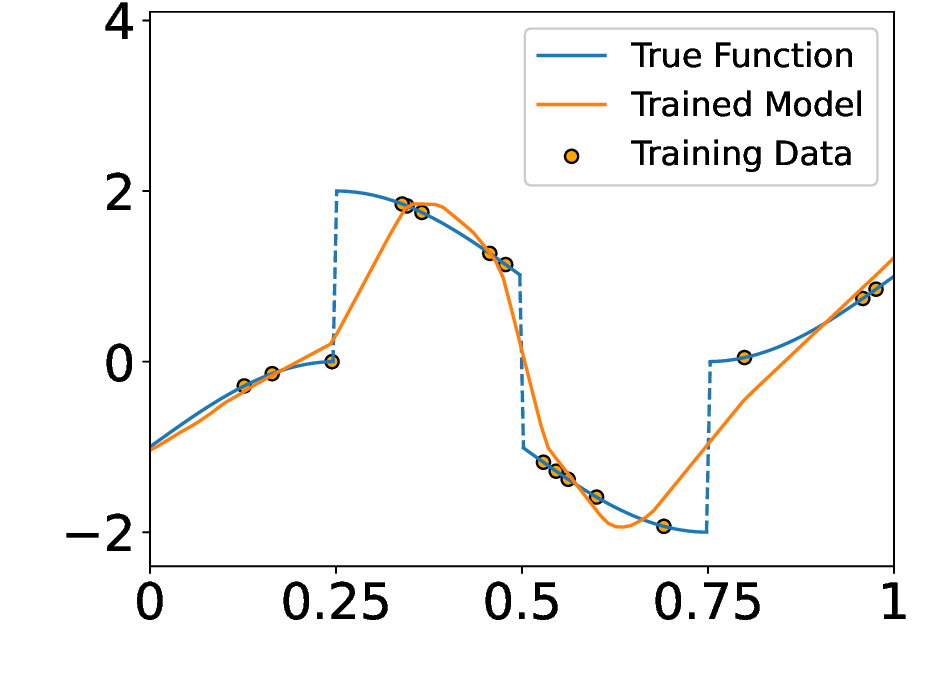}}
		\subfigure[$\sigma = 0, n=64$]{\includegraphics[width=0.3\textwidth]{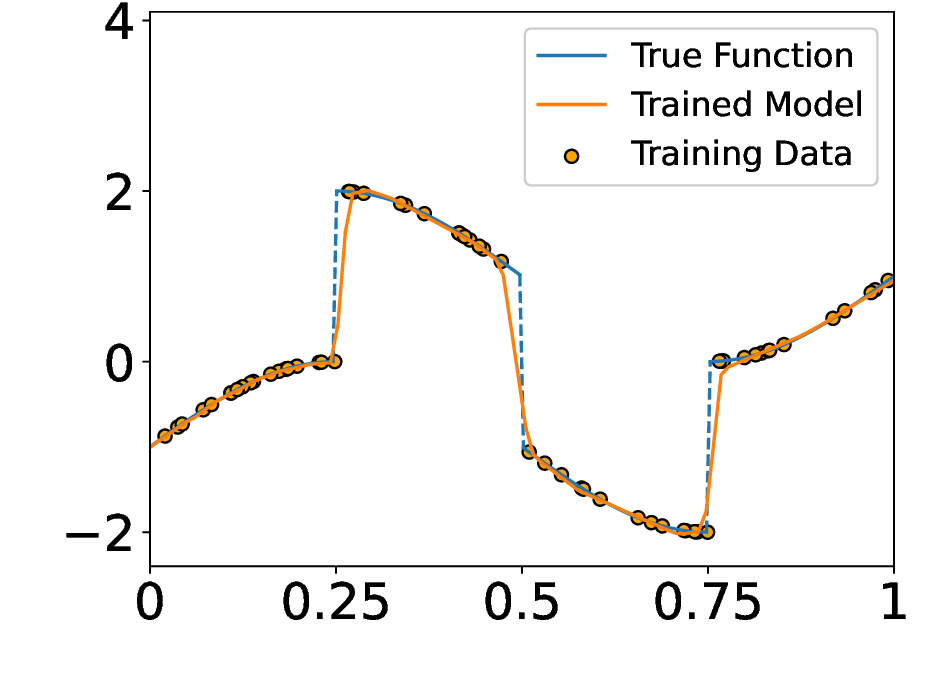}}
		\subfigure[$\sigma = 0, n=256$]{\includegraphics[width=0.3\textwidth]{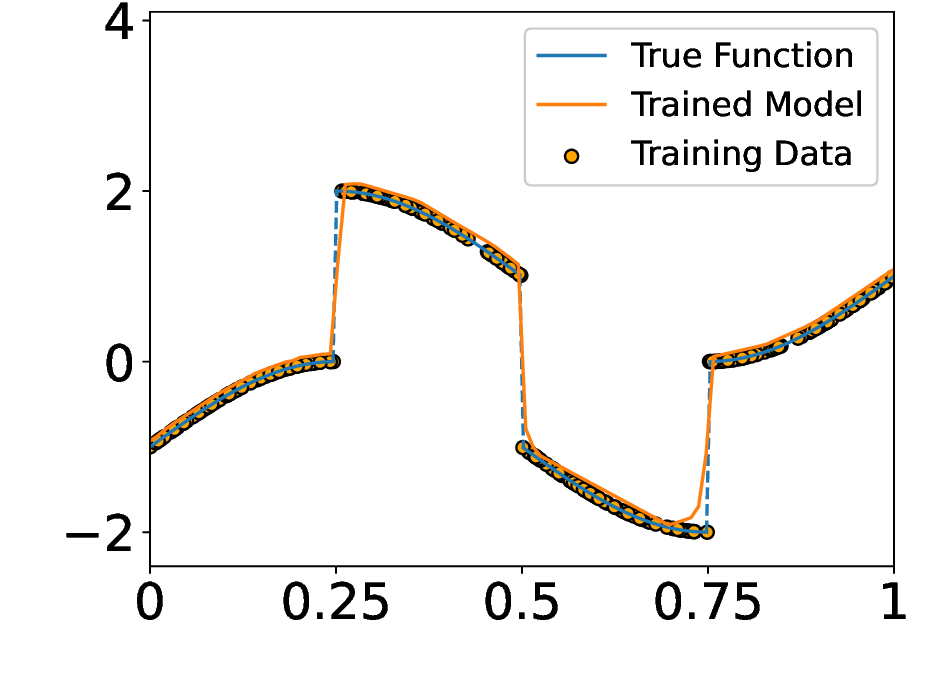}}
		\\
		\subfigure[$\sigma = 0.1, n=16$]{\includegraphics[width=0.3\textwidth]{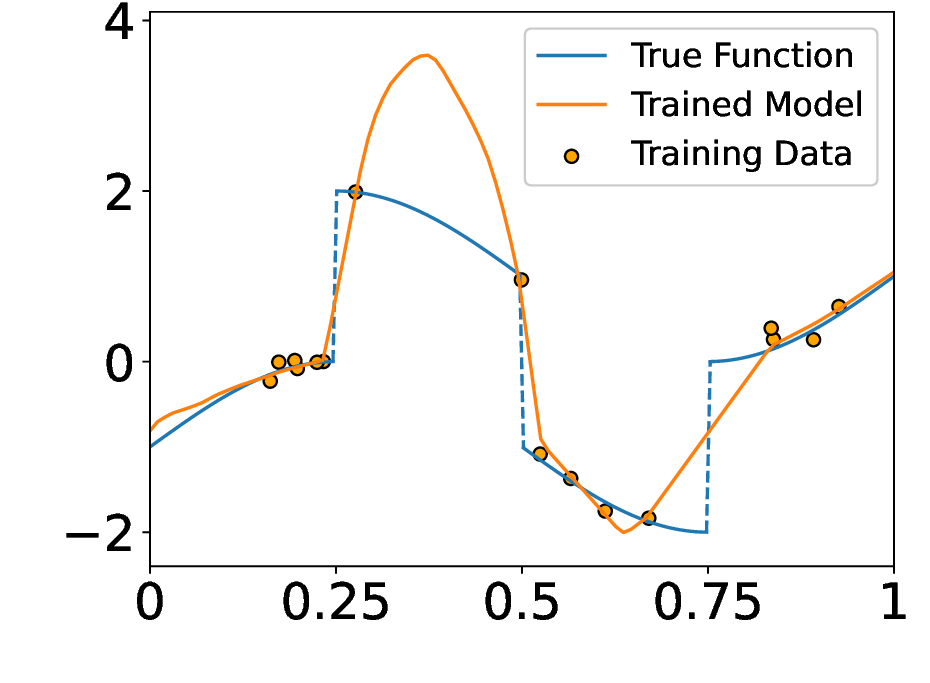}}
		\subfigure[$\sigma = 0.1, n=64$]{\includegraphics[width=0.3\textwidth]{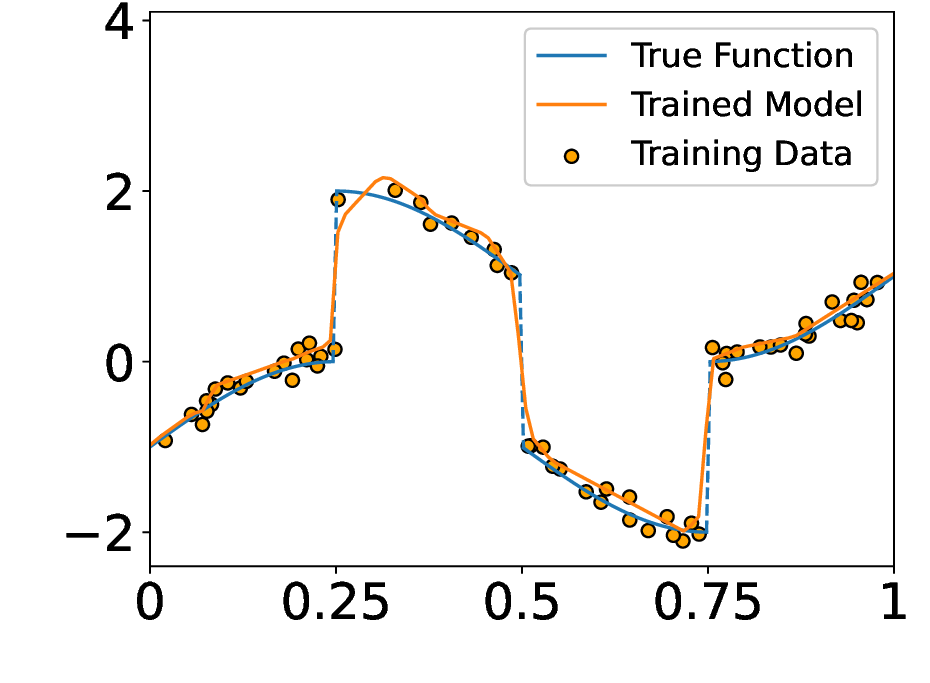}}
		\subfigure[$\sigma = 0.1, n=256$]{\includegraphics[width=0.3\textwidth]{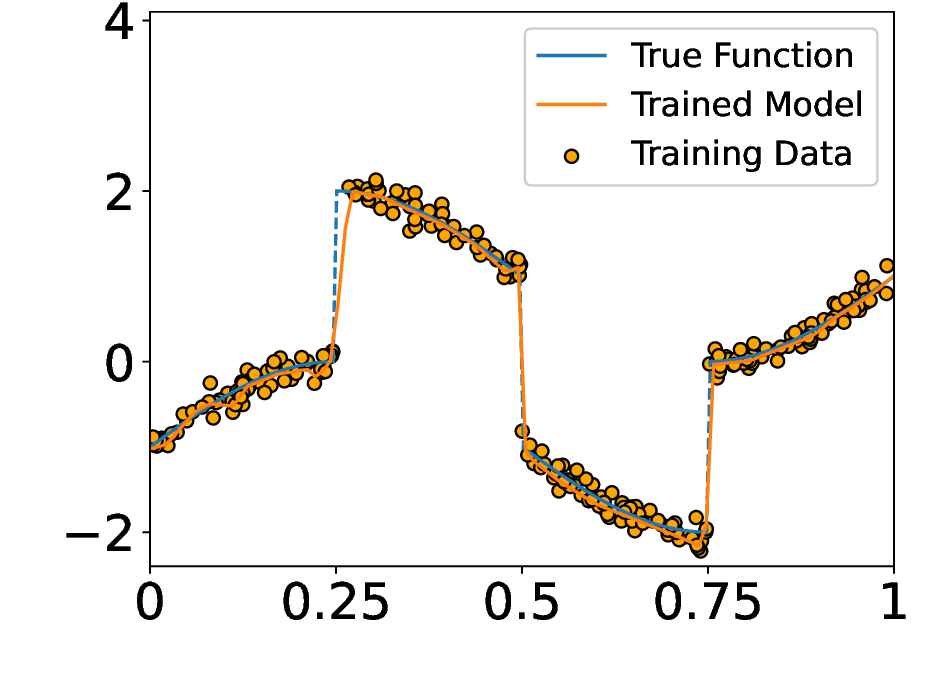}}
		\\
		\subfigure[$\sigma = 0.3, n=16$]{\includegraphics[width=0.3\textwidth]{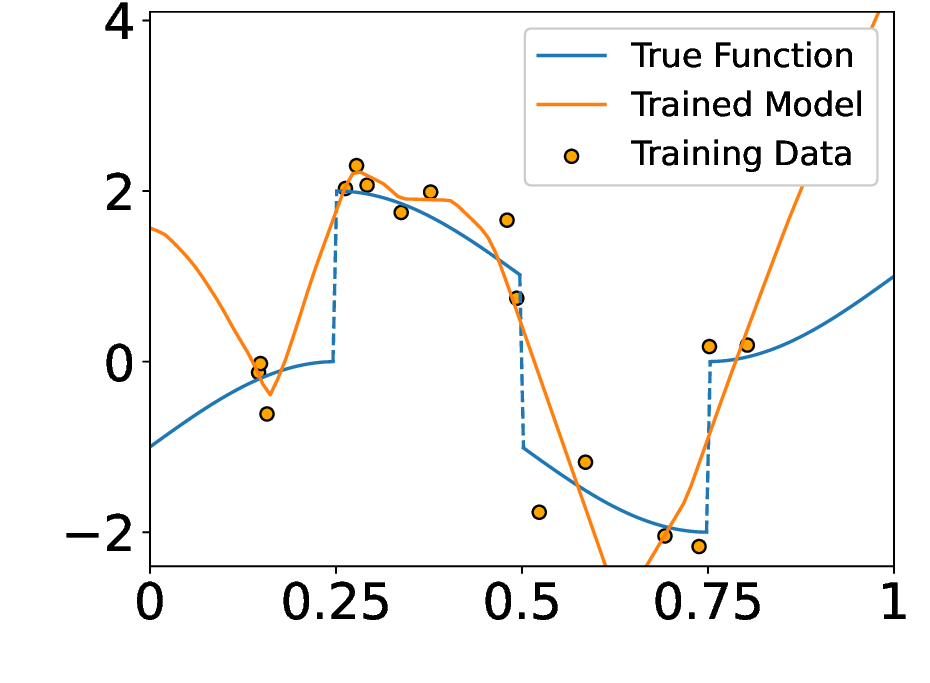}}
		\subfigure[$\sigma = 0.3, n=64$]{\includegraphics[width=0.3\textwidth]{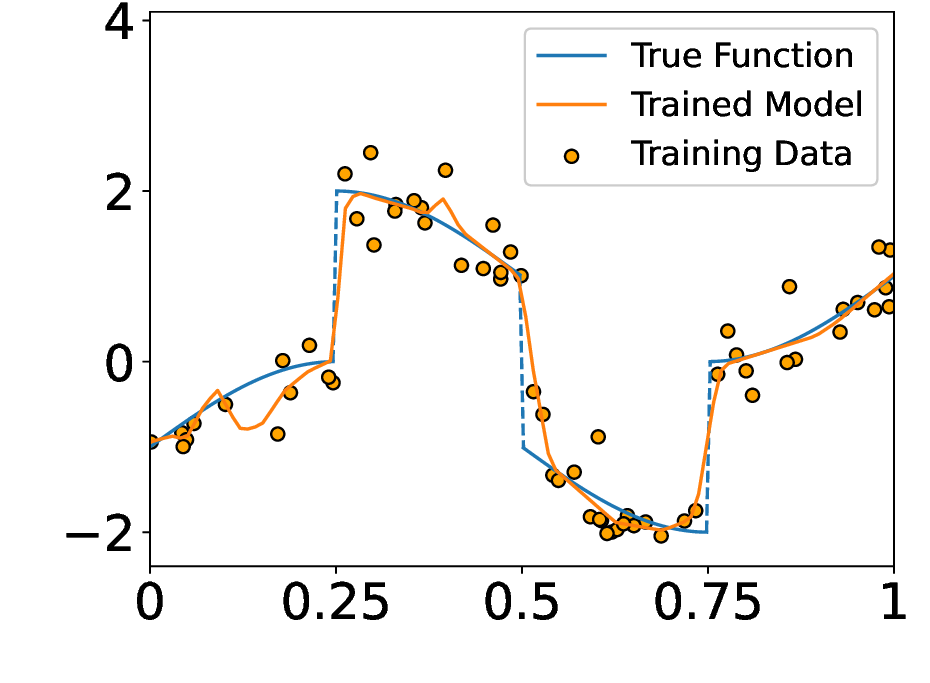}}
		\subfigure[$\sigma = 0.3, n=256$]{\includegraphics[width=0.3\textwidth]{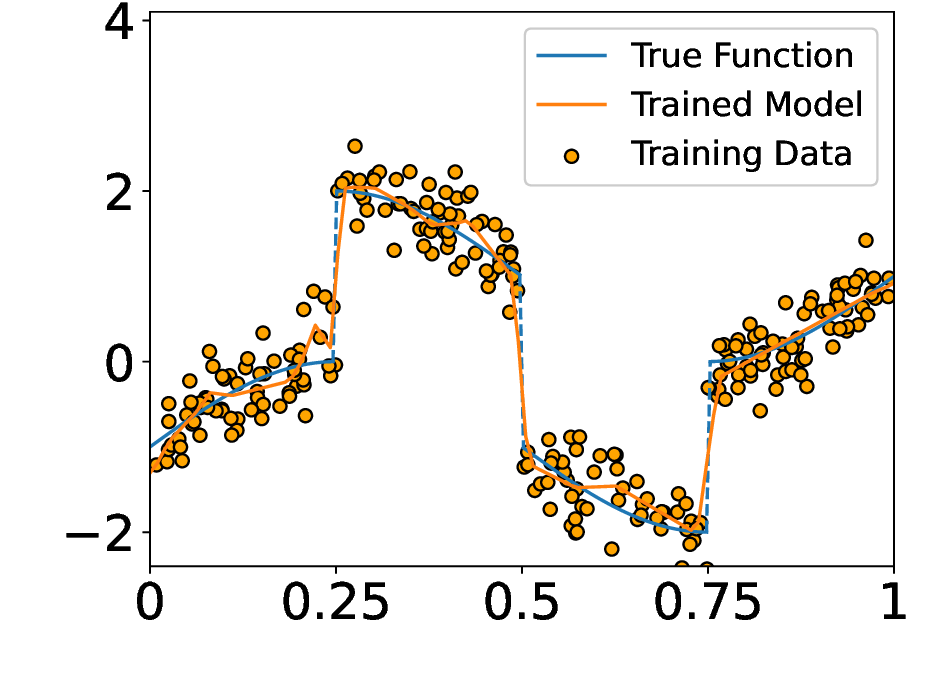}}
		\\
		\subfigure[$\sigma = 0.5, n=16$]{\includegraphics[width=0.3\textwidth]{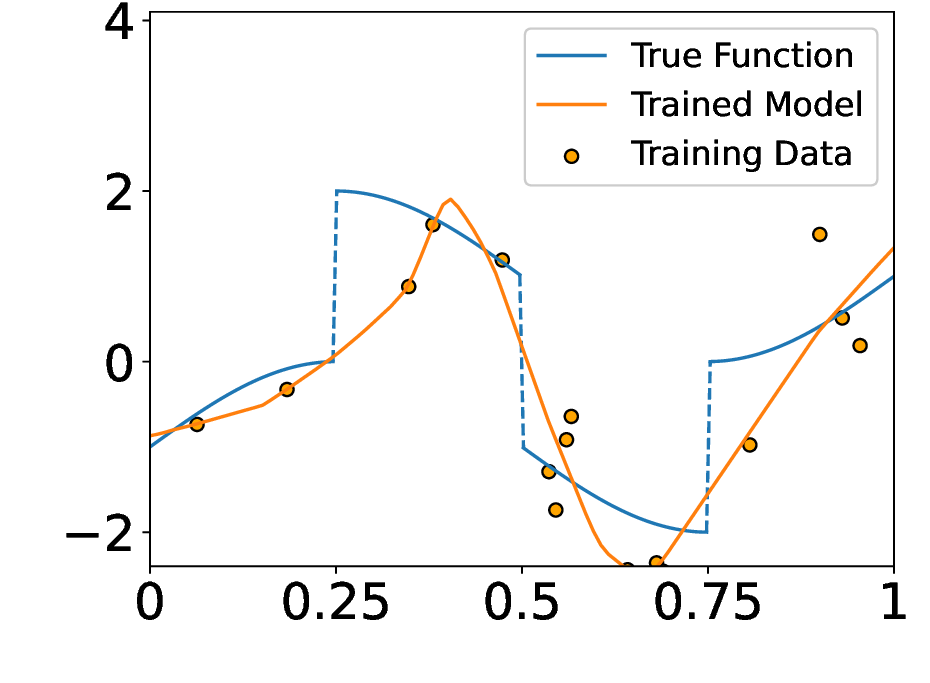}}
		\subfigure[$\sigma = 0.5, n=64$]{\includegraphics[width=0.3\textwidth]{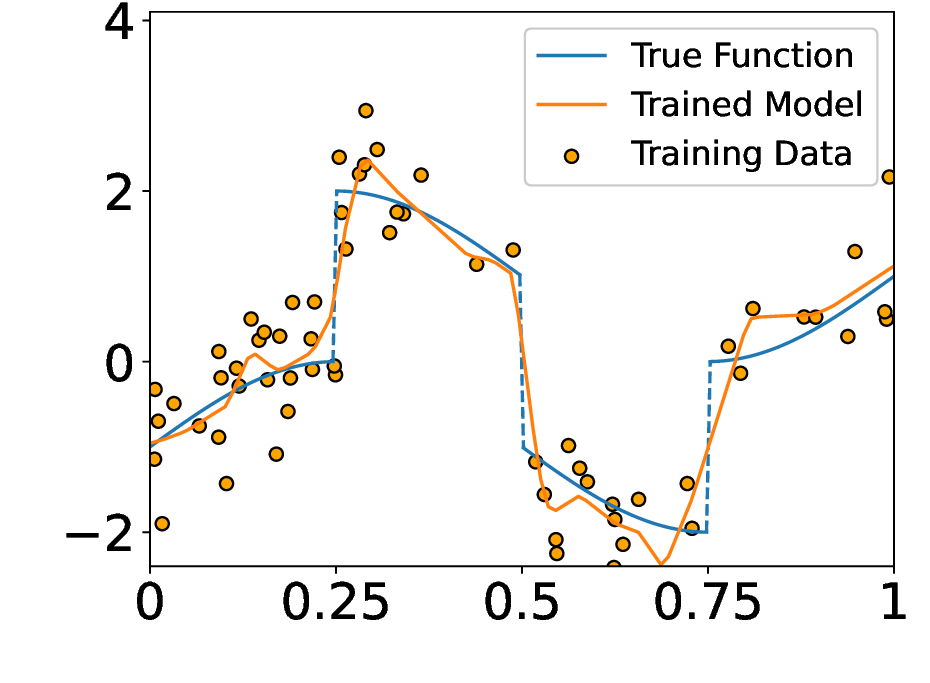}}
		\subfigure[$\sigma = 0.5, n=256$]{\includegraphics[width=0.3\textwidth]{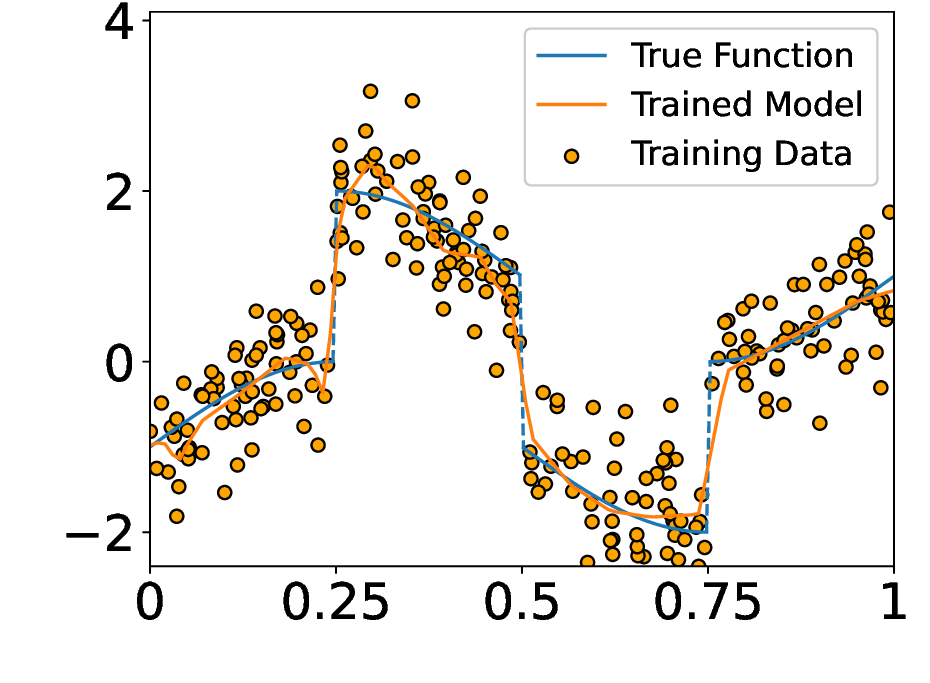}}
		\caption{Trained model with $n_{\rm train} =  16$ (1st column), $ 64$ (2nd column), $256$ (3rd column) when $\sigma = 0$ (1st row), $0.1$ (2nd row), $0.3$ (3rd row), and $0.5$ (4th row).}
		\label{fig:trained_model}
	\end{figure}

	\begin{figure}[t!]
		\centering
		\subfigure[$\sigma = 0$]{\includegraphics[width=0.44\textwidth]{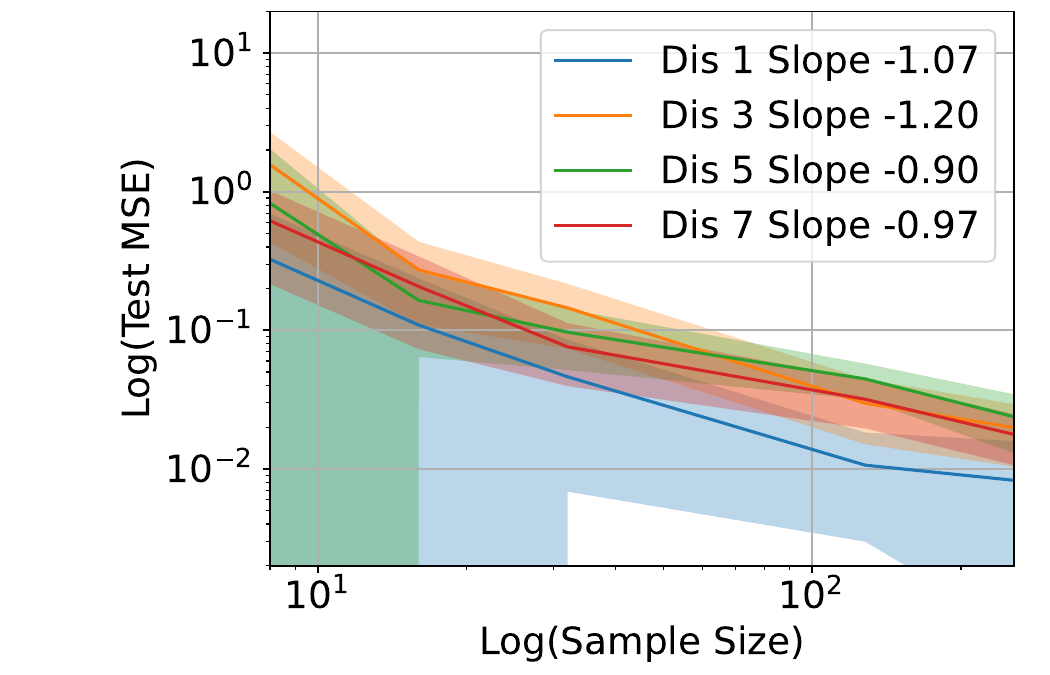}}
		\subfigure[$\sigma = 0.1$]{\includegraphics[width=0.44\textwidth]{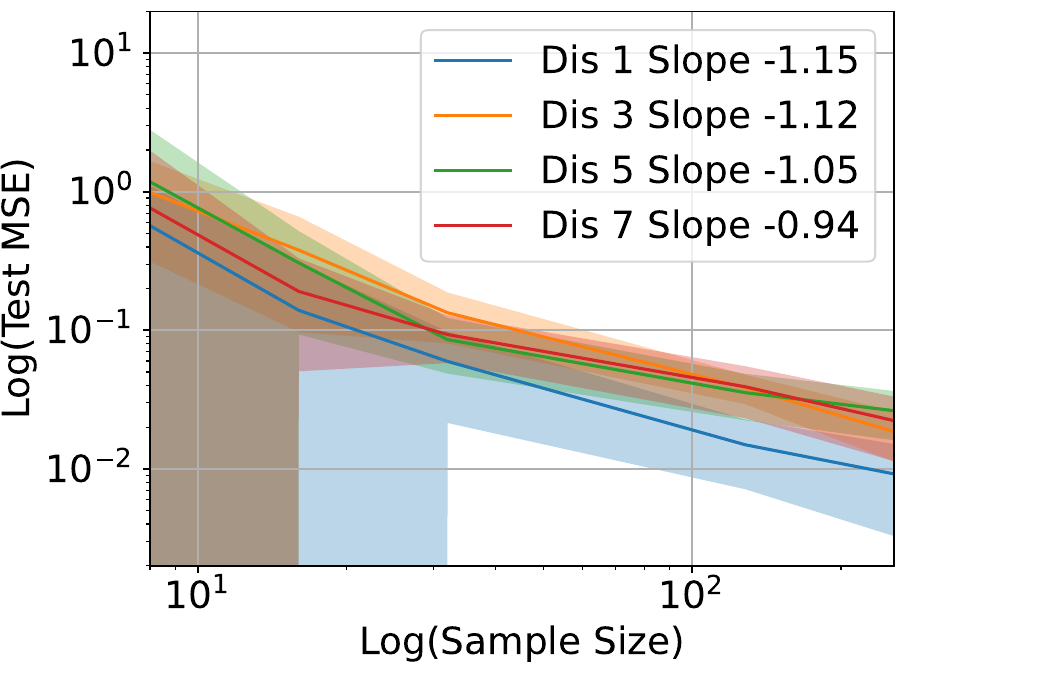}}
		\subfigure[$\sigma = 0.3$]{\includegraphics[width=0.44\textwidth]{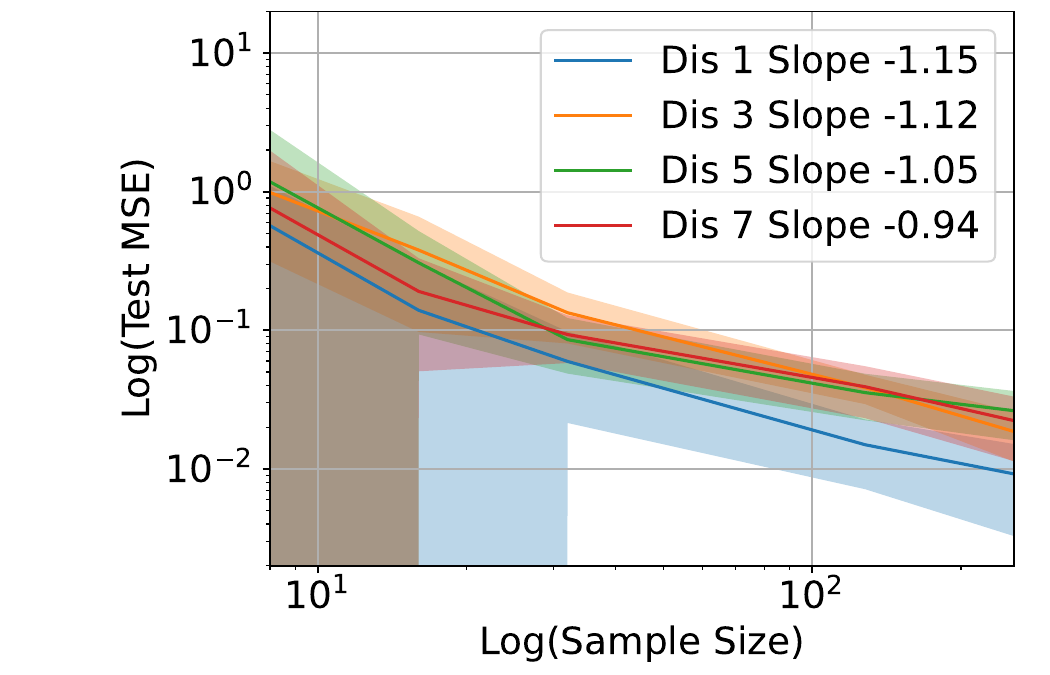}}
		\subfigure[$\sigma = 0.5$]{\includegraphics[width=0.44\textwidth]{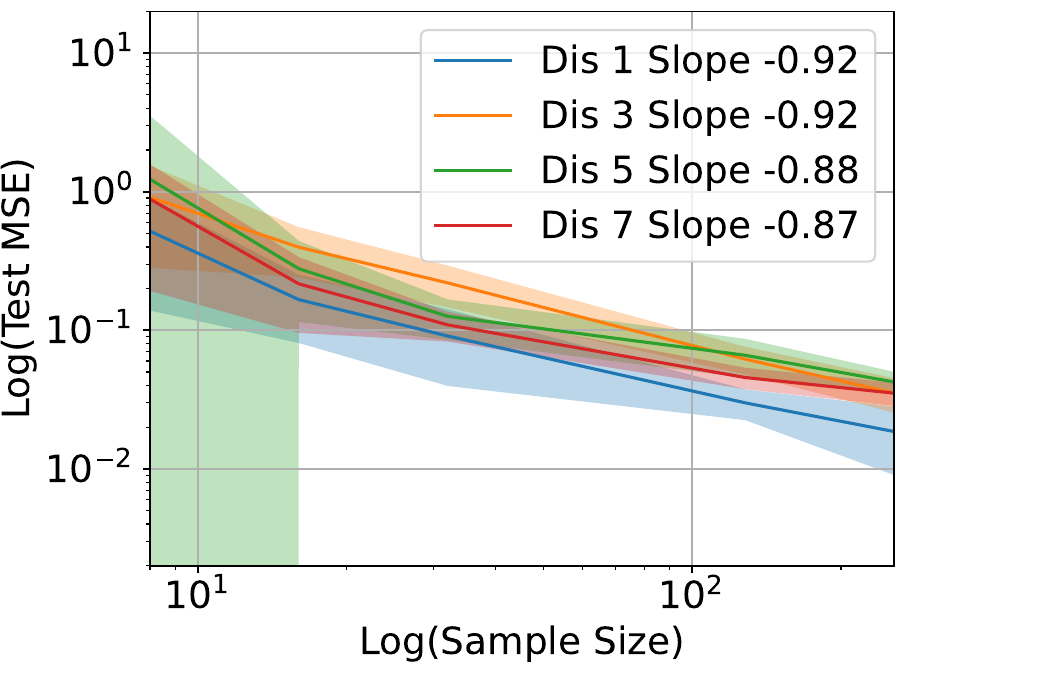}}
		\caption{
			Test MSE versus $n_{\rm train}$ in $\log$-$\log$ scale for the regression of functions with different number of discontinuity points (Dis) shown in Figure \ref{fig:TargetFunction}, when  $\sigma = 0$(a), $\sigma=0.1$(b), $\sigma=0.3$(c) and $\sigma=0.5$(d).
			We repeat 20 experiments for each setting. The curve represents the average test MSE in 20 experiments and the shade represents the standard deviation. A least-square fit of the curve gives rise to the slope in the legend.
		}
		\label{fig:TestMSEversusSample}
	\end{figure}
	These functions are shown in the Figure \ref{fig:TargetFunction}.
	In each experiment, we  sample $n_{\rm train}$ i.i.d. training samples $ \{x_i,  y_i\}_{i=1}^{n_{\rm train}}$ according to the model in \eqref{eqregmodel}.
	Specifically, the $x_i$'s are independently and uniformly sampled in $[0,1]$, and $y_i = f(x_i)+\xi_i$ with $\xi_i \sim \mathcal{N}(0,\sigma^2)$ being a normal random variable with zero mean and standard deviation $\sigma$. Given the training data, we train a neural network through 
	\begin{align}
		\widehat{f}=\argmin_{f_{\rm NN}\in \cF} \frac{1}{n_{\rm train}}\sum_{i=1}^{n_{\rm train} }|f_{\rm NN}(x_i)-y_i|^2,
		\label{eq.empirial.loss2}
	\end{align}
	where the ReLU neural  network class $\cF$ comprises four fully connected layer (64, 128, 64 neurons in each hidden layer).  All weight and bias parameters are initialized with a uniform distribution, and we use Adam optimizer with learning rate $0.001$ for training.
	
	Figure \ref{fig:trained_model} shows the ground-truth, training data, and the trained model with different number of training data and noise levels.  When $\sigma$ is fixed, the trained model approaches the ground-truth function when $n_{\rm train}$ increases, which is consistent with Theorem \ref{thm.general}.
	
	The test mean squared error (MSE) is evaluated on the test samples
	$\{x_j, f(x_j)\}_{n_{\rm test}}$ 
	\begin{align*}
		\text{Test MSE}= \frac{1}{n_{\rm test}}\sum_{j=1}^{n_{\rm test} }|\widehat{f}(x_j)-f(x_j)|^2
	\end{align*}
	with $n_{\rm test} = 10,000$. We use a large  $n_{\rm test}$ in order to reduced the variance in the evaluation of the test MSE.
	
	Figure \ref{fig:TestMSEversusSample} shows the Test MSE versus $n_{\rm train}$ in $\log$-$\log$ scale for the regression of functions with different number of discontinuity points shown in Figure \ref{fig:TargetFunction}, when  $\sigma = 0$ in (a), $\sigma=0.1$ in (b), $\sigma=0.3$ in (c) and $\sigma=0.5$ in (d), respectively.
	We repeat 20 experiments for each setting. The curve represents the average test MSE in 20 experiments and the shade represents the standard deviation. A least-square fit of the curve gives rise to the slope in the legend. These functions fit in Example \ref{ex.pieceiwiseholder1d.gene}, which follows 
	$\text{Test MSE} \lesssim {n_{\rm train}}^{-\frac{2r}{2r+1}}\log^3 n$ for a large $r$ since the functions are smooth except at the discontinuity points. Our Example \ref{ex.pieceiwiseholder1d.gene} predicts the slope about $-1$ in the $\log$-$\log$ plot of test MSE versus $n_{\rm train}$, and the slope is not sensitive to the number of (finite) discontinuity points. The numerical slopes in Figure \ref{fig:TestMSEversusSample} are consistent with our theory.

	\section{Proof of main results}
	\label{sec.proof}
	
	In this section, we present the proof of our main results. Some preliminaries for the proof are introduced in Subsection \ref{proof.pre}. Theorem \ref{thm.approx} is proved in Subsection \ref{proof.thm.approx} and  Theorem \ref{thm.general} is proved in Subsection \ref{proof.thm.general}.
	
	\subsection{Proof preliminaries}
	\label{proof.pre}
	
	We first introduce some preliminaries to be used in the proof.

	\subsubsection{Trapezoidal function and its neural network representation}
	
	{Given an interval $[a,b]\subset [0,1]$ and $0<\delta<b-a$, the function defined as
		\begin{align}
			&\psi_{[a,b]}(x)=\begin{cases}
				0 & \mbox{ if } x<a-\delta/2,\\
				\frac{x-(a-\delta/2)}{\delta} & \mbox{ if } a-\delta/2\leq x \leq a+\delta/2,\\
				1 & \mbox{ if } a+\delta/2< x < b-\delta/2,\\
				1-\frac{x-(b-\delta/2)}{\delta} & \mbox{ if } b-\delta/2\leq x \leq b+\delta/2,\\
				0 & \mbox{ if } x>b+\delta/2,
			\end{cases}  &&\mbox{ for } \quad a\neq0,\ b\neq 1,\nonumber\\
			&\psi_{[a,b]}(x)=\begin{cases}
				1 & \mbox{ if } a\leq  x < b-\delta/2,\\
				1-\frac{x-(b-\delta/2)}{\delta} & \mbox{ if } b-\delta/2\leq x \leq b+\delta/2,\\
				0 & \mbox{ if } x>b+\delta/2,
			\end{cases}  &&\mbox{ for } a=0, \nonumber\\
			&\psi_{[a,b]}(x)=\begin{cases}
				0 & \mbox{ if } x<a-\delta/2,\\
				\frac{x-(a-\delta/2)}{\delta} & \mbox{ if } a-\delta/2\leq x \leq a+\delta/2,\\
				1 & \mbox{ if } a+\delta/2< x \leq b
			\end{cases}  &&\mbox{ for } b=1
			\label{eq:psiab}
		\end{align} 
		is piecewise linear and supported on $[a-\delta/2,b+\delta/2]$ (or $[a,b+\delta/2]$ or $[a-\delta/2,b]$). In the rest of the proof, for simplicity, we only discuss the case for $0<a<b<1$. The case for $a=0$ or $b=1$ can be derived similarly.} Function $\psi_{[a,b]}$ can be realized by the following ReLU network with 1 layer and width 4:
	\begin{align*}
		\widetilde{\psi}_{[a,b]}(x)=\frac{1}{\delta}\Big(&\ReLU(x-(a-\delta/2))-\ReLU(x-(a+\delta/2)) \nonumber\\
		&- \ReLU(x-(b-\delta/2) + \ReLU(x-(b+\delta/2))\Big).
	\end{align*}

	\subsubsection{Multiplication operation and neural network approximation}
	The following lemma from \cite{yarotsky2017error} shows that the product operation can be well approximated  by a ReLU network.
	\begin{lemma}[Proposition 3 in \cite{yarotsky2017error}]
		For any $C>0$ and $0<\varepsilon<1$. If $|x|\leq C,|y|\leq C$, there is a ReLU network, denoted by $\ttimes(\cdot,\cdot)$, such that
		\begin{align*}
			&|\ttimes(x,y)-xy|<\varepsilon,
			\\
			\ttimes(x,0)=&\ttimes(y,0)=0, \ |\ttimes(x,y)|\leq C^2.
		\end{align*}
		Such a network has $O\left(\log \frac{1}{\varepsilon}\right)$ layers and parameters, where the constants hidden in $O$ depends on $C$. The width of each layer is bounded by 6 and all parameters are bounded by $O(C^2)$, where the constant hidden in $O$ is an  absolute constant. 
		\label{lem.multiplication}
	\end{lemma}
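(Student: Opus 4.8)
The plan is to reduce the two-variable multiplication to the one-variable squaring operation, and then approximate squaring by Yarotsky's sawtooth construction. The key reduction is the polarization identity $xy = \frac{1}{4}\big((x+y)^2-(x-y)^2\big)$. If I have a ReLU network $\widetilde{\mathrm{sq}}(\cdot)$ approximating $t\mapsto t^2$, I set $\ttimes(x,y)=\frac14\big(\widetilde{\mathrm{sq}}(x+y)-\widetilde{\mathrm{sq}}(x-y)\big)$. This choice is precisely what makes the exact property $\ttimes(x,0)=\ttimes(0,y)=0$ hold for free: when $y=0$ both squared arguments equal $x$, so the two copies of $\widetilde{\mathrm{sq}}$ cancel identically, and symmetrically for $x=0$.

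Next I would construct $\widetilde{\mathrm{sq}}$ on $[0,1]$. Define the tent map $T(t)=2\,\ReLU(t)-4\,\ReLU(t-\tfrac12)+2\,\ReLU(t-1)$, a one-layer width-$3$ ReLU network, and let $g_s=T^{\circ s}$ be its $s$-fold composition, a sawtooth with $2^{s-1}$ teeth. The function $h_m(t)=t-\sum_{s=1}^{m}2^{-2s}g_s(t)$ is the piecewise-linear interpolant of $t^2$ at the dyadic points $\{k2^{-m}\}$ and satisfies $\sup_{t\in[0,1]}|h_m(t)-t^2|\le 2^{-2m-2}$. Composing the tents while accumulating the running correction can be carried out by a network of depth $O(m)$ with $O(m)$ parameters and bounded width.

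To handle the bound $|x|,|y|\le C$ rather than the unit interval, I would rescale: the arguments $x\pm y$ lie in $[-2C,2C]$, so I feed $(x\pm y)/(2C)$ into the square approximator (folding $[-1,1]$ into $[0,1]$ via $|u|=\ReLU(u)+\ReLU(-u)$, which is exact and costs only constant width) and multiply the output by $(2C)^2$. Because $xy\in[-C^2,C^2]$, I finally clip the raw output to $[-C^2,C^2]$ using a ReLU-based $\min/\max$; clipping neither harms the error (the truth already lies in the interval) nor the exact-zero property (since $0$ is an interior point). The rescaling multiplies the squaring error by $(2C)^2$, so to reach accuracy $\varepsilon$ I take $m=O(\log\frac{1}{\varepsilon})$ with the constant depending on $C$, giving $O(\log\frac1\varepsilon)$ layers and parameters; running the two square branches in parallel keeps the width bounded by $6$, and the rescaling factors push the weights up to $O(C^2)$.

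The main obstacle is not any single estimate but meeting all three requirements simultaneously: the uniform error bound, the \emph{exact} vanishing $\ttimes(x,0)=\ttimes(0,y)=0$, and the hard bound $|\ttimes(x,y)|\le C^2$. The polarization identity with a single shared square-approximation block is the device that secures the exact-zero property (an approximate square would only give it up to $\varepsilon$), the final clipping secures the hard bound without disturbing the zeros, and the only care needed in the error analysis is to enlarge $m$ by an additive $O(\log C)$ to absorb the $(2C)^2$ amplification from rescaling — all of which leaves the stated $O(\log\frac1\varepsilon)$ depth and parameter count and width $6$ unchanged.
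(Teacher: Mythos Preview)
The paper does not prove this lemma: it is quoted verbatim as Proposition~3 of \cite{yarotsky2017error} and used as a black box, so there is no in-paper argument to compare against. Your proposal is precisely the standard Yarotsky construction that the citation points to --- polarization to reduce $xy$ to squaring, the iterated tent map $g_s=T^{\circ s}$ to build the piecewise-linear interpolant $h_m$ of $t\mapsto t^2$ with error $2^{-2m-2}$, and rescaling/clipping to pass from $[0,1]$ to $[-C,C]$. Your use of the two-term polarization $xy=\tfrac14\big((x+y)^2-(x-y)^2\big)$ together with the even extension via $|u|=\ReLU(u)+\ReLU(-u)$ is exactly what delivers the \emph{exact} identities $\ttimes(x,0)=\ttimes(0,y)=0$ and keeps the width at $6$ (two width-$3$ tent branches in parallel), and the final clip to $[-C^2,C^2]$ secures $|\ttimes(x,y)|\le C^2$ without disturbing either the error bound or the zeros. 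This is correct and matches the referenced construction; there is nothing further the paper adds.
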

	Furthermore, the following lemma shows that composition of products can be well approximated  by a ReLU network (see a proof in Appendix \ref{proof.lem.multiprod})
	\begin{lemma}\label{lem.multiprod}
		Let $\{a_i\}_{i=1}^N$ be a set of  real numbers satisfying $|a_i|\leq C$ for any $i$. For any $0<\varepsilon<1$, there exists a neural network $\widetilde{\Pi}\in \cF_{\rm NN}(L,w,K,\kappa,M)$ such that
		\begin{align*}
			|\widetilde{\Pi}(a_1,...,a_N)-\prod_{i=1}^Na_i|\leq N\varepsilon. 
		\end{align*}
		The network $\cF_{\rm NN}(L,w,K,\kappa,M)$ has
		\begin{align}
			L=O\left(N\log \frac{1}{\varepsilon}\right), w=N+6, K=O\left(N\log \frac{1}{\varepsilon}\right), \kappa=O(C^N), M=C^N,
			\label{eq.multiprod.archi}
		\end{align}
		where the constant hidden in $O$ of $L$ and $K$ depends on $C$, and $\kappa$ is some absolute constant.
	\end{lemma}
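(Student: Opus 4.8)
The plan is to build $\widetilde{\Pi}$ as a cascade of the two--factor multiplication networks $\ttimes(\cdot,\cdot)$ supplied by Lemma \ref{lem.multiplication}. Setting $P_1=a_1$ and $P_k=\ttimes(P_{k-1},a_k)$ for $k=2,\dots,N$, I would output $\widetilde{\Pi}(a_1,\dots,a_N)=P_N$. Architecturally this is a composition of $N-1$ blocks, each of depth and sparsity $O(\log\frac{1}{\varepsilon})$ and internal width $6$; in parallel I would route the inputs $a_{k+1},\dots,a_N$ that have not yet been consumed forward through the remaining blocks using ReLU identity channels (or, to save width, after a nonnegativity shift). This routing accounts for the extra $N$ in the width bound $w=N+6$, costs only $O(1)$ width per carried coordinate and no extra multiplication layers, and leaves the depth and number of nonzero weights at $O(N\log\frac{1}{\varepsilon})$, as in \eqref{eq.multiprod.archi}.

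The core of the argument is an inductive error bound. Writing $Q_k=\prod_{i=1}^k a_i$ for the exact partial product and $e_k=|P_k-Q_k|$, the triangle inequality together with the accuracy of $\ttimes$ gives
\begin{align*}
 e_k \le |\ttimes(P_{k-1},a_k)-P_{k-1}a_k| + |a_k|\,|P_{k-1}-Q_{k-1}| \le \varepsilon_0 + C\,e_{k-1},
\end{align*}
where $\varepsilon_0$ is the per--multiplication accuracy at which I invoke Lemma \ref{lem.multiplication}. Unrolling this recursion from $e_1=0$ yields $e_N\le \varepsilon_0\sum_{k=0}^{N-1}C^{k}$, so the total amplification factor is at most $NC^{N}$. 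Choosing $\varepsilon_0$ of order $\varepsilon\,C^{-N}$ (only needed when $C>1$; for $C\le 1$ one simply takes $\varepsilon_0=\varepsilon$) then delivers the claimed bound $e_N\le N\varepsilon$. Since $\log\frac{1}{\varepsilon_0}=\log\frac{1}{\varepsilon}+O(N\log C)$, each block still has depth and sparsity $O(\log\frac{1}{\varepsilon})$ once the $C$-- (and $N$--)dependence is absorbed into the hidden constants.

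To control magnitudes and weights, I would use $|Q_k|\le C^{k}\le C^{N}$, and the fact that $\ttimes$ keeps each $P_k$ bounded by essentially the same quantity, which gives the output bound $M=C^{N}$. For the weight bound $\kappa=O(C^{N})$ I would apply Lemma \ref{lem.multiplication} to the \emph{normalized} inputs $P_{k-1}/C^{k-1}$ and $a_k/C$ (both in $[-1,1]$, so the multiplier's internal parameters are absolute constants) and then rescale each block output by the single linear factor $C^{k}\le C^{N}$. This pushes all the scale into one weight per block and keeps every parameter bounded by $O(C^{N})$ rather than the $O(C^{2N})$ a direct application would produce.

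The main obstacle is precisely this geometric accumulation of error: each approximate multiplication both injects a fresh $\varepsilon_0$ and inflates the inherited error by the factor $|a_k|\le C$, so the naive choice $\varepsilon_0=\varepsilon$ would only give the exponentially weaker bound $e_N\lesssim C^{N}\varepsilon$. The resolution---shrinking the per--step accuracy to order $\varepsilon\,C^{-N}$ and, in parallel, moving the large scale into a single linear weight---is exactly what forces the $C^{N}$ dependence into $\kappa$ and $M$ and the $C$-dependence into the constants hidden in $L$ and $K$. A secondary, purely bookkeeping point is checking that forwarding the unused inputs does not violate the width, depth, and sparsity budgets, which holds because each carried coordinate contributes $O(1)$ width and no multiplication layers.
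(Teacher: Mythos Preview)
Your construction---a sequential cascade of the binary product network $\ttimes$ with the unused inputs routed forward---is exactly the paper's approach, and your recursive error analysis $e_k\le\varepsilon_0+Ce_{k-1}$ is in fact more careful than the paper's terse telescoping bound, which applies the same idea but glosses over the geometric amplification. Your normalization device for achieving $\kappa=O(C^N)$ is a nice addition that the paper's proof does not spell out.
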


	\subsection{Proof of Theorem \ref{thm.approx}}
	\label{proof.thm.approx}
	\begin{proof}[Proof of Theorem \ref{thm.approx}]
		To prove Theorem \ref{thm.approx}, we first  decompose the approximation error into two parts by applying the triangle inequality with the piecewise polynomial on the adaptive partition $p_{\Lambda(f,\eta)}$ defined in (\ref{eq.plambda}). The first part is the approximation error of $f$ by $p_{\Lambda(f,\eta)}$, which can be bounded by (\ref{eqaslapproxerror}). The second part is the network approximation error of $p_{\Lambda(f,\eta)}$. Then we show that $p_{\Lambda(f,\eta)}$ can be approximated by the given neural network with an arbitrary accuracy. Lastly, we estimate the total approximation error and quantify the network size. In the following we present the details of each step.

		\noindent$\bullet$ {\bf Decomposition of the approximation error.} For any $\widetilde{f}$ given by the network in (\ref{eq.approx}), we decompose the error as
		\begin{align}
			\|\widetilde{f}-f\|_{L^2(\rho)}^2\leq 2\|f-p_{\Lambda(f,\eta)}\|_{L^2(\rho)}^2 + 2\|\widetilde{f}-p_{\Lambda(f,\eta)}\|_{L^2(\rho)}^2,
			\label{eq.approx.tri}
		\end{align}
		where $\eta>0$ is to be determined later.
		The first term in (\ref{eq.approx.tri}) can be bounded by (\ref{eqaslapproxerror}) such that
		\begin{align}
			2\|f-p_{\Lambda(f,\eta)}\|_{L^2(\rho)}^2\leq  2C_s R_{\cA}^{2} (\#\calT(f,\eta))^{-{2s}}.
			\label{eq.approx.tree}
		\end{align}
		
		\noindent$\bullet$ {\bf Bounding the second term in (\ref{eq.approx.tri}).}
		We next derive an upper bound for the second term in (\ref{eq.approx.tri}) by showing that $p_{\Lambda(f,\eta)}$ can be  well approximated  by a network $\widetilde{f}$. This part contains four steps: 
		\begin{description}
			\item[Step 1] Estimate the finest scale of the truncated tree. 
			\item[Step 2] Construct a partition of unity of $X$ with respect to the truncated tree. Each element of the partition of unity is a network. 
			\item[Step 3] Based on the partition of unity, construct a network to approximate $p_{j,k}$ on each cube. 
			\item[Step 4] Estimate the approximation error. 
		\end{description}
		In the following we disucss details of each step.
		
		\noindent {\bf --- Step 1: Estimate the finest scale.}
		Denote the truncated tree and its outer leaves of $p_{\Lambda(f,\eta)}$ by $\cT$ and $\Lambda$ respectively for simplicity. Each $C_{j,k}\in \Lambda$ is a hypercube in the form of $\otimes_{\ell=1}^d[r_{\ell,j,k},r_{\ell,j,k}+2^{-j}]$, where $r_{\ell,j,k}\in [0,1]$ are scalars and  
		\begin{align*}
			\otimes_{\ell=1}^d[r_{\ell,j,k},r_{\ell,j,k}+2^{-j}]=[r_{1,j,k},r_{1,j,k}+2^{-j}]\times \cdots \times [r_{d,j,k},r_{d,j,k}+2^{-j}]
		\end{align*}
		is a hypercube with edge length $2^{-j}$ in $\RR^d$.
		
		We estimate the finest scale in $\Lambda$. Let $J>0$ be the largest integer such that $C_{J,k}\in \Lambda$ for some $k$. In other words, $2^{-J}$ is the finest scale of the cubes in $\Lambda(f,\eta)$.
		Let $C_1=R_{\cA}^{m}$ so that 
		\begin{align*}
			\sup_{\eta>0} \eta^m\#\cT(f,\eta)\leq R_{\cA}^m=C_1,
		\end{align*}
		with the $m$ given in (\ref{eqasldef}), which implies
		\begin{align}
			\eta\leq \left(\frac{C_1}{\# \cT(f,\eta)}\right)^{\frac{1}{m}}.
			\label{eq.approx.etaT}
		\end{align}
		For any $C_{j,k}\in \Lambda(f,\eta)$ and its parent $C_{j-1,k'}$, we have $\delta_{j-1,k'}>\eta$.
		Meanwhile, we have 
		$$
		\delta_{j-1,k'}\leq 2C^{\frac 1 2}_{\rho}\|f\|_{L^\infty(X)}|C_{j-1,k'}| \leq 2C_{\rho}^{\frac 1 2}R2^{-(j-1)d/2}.
		$$
		This implies 
		\begin{align}
			j<-\frac{2}{d}\log_2 \frac{\eta}{2C_{\rho}^{\frac 1 2}R}+1.
			\label{eq.approx.jeta}
		\end{align}
		Substituting (\ref{eq.approx.etaT}) into (\ref{eq.approx.jeta}) gives rise to
		\begin{align}
			j\leq \frac{2}{md} \log \#\cT(f,\eta)- \frac{2}{md} \log C_1+1+\frac{2}{d}\log_2 (2C_{\rho}^{\frac 1 2}R)\leq C_2\log \#\cT(f,\eta),
			\label{eq.approx.j.upper}
		\end{align}
		where $C_2$ is a constant depending on $d,m,C_{\rho}$ and $R$.
		Since (\ref{eq.approx.j.upper}) holds for any  $C_{j,k}\in \Lambda(f,\eta)$, we have 
		\begin{align}
			J\leq  C_2\log \# \cT(f,\eta).
			\label{eq.zeta}
		\end{align}

		\noindent {\bf --- Step 2: Construct a partition of unity.}
		Let $0<\delta\leq 2^{-(J+2)}$. For each $C_{j,k}\in \Lambda(f,\eta)$, we define two sets:
		\begin{align}
			&\mathring{C}_{j,k}=\otimes_{\ell=1}^d[r_{\ell,j,k}+\delta/2,r_{\ell,j,k}+2^{-j}-\delta/2]\cap X, \nonumber\\
			&\bar{C}_{j,k}=\left(\otimes_{\ell=1}^d[r_{\ell,j,k}-\delta/2,r_{\ell,j,k}+2^{-j}+\delta/2]\cap X\right)\backslash \mathring{C}_{j,k}.
			\label{eq.C.trap}
		\end{align}
		The $\mathring{C}_{j,k}$ set is in the interior of $\Cjk$, with $\delta/2$ distance to the boundary of $\Cjk$. The $\bar{C}_{j,k}$ set contains the boundary of $\Cjk$.
		The relations of $\bar{C}_{j,k},\mathring{C}_{j,k}$ and $\Cjk$ are illustrated in Figure \ref{fig.C}.
		\begin{figure}
			\centering
			\includegraphics[width=0.6\textwidth]{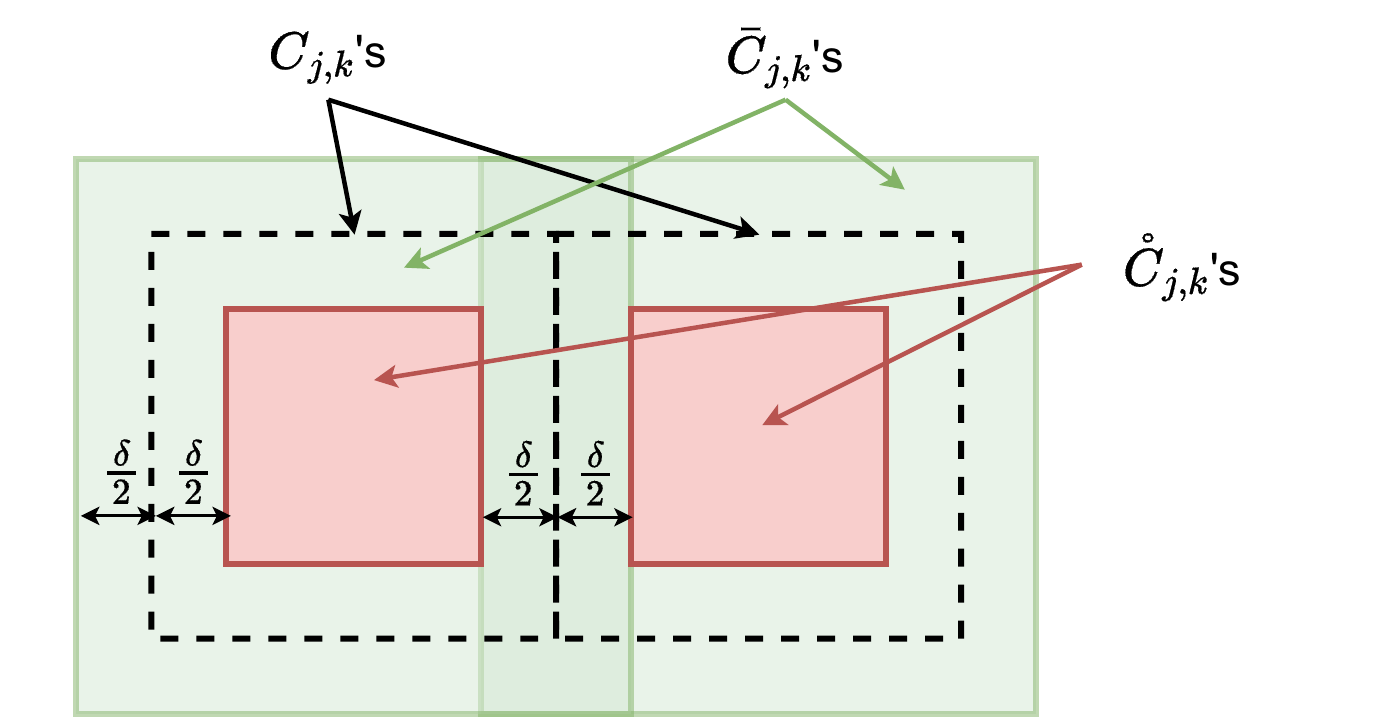}
			\caption{An illustration of the relation among $C_{j,k},\mathring{C}_{j,k}$ and $\bar{C}_{j,k}$.}
			\label{fig.C}
		\end{figure}

		For each $C_{j,k}$, we define the function
		\begin{align*}
			\phi_{j,k}(\xb)=\prod_{\ell=1}^d \psi_{[r_{\ell,j,k},r_{\ell,j,k}+2^{-j}]}(x_{\ell}),
		\end{align*}
		where  $\xb=[x_1,...,x_d]^{\top}$, and the $\psi$ function is defined in \eqref{eq:psiab}.
		The function $\phi_{j,k}$ has the following properties:
		\begin{enumerate}
			\item $\phi_{j,k}$ is piecewise linear. 
			\item $\phi_{j,k}$ is supported on $\mathring{C}_{j,k}\cup \bar{C}_{j,k}$, and  $\phi_{j,k}(\xb)=1$ when $\xb \in \mathring{C}_{j,k}$. 
			\item The $\phi_{j,k}$'s form a partition of unity of $X$: $\sum_{C_{j,k}\in \Lambda} \phi_{j,k}(\xb)=1$ when  $\xb \in X=[0,1]^d$.
		\end{enumerate}

		In this paper, we approximate $\phi_{j,k}(\xb)$ by 
		\begin{align*}
			\widetilde{\phi}_{j,k}(\xb)=&\widetilde{\Pi}\bigg(\psi_{[r_{1,j,k},r_{1,j,k}+2^{-j}]}(x_{1}),\psi_{[r_{2,j,k},r_{2,j,k}+2^{-j}]}(x_{2}), \cdots, \psi_{[r_{d,j,k},r_{d,j,k}+2^{-j}]}(x_{d})\bigg),
		\end{align*}
		where the network $\widetilde{\Pi}$ with $d$ inputs is defined in Lemma \ref{lem.multiprod} with accuracy $d\varepsilon_1$.
		We have $\widetilde{\phi}_{j,k}\in \cF_{1}=\cF(L_1,w_1,K_1,\kappa_1,M_1)$
		with
		\begin{align*}
			L_1=O\left(d\log \frac{1}{\varepsilon_1}\right), w_1=d+6, K_1=O\left(d\log \frac{1}{\varepsilon_1}\right), \kappa_1=O\left(\frac{1}{\delta}\right), M_1=1.
		\end{align*}
		By Lemma \ref{lem.multiprod}, we have
		\begin{align}
			\sup_{\xb \in [0,1]^d} |\widetilde{\phi}_{j,k}(\xb)-\phi_{j,k}(\xb)|\leq d\varepsilon_1.
			\label{eq.tildephi.err}
		\end{align}

		\noindent {\bf --- Step 3: Approximate $p_{j,k}$ on each cube.}
		According to (\ref{eq.poly.bound.form}), for each $C_{j,k}\in \Lambda(f,\eta)$,
		$p_{j,k}(f)$ is a polynomial of degree $\theta$ and is in the form of 
		\begin{align}
			p_{j,k}=\sum_{|\balpha|\leq \theta} a_{\balpha}(2^j(\xb-\rb_{j,k}))^{\balpha},
		\end{align}
		where $\rb_{j,k}=[r_{1,j,k},...,r_{d,j,k}]^{\top}$.

		We approximate $p_{j,k}$ by
		\begin{align*}
			\widetilde{p}_{j,k}(\xb)=\sum_{|\balpha|\leq \theta} a_{\balpha} \widetilde{\Pi}(\underbrace{2^j(x_1-r_{1,j,k}),...,2^j(x_1-r_{1,j,k})}_{\alpha_1 \mbox{ times }},...,\underbrace{2^j(x_d-r_{d,j,k}),...,2^j(x_d-r_{d,j,k})}_{\alpha_d \mbox{ times }}),
		\end{align*}
		where $\widetilde{\Pi}(\underbrace{2^j(x_1-r_{1,j,k}),...,2^j(x_1-r_{1,j,k})}_{\alpha_1 \mbox{ times }},...,\underbrace{2^j(x_d-r_{d,j,k}),...,2^j(x_d-r_{d,j,k})}_{\alpha_d \mbox{ times }})$ is the network approximation of $(2^j(\xb-\rb_{j,k}))^{\balpha}$ with accuracy $\theta\varepsilon_1$, according to Lemma \ref{lem.multiprod}.
		
		By Assumption \ref{assum.f}(ii), there exists  $R_p>0$ so that $|a_{\balpha}|$ is uniformly bounded by $R_p$ for any $|\balpha|\leq \theta$ for any $p_{j,k}$.
		By Lemma \ref{lem.multiprod}, we have
		\begin{align}
			&|\widetilde{p}_{j,k}(\xb)-p_{j,k}(\xb)| \nonumber \\
			\leq &\sum_{|\balpha|\leq \theta} a_{\balpha}\left| \widetilde{\Pi}(\underbrace{2^j(x_1-r_{1,j,k}),...,2^j(x_1-r_{1,j,k})}_{\alpha_1 \mbox{ times }},...,\underbrace{2^j(x_d-r_{d,j,k}),...,2^j(x_d-r_{d,j,k})}_{\alpha_d \mbox{ times }})-(2^j(\xb-\rb_{j,k}))^{\balpha}\right| \nonumber\\
			\leq &C_3R_p\theta \varepsilon_1
			\label{eq.tildep.err}
		\end{align}
		for some $C_3$ dpending on $d,\theta$, and $\widetilde{p}_{j,k}\in \cF_2=\cF(L_2,w_2,K_2,\kappa_2,M_2)$ with 
		\begin{align*}
			L_2=O(\theta\log \frac{1}{\varepsilon_1}), \ w_2=O(\theta d^\theta), \ K_2=O(\theta d^\theta\log \frac{1}{\varepsilon_1}), \ \kappa_2=O(2^{J }), \ M_2=R.
		\end{align*}
		
		\noindent {\bf --- Step 4: Estimate the network approximation error for $p_{\Lambda(f,\eta)}$.}
		We  approximate $$p_{\Lambda(f,\eta)}(\xb) =\sum_{\Cjk \in \Lambda(f,\eta)} \pjk(\xb)$$ by
		\begin{align}
			\widetilde{f}(\xb)=\sum_{C_{j,k}\in \Lambda(f,\eta)} \ttimes(\widetilde{\phi}_{j,k}(\xb),\widetilde{p}_{j,k}(\xb)),
		\end{align}
		where $\widetilde{\times}$ is the product network with accuracy $\varepsilon_1$, according to Lemma \ref{lem.multiplication}.
		
		Denote $X_1=\cup_{C_{j,k}\in \Lambda(f,\eta)} \mathring{C}_{j,k},\ X_2=\cup_{C_{j,k}\in \Lambda(f,\eta)}\bar{C}_{j,k}$.
		The error is estimated as
		\begin{align}
			\|\widetilde{f}-p_{\Lambda(f,\eta)}\|_{L^2(X)}^2=  \|\widetilde{f}-p_{\Lambda(f,\eta)}\|_{L^2(X_1)}^2 + \|\widetilde{f}-p_{\Lambda(f,\eta)}\|_{L^2(X_2)}^2.
			\label{eq.approx.split}
		\end{align}
		For the first term in (\ref{eq.approx.split}), we have 
		\begin{align}
			&\|\widetilde{f}-p_{\Lambda(f,\eta)}\|_{L^2(X_1)}^2 \\
			= &\sum_{C_{j,k}\in \Lambda(f,\eta)} \int_{\mathring{C}_{j,k}} |\ttimes(\widetilde{\phi}_{j,k}(\xb),\widetilde{p}_{j,k}(\xb))-p_{j,k}(\xb)|^2d\xb \nonumber\\
			\leq &\sum_{C_{j,k}\in \Lambda(f,\eta)} \int_{\mathring{C}_{j,k}} 3\Big[|\ttimes(\widetilde{\phi}_{j,k}(\xb),\widetilde{p}_{j,k}(\xb))-\widetilde{\phi}_{j,k}(\xb)\widetilde{p}_{j,k}(\xb)|^2 \nonumber \\
			&+ |\widetilde{\phi}_{j,k}(\xb)\widetilde{p}_{j,k}(\xb)-\phi_{j,k}(\xb)\widetilde{p}_{j,k}(\xb)|^2 +|\phi_{j,k}(\xb)\widetilde{p}_{j,k}(\xb)-p_{j,k}(\xb)|^2\Big]d\xb \nonumber \\
			\leq & \sum_{C_{j,k}\in \Lambda(f,\eta)} \int_{\mathring{C}_{j,k}} 3\Big[\varepsilon_1^2 + R^2d^2\varepsilon_1^2+|\widetilde{p}_{j,k}(\xb)-p_{j,k}(\xb)|^2 \Big]d\xb \nonumber\\
			\leq & 3(R^2d^2+1+C_3^2R_p^2\theta^2)\varepsilon_1^2 \sum_{C_{j,k}\in \Lambda(f,\eta)} |\mathring{C}_{j,k}| \nonumber \\
			\leq & 3(R^2d^2+1+C_3^2R_p^2\theta^2)\varepsilon_1^2.
			\label{eq.approx.X1}
		\end{align}
		In the derivation above, we used $\phi_{j,k}(\bx)=1$ when $\xb \in \mathring{C}_{j,k}$. Additionally, (\ref{eq.tildephi.err}) is used in the second inequality, (\ref{eq.tildep.err}) is used in the third inequality.

		We next derive an upper bound for the second term in (\ref{eq.approx.split}) by bounding the volume of $X_2$. We define the common boundaries of a set of cubes being the outer leaves of a truncated tree as the set of points that belong to at least two cubes.  
		We will use the following lemma to estimate the surface area for the common boundaries of $\Lambda(f,\eta)$ (see a proof in Appendix \ref{proof.lem.intersectionArea}).
		
		\begin{lemma}\label{lem.intersectionArea}
			Given a truncated tree $\cT$ and its outer leaves $\Lambda_{\cT}$, we denote the set of common boundaries  of the subcubes in  $\Lambda_{\cT}$ by $\cB(\Lambda_{\cT})$. The surface area of $\cB(\Lambda_{\cT})$ in $\RR^{d-1}$, denoted by $|\cB(\Lambda_{\cT})|$,  satisfies
			\begin{align}
				|\cB(\Lambda_{\cT})|\leq  2^{d+1}d(\# \cT)^{1/d}.
			\end{align}
		\end{lemma}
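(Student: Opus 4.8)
The plan is to bound the total surface area by summing the boundary areas of the individual leaf cubes in $\Lambda_{\cT}$ and then controlling this sum by a rearrangement argument that plays the geometric decay of face areas against the dyadic capacity available at each scale.

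First I would note that every point of $\cB(\Lambda_{\cT})$ lying in the relative interior of a $(d-1)$-dimensional face belongs to exactly two cubes of $\Lambda_{\cT}$, one on each side, since the outer leaves have disjoint interiors and tile $X$; faces lying on $\partial X$ are covered by a single cube and do not contribute to $\cB(\Lambda_{\cT})$. Writing $|\partial C|$ for the $(d-1)$-dimensional surface area of a cube $C$, this gives $|\cB(\Lambda_{\cT})| \le \tfrac12 \sum_{C \in \Lambda_{\cT}} |\partial C|$. A dyadic cube at scale $j$ has side length $2^{-j}$ and hence surface area $2d\,2^{-j(d-1)}$, so if $N_j$ denotes the number of outer leaves at scale $j$, then $|\cB(\Lambda_{\cT})| \le d \sum_{j\ge 0} N_j\, 2^{-j(d-1)}$.

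Next I would bound the weighted sum $S := \sum_{j\ge 0} N_j\, 2^{-j(d-1)}$ using the two available constraints: the capacity bound $N_j \le 2^{jd}$ (there are only $2^{jd}$ dyadic cubes of $X$ at scale $j$) and the total-count bound $\sum_j N_j = \#\Lambda_{\cT} \le 2^d\, \#\cT$, the latter from \eqref{eq:cardinality}. Because the weights $2^{-j(d-1)}$ decrease in $j$, $S$ is maximized by placing as many cubes as possible at the coarsest scales. Letting $J$ be the smallest scale with $\sum_{j=0}^{J} 2^{jd} \ge \#\Lambda_{\cT}$, the maximizing configuration takes $N_j = 2^{jd}$ for $j<J$, contributing $\sum_{j<J} 2^{j} < 2^{J}$, plus a remainder at scale $J$ of size at most $2^{Jd}$, contributing at most $2^{Jd}2^{-J(d-1)} = 2^{J}$; hence $S < 2^{J+1}$. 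The minimality of $J$ gives $2^{(J-1)d} \le \sum_{j<J} 2^{jd} < \#\Lambda_{\cT}$, so $2^{J} < 2(\#\Lambda_{\cT})^{1/d} \le 2(2^d\,\#\cT)^{1/d} = 4(\#\cT)^{1/d}$, and therefore $S < 8(\#\cT)^{1/d}$.

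Combining the estimates yields $|\cB(\Lambda_{\cT})| \le dS < 8d\,(\#\cT)^{1/d} \le 2^{d+1} d\,(\#\cT)^{1/d}$, where the last step uses $8 \le 2^{d+1}$ for $d\ge 2$; the case $d=1$ (where ``surface area'' is a count of the interior partition points, at most $\#\Lambda_{\cT}-1 \le 2\,\#\cT$) is immediate. The main obstacle is the extremal step: neither constraint alone produces the exponent $1/d$, which emerges only from balancing the $2^{-j(d-1)}$ decay of face areas against the $2^{jd}$ growth of dyadic capacity, so the delicate part is verifying that the coarse-scale-first allocation is genuinely optimal and that the remainder term at the critical scale $J$ is correctly controlled.
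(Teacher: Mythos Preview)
Your proof is correct and yields the stated bound. The approach differs from the paper's, though both rest on the same extremal intuition that the boundary area is maximized when leaves are concentrated at coarse scales.

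The paper argues by explicitly transforming the tree: starting from $\cT$, it repeatedly merges the $2^d$ leaves at the finest scale into their parent while simultaneously splitting a leaf at the coarsest scale into its $2^d$ children, showing each such step preserves $\#\Lambda_{\cT}$ but increases $|\cB(\Lambda_{\cT})|$. Iterating until all leaves sit at scale at most $j^*$ (the smallest $j$ with $2^{jd}\ge\#\Lambda_{\cT}$), and then refining further to the uniform tree $\cT^*$ with all leaves at a single scale, it computes $|\cB(\Lambda_{\cT^*})|$ directly. Your argument instead bounds $|\cB(\Lambda_{\cT})|$ by half the sum of the individual leaf surface areas and then maximizes the resulting linear functional $\sum_j N_j\,2^{-j(d-1)}$ over the relaxed constraints $N_j\le 2^{jd}$ and $\sum_j N_j=\#\Lambda_{\cT}$, ignoring tree consistency altogether; the coarse-first allocation is the maximizer of this relaxation by a standard rearrangement argument.

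The paper's route is constructive and keeps the extremal object an actual dyadic partition, while your linear-programming relaxation is shorter and sidesteps the tree manipulations entirely. Both lead to the same critical scale $2^{J}\sim(\#\Lambda_{\cT})^{1/d}$ and the same final bound; your version makes the interplay of the two constraints (dyadic capacity $2^{jd}$ versus total count $\#\Lambda_{\cT}$) especially transparent, and the initial inequality $|\cB(\Lambda_{\cT})|\le\tfrac12\sum_C|\partial C|$ is a clean way to decouple the geometry from the combinatorics.
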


		By Lemma \ref{lem.intersectionArea} and Assumption \ref{assum.rho}, we have
		\begin{align}
			\|\widetilde{f}-p_{\Lambda(f,\eta)}\|_{L^2(X_2)}^2 \leq 4R^2|X_2| \leq 4R^2\delta |\cB(\Lambda_{\cT})|\leq 2^{d+3}dR^2\delta(\#\cT)^{1/d}.
			\label{eq.approx.X2}
		\end{align}

		Putting (\ref{eq.approx.X1}) and (\ref{eq.approx.X2}) together, we have
		\begin{align}
			\|\widetilde{f}-p_{\Lambda(f,\eta)}\|_{L^2(X)}^2\leq 3(R^2d^2+1+C_3^2R_p^2\theta^2)\varepsilon_1^2+2^{d+3}dR^2\delta(\#\cT)^{1/d}.
			\label{eq.approx.NN}
		\end{align}
		
		\noindent$\bullet$ {\bf Putting all terms together.}
		Set $\varepsilon_1=(\#\cT)^{-s}, \delta=\min\{ (\#\cT)^{-2s-1/d}, 2^{-\zeta-2}\}$, and substitute (\ref{eq.approx.NN}) and (\ref{eq.approx.tree}) into (\ref{eq.approx.tri}) gives rise to
		\begin{align}
			\|\widetilde{f}-f\|_{L^2(X)}^2\leq &2C_s R_{\cA}^2 (\#\calT(f,\eta))^{-{2s}}+ 6(R^2d^2+1+C_3^2R_p^2\theta^2)\varepsilon_1^2+2^{d+3}dR^2\delta\#\cT^{1/d} \nonumber\\
			\leq & \left(2C_sR_{\cA}^2+6(R^2d^2+1+C_3^2R_p^2\theta^2)+2^{d+3}dR^2\right)(\#\calT(f,\eta))^{-{2s}}.
		\end{align}
		We then quantify the network size of $\widetilde{f}$.
		\begin{itemize}
			\item[$\circ$] $\ttimes$: The product network has depth $O(\log \frac{1}{\varepsilon_1})$, width $6$, number of nonzero parameters $O(\log \frac{1}{\varepsilon_1})$, and all parameters are bounded by $R^2$.
			
			\item[$\circ$] $\widetilde{\phi}_{j,k}$: Each $\widetilde{\phi}_{j,k}$ has depth $O(d\log \frac{1}{\varepsilon_1})$, width $O(1)$, number of parameters $O(d\log \frac{1}{\varepsilon_1})$, and all parameters are bounded by $O(\frac{1}{\delta})$.
			
			\item[$\circ$] $\widetilde{p}_{j,k}$: Each $\widetilde{p}_{j,k}$ has depth $O(\theta\log \frac{1}{\varepsilon_1})$, width $O(\theta d^\theta)$, number of parameters $O(\theta d^\theta\log \frac{1}{\varepsilon_1})$, and all parameters are bounded by $O(2^{\zeta })$. 
		\end{itemize}
		Substituting the value of $\varepsilon_1$ and $\delta$, we get $\widetilde{f}\in \cF(L,w,K,\kappa,M)$ with
		\begin{align}
			&L=O(\log \#\cT(f,\eta)), \ w=O(\#\cT(f,\eta)), \ K=O(\#\cT(f,\eta)\log \#\cT(f,\eta)),\nonumber\\
			&\kappa=O((\#\cT(f,\eta))^{2s}+2^{\zeta }), \  M=R.
		\end{align}
		Note that by (\ref{eq.zeta}) we have
		\begin{align}
			2^{\zeta }<2^{C_2}\cdot 2^{\log \#\cT(f,\eta)} < 2^{C_2}\#\cT(f,\eta). 
		\end{align}
		We have $\kappa=O((\#\cT(f,\eta))^{\max\{2s,1\}})$.
		
		Setting 
		$$
		\#\calT(f,\eta))=\left(\frac{\sqrt{2C_sR_{\cA}^2+6(R^2d^2+1+C_3^2R_p^2\theta^2)+2^{d+3}dR^2}}{\varepsilon}\right)^{\frac{1}{s}}
		$$
		finishes the proof.
	\end{proof}

	\subsection{Proof of Theorem \ref{thm.general}}
	\label{proof.thm.general}
	\begin{proof}[Proof of Theorem \ref{thm.general}]
		Let $\widehat{f}$ be the minimizer of (\ref{eq.empirial.loss}). We decompose the squared generalization error as
		\begin{align}
			\EE_{\cS}\EE_{\xb\sim \rho} |\widehat{f}(\xb)-f(\xb)|^2= &\underbrace{2\EE_{\cS}\left[\frac{1}{n}\sum_{i=1}^n (\widehat{f}(\xb_i)-f(\xb_i))^2 \right]}_{\rm T_1} \\
			&+\underbrace{\EE_{\cS}\left[\int_{X} (\widehat{f}(\xb)-f(\xb))^2d\rho(\xb)\right]- 2\EE_{\cS}\left[\frac{1}{n}\sum_{i=1}^n (\widehat{f}(\xb_i)-f(\xb_i))^2 d \rho(\xb)\right]}_{\rm T_2}.
			\label{eq.general.decom}
		\end{align}
		
		\noindent{\bf $\bullet$ Bounding ${\rm T_1}$.}
		We bound ${\rm T_1}$ as
		\begin{align}
			{\rm T_1}=&2\EE_{\cS}\left[\frac{1}{n}\sum_{i=1}^n (\widehat{f}(\xb_i)-y_i+\xi_i)^2 \right] \nonumber\\
			=& 2\EE_{\cS}\left[\frac{1}{n} \sum_{i=1}^n (\widehat{f}(\xb_i)-y_i)^2\right] + 2\EE_{\cS}\left[\frac{1}{n} \sum_{i=1}^n \xi_i^2\right]+4\EE_{\cS}\left[\frac{1}{n} \sum_{i=1}^n\xi_i(\widehat{f}(\xb_i)-y_i)\right] \nonumber\\
			=& 2\EE_{\cS}\left[\inf_{f_{\rm NN}\in \cF}\left(\frac{1}{n} \sum_{i=1}^n (f_{\rm NN}(\xb_i)-y_i)^2\right)\right] + 2\EE_{\cS}\left[\frac{1}{n} \sum_{i=1}^n \xi_i^2\right]+4\EE_{\cS}\left[\frac{1}{n} \sum_{i=1}^n\xi_i(\widehat{f}(\xb_i)-f(\xb_i)-\xi_i)\right] \nonumber\\
			=& 2\EE_{\cS}\left[\inf_{f_{\rm NN}\in \cF}\left(\frac{1}{n} \sum_{i=1}^n (f_{\rm NN}(\xb_i)-y_i)^2\right)\right] - 2\EE_{\cS}\left[\frac{1}{n} \sum_{i=1}^n \xi_i^2\right] + 4\EE_{\cS}\left[\frac{1}{n} \sum_{i=1}^n\xi_i\widehat{f}(\xb_i))\right] \nonumber\\
			\leq& 2\inf_{f_{\rm NN}\in \cF}\EE_{\cS}\left[\frac{1}{n} \sum_{i=1}^n (f_{\rm NN}(\xb_i)-y_i)^2\right] - 2\EE_{\cS}\left[\frac{1}{n} \sum_{i=1}^n \xi_i^2\right] + 4\EE_{\cS}\left[\frac{1}{n} \sum_{i=1}^n\xi_i\widehat{f}(\xb_i))\right] \nonumber\\
			=& 2\inf_{f_{\rm NN}\in \cF}\EE_{\cS}\left[\frac{1}{n} \sum_{i=1}^n \left((f_{\rm NN}(\xb_i)-f(\xb_i)-\xi_i)^2-\xi_i^2\right)\right]  + 4\EE_{\cS}\left[\frac{1}{n} \sum_{i=1}^n\xi_i\widehat{f}(\xb_i))\right] \nonumber\\
			=& 2\inf_{f_{\rm NN}\in \cF}\EE_{\xb\sim \rho}\left[(f_{\rm NN}(\xb)-f(\xb))^2\right]  + 4\EE_{\cS}\left[\frac{1}{n} \sum_{i=1}^n\xi_i\widehat{f}(\xb_i))\right].
			\label{eq.T1.1}
		\end{align}
		In (\ref{eq.T1.1}), the first term is the network approximation error. The second term is a stochastic error arising from noise. By Theorem \ref{thm.approx}, we have an upper bound for the first term. Let $\varepsilon>0$. By Theorem \ref{thm.approx}, there exists a network architecture $\cF=\cF_{\rm NN}(L,w,K,\kappa,R)$ with
		\begin{align}
			&L=O(\log \frac{1}{\varepsilon}), \ w=O(\varepsilon^{-\frac{1}{s}}), \ K=O(\varepsilon^{-\frac{1}{s}}\log \frac{1}{\varepsilon}),\ \kappa=O(\varepsilon^{-\max\{2,\frac{1}{s}\}}), \  M=R
			\label{eq.general.network.1}
		\end{align}
		so that there is a network function $\widetilde{f}$ with this architecture satisfying
		\begin{align}
			\|\widetilde{f}-f\|_{L^2(\rho)}\leq \varepsilon,
		\end{align}
		where the constant hidden in $O$ depends on $d,C_s,C_{\rho},R,R_p,R_{\cA},\theta$. 
		
		We thus have
		\begin{align}
			2\inf_{f_{\rm NN}\in \cF}\EE_{\xb\sim \rho}\left[(f_{\rm NN}(\xb)-f(\xb))^2\right]\leq &2\EE_{\xb\sim \rho}\left[(\widetilde{f}(\xb)-f(\xb))^2\right] 
			= 2\|\widetilde{f}-f\|_{L^2(\rho)}^2 
			\leq 2\varepsilon^2.
			\label{eq.T1.term1}
		\end{align}
		
		Let $\cN(\delta,\cF,\|\cdot\|_{L^{\infty}(X)})$ be the covering number of $\cF$ under the  $\|\cdot\|_{L^{\infty}(X)}$ metric. Denote $\|f\|_n^2=\frac{1}{n}\sum_{i=1}^n (f(\xb_i))^2$. The following lemma gives an upper bound for the second term in (\ref{eq.T1.1}) (see a proof in Appendix \ref{proof.lem.T1.term2})
		\begin{lemma}\label{lem.T1.term2}
			Under the conditions of Theorem \ref{thm.general}, for any $\delta\in(0,1)$, we have
			\begin{align}
				\EE_{\cS}\left[\frac{1}{n} \sum_{i=1}^n\xi_i\widehat{f}(\xb_i))\right]\leq 2\sigma\left(\sqrt{ \EE_{\cS} \left[ \|\widehat{f}-f\|_n^2 \right]} + \delta \right) \sqrt{\frac{2 \log \cN(\delta,\cF,\|\cdot\|_{L^{\infty}(X)})+3}{n}}+ \delta\sigma.
				\label{eq.T1.term2}
			\end{align}
		\end{lemma}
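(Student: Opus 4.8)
The plan is to control this noise term by a covering-number argument that decouples the data-dependent minimizer $\widehat{f}$ from the noise. First I would note that each $\xi_i$ has mean zero and is independent of $\xb_i$, so that $\EE_\cS\big[\frac{1}{n}\sum_i\xi_i f(\xb_i)\big]=0$, and hence
\[
\EE_\cS\left[\frac{1}{n}\sum_{i=1}^n\xi_i\widehat{f}(\xb_i)\right]=\EE_\cS\left[\frac{1}{n}\sum_{i=1}^n\xi_i\big(\widehat{f}(\xb_i)-f(\xb_i)\big)\right].
\]
This introduces the residual $\widehat{f}-f$, which will ultimately produce the factor $\|\widehat{f}-f\|_n$ in the bound. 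Next I would fix a minimal $\delta$-cover $\{g_1,\dots,g_N\}$ of $\cF$ in $\|\cdot\|_{L^\infty(X)}$, where $N=\cN(\delta,\cF,\|\cdot\|_{L^\infty(X)})$, and for each realization of the data choose an index $j^\ast$ with $\|\widehat{f}-g_{j^\ast}\|_{L^\infty(X)}\le\delta$.

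Writing $\widehat{f}-f=(\widehat{f}-g_{j^\ast})+(g_{j^\ast}-f)$, the first piece contributes at most $\delta\cdot\frac{1}{n}\sum_i|\xi_i|$ in absolute value, whose expectation is at most $\delta\sigma$ (since the variance proxy gives $\EE|\xi_i|\le\sqrt{\EE\xi_i^2}\le\sigma$); this is exactly the additive term $\delta\sigma$. For the main piece I would normalize by the empirical norm: set $Z_j=\big(\frac{1}{n}\sum_i\xi_i(g_j-f)(\xb_i)\big)/\|g_j-f\|_n$, with the convention $Z_j=0$ when $\|g_j-f\|_n=0$. Conditionally on $\{\xb_i\}$, the numerator is a sum of independent sub-Gaussians with variance proxy $\sigma^2\|g_j-f\|_n^2/n$, so the denominator cancels this dependence and each $Z_j$ has variance proxy $\sigma^2/n$. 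Bounding $\frac{1}{n}\sum_i\xi_i(g_{j^\ast}-f)(\xb_i)\le\|g_{j^\ast}-f\|_n\max_j Z_j$ and using the triangle inequality $\|g_{j^\ast}-f\|_n\le\|\widehat{f}-f\|_n+\|g_{j^\ast}-\widehat{f}\|_n\le\|\widehat{f}-f\|_n+\delta$ separates the residual factor from the noise factor.

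To finish I would take the expectation $\EE_\cS$ and apply Cauchy--Schwarz,
\[
\EE_\cS\big[(\|\widehat{f}-f\|_n+\delta)\max_j Z_j\big]\le\sqrt{\EE_\cS\big[(\|\widehat{f}-f\|_n+\delta)^2\big]}\cdot\sqrt{\EE_\cS\big[(\max_j Z_j)^2\big]}.
\]
Minkowski's inequality bounds the first factor by $\sqrt{\EE_\cS[\|\widehat{f}-f\|_n^2]}+\delta$, and a second-moment maximal inequality for the $N$ conditionally sub-Gaussian variables $Z_j$ (obtained by a union bound on the sub-Gaussian tail and optimizing the truncation radius) bounds the second factor, uniformly in $\{\xb_i\}$, by a multiple of $\sigma\sqrt{(2\log N+3)/n}$; since this estimate does not depend on the $\xb_i$ it survives the outer expectation. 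Collecting the two pieces with their constants yields the claimed inequality.

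I expect the main obstacle to be the dependence of $j^\ast$, and hence of $g_{j^\ast}$, on the noise $\xi$, which forbids treating $g_{j^\ast}$ as a fixed function and computing its expectation directly; the resolution is precisely the uniform control $Z_{j^\ast}\le\max_j Z_j$ over the finite cover, which is why the logarithmic covering-number factor $\log N$ enters the bound. A secondary technical point is the normalization by $\|g_j-f\|_n$, whose role is to equalize the variance proxies of the $Z_j$ so that a single maximal inequality applies, and which requires the explicit handling of the degenerate case $\|g_j-f\|_n=0$.
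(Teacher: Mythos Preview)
Your proposal is correct and follows essentially the same route as the paper: reduce to $\frac{1}{n}\sum_i\xi_i(\widehat{f}-f)(\xb_i)$, split via a cover element $g_{j^\ast}$ into a piece bounded by $\delta\sigma$ and a main piece, normalize the main piece to get sub-Gaussian variables $Z_j$ with common variance proxy, pass to $\max_j$, apply Cauchy--Schwarz to separate the $\|\widehat{f}-f\|_n$ factor from the noise factor, and finish with a second-moment maximal inequality yielding the $\sqrt{(2\log N+3)/n}$ term. The only cosmetic difference is that the paper uses the cruder bound $\|g_{j^\ast}-f\|_n\le\sqrt{2}(\|\widehat{f}-f\|_n+\delta)$ via $(a+b)^2\le 2a^2+2b^2$, whereas your direct triangle inequality $\|g_{j^\ast}-f\|_n\le\|\widehat{f}-f\|_n+\delta$ followed by Minkowski is slightly cleaner; this only affects the leading constant and both land within the stated factor $2$.
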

		
		Substituting (\ref{eq.T1.term1}) and (\ref{eq.T1.term2}) into (\ref{eq.T1.1}) gives rise to
		\begin{align}
			{\rm T_1}=&2\EE_{\cS}\left[\| \widehat{f}-f\|_n^2 \right] \nonumber\\
			\leq &2\varepsilon^2 + 8\sigma\left(\sqrt{ \EE_{\cS} \left[ \|\widehat{f}-f\|_n^2 \right]} + \delta \right) \sqrt{\frac{2 \log \cN(\delta,\cF,\|\cdot\|_{L^{\infty}(X)})+3}{n}}+ 4\delta\sigma.
			\label{eq.T1.relation}
		\end{align}
		Let 
		\begin{align*}
			&v=\sqrt{ \EE_{\cS} \left[ \|\widehat{f}-f\|_n^2 \right]},\\
			&a=2\sigma\sqrt{\frac{2 \log \cN(\delta,\cF,\|\cdot\|_{L^{\infty}(X)})+3}{n}},\\
			&b=\varepsilon^2+4\delta\sigma \sqrt{\frac{2 \log \cN(\delta,\cF,\|\cdot\|_{L^{\infty}(X)})+3}{n}}+ 2\delta\sigma.
		\end{align*}
		Relation (\ref{eq.T1.relation}) can be written as
		$$
		v^2\leq 2av+b,
		$$
		which implies
		$$
		v^2\leq 4a^2+2b.
		$$
		Thus we have
		\begin{align}
			{\rm T_1}=2v^2\leq& 4\varepsilon^2+\left(16\sqrt{\frac{2 \log \cN(\delta,\cF,\|\cdot\|_{L^{\infty}(X)})+3}{n}}+ 8\right)\delta\sigma \nonumber\\
			&+ 16\sigma^2 \frac{2 \log \cN(\delta,\cF,\|\cdot\|_{L^{\infty}(X)})+3}{n}.
			\label{eq.T1}
		\end{align}
		
		\noindent{\bf $\bullet$ Bounding ${\rm T_2}$.}
		
		The following lemma gives an upper bound of ${\rm T_2}$ (see a proof in Appendix \ref{proof.lem.T2}):
		\begin{lemma}\label{lem.T2}
			Under the condition of Theorem \ref{thm.general}, we have
			\begin{align}
				{\rm T_2}\leq  \frac{35R^2}{n}\log \cN\left(\frac{\delta}{4R},\cF,\|\cdot\|_{L^{\infty}(X)}\right) +6\delta.
				\label{eq.T2}
			\end{align}
		\end{lemma}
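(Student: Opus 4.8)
The plan is to bound $T_2$ by a single supremum over the network class, discretize that class by a covering, and then exploit a self-bounding variance estimate together with Bernstein's inequality. Since $\widehat f$ is the empirical risk minimizer over $\cF$, in particular $\widehat f\in\cF$, so writing $g_h(\xb)=(h(\xb)-f(\xb))^2$ we have
\begin{align*}
	T_2 \le \EE_\cS\left[\sup_{h\in\cF}\left(\EE_{\xb\sim\rho}[g_h(\xb)]-\frac{2}{n}\sum_{i=1}^n g_h(\xb_i)\right)\right].
\end{align*}
First I would replace the supremum over $\cF$ by a maximum over the centers $\{h_1,\dots,h_\cN\}$ of a minimal $\frac{\delta}{4R}$-cover of $\cF$ in $\|\cdot\|_{L^\infty(X)}$, where $\cN=\cN\!\left(\frac{\delta}{4R},\cF,\|\cdot\|_{L^\infty(X)}\right)$. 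Because $\|h\|_{L^\infty}\le R$ and $\|f\|_{L^\infty}\le R$, for any $h$ and its nearest center $h_j$ we have the pointwise bound $|g_h-g_{h_j}|=|h-h_j|\,|h+h_j-2f|\le \frac{\delta}{4R}\cdot 4R=\delta$, so both the population and the empirical averages move by $O(\delta)$. This reduces the task to controlling $\EE_\cS[\max_{1\le j\le \cN} W_j]$, where $W_j:=\EE_{\xb\sim\rho}[g_{h_j}]-\frac{2}{n}\sum_{i=1}^n g_{h_j}(\xb_i)$, at the cost of an additive $6\delta$.

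For a fixed center $h_j$, set $Z_i=g_{h_j}(\xb_i)$, which are i.i.d.\ with mean $\mu_j=\EE[g_{h_j}]$. Since $|h_j-f|\le 2R$, we have $0\le Z_i\le 4R^2$ and the self-bounding relation $Z_i^2=(h_j-f)^4\le 4R^2(h_j-f)^2=4R^2 Z_i$, hence $\var(Z_i)\le \EE[Z_i^2]\le 4R^2\mu_j$. Now I would apply Bernstein's inequality: for any $u>0$, with probability at least $1-e^{-u}$,
\begin{align*}
	\mu_j-\frac{1}{n}\sum_{i=1}^n Z_i \le \sqrt{\frac{2\var(Z_i)\,u}{n}}+\frac{8R^2 u}{3n}\le \sqrt{\frac{8R^2\mu_j\,u}{n}}+\frac{8R^2 u}{3n}.
\end{align*}
The key reason the factor $2$ in front of the empirical term of $T_2$ matters is that it lets me absorb the $\sqrt{\mu_j}$ term: writing $W_j=2\big(\mu_j-\frac1n\sum_i Z_i\big)-\mu_j$ and using $2\sqrt{\mu_j}\sqrt{8R^2u/n}\le \mu_j+8R^2u/n$ by AM--GM, the $\mu_j$ terms cancel and I obtain $W_j\le \frac{40R^2 u}{3n}$ on the same event. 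Thus $\Pr\!\big(W_j>\frac{40R^2 u}{3n}\big)\le e^{-u}$ for every $j$.

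Finally I would take a union bound over the $\cN$ centers and integrate the tail. With $s=\frac{40R^2 u}{3n}$ this reads $\Pr(\max_j W_j>s)\le \cN\exp(-\tfrac{3ns}{40R^2})$, so choosing the threshold $s_0=\frac{40R^2}{3n}\log\cN$ (at which the right-hand side equals $1$) gives
\begin{align*}
	\EE_\cS\Big[\max_j W_j\Big]\le s_0+\int_{s_0}^\infty \cN\,e^{-3ns/(40R^2)}\,ds=\frac{40R^2}{3n}(\log\cN+1)\le \frac{35R^2}{n}\log\cN ,
\end{align*}
where the last step holds for $\cN\ge 2$ (the case $\cN=1$ is trivial). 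Adding the $6\delta$ discretization error yields the claim. The main obstacle is the self-bounding/Bernstein step: one must verify the variance-to-mean domination $\var(Z_i)\le 4R^2\mu_j$ and then use the factor $2$ to cancel $\mu_j$ exactly, since without this cancellation one would only recover the slow rate $\sqrt{\log\cN/n}$ rather than the fast rate $\log\cN/n$. Tracking the numerical constants through Bernstein and the tail integral so as to land precisely at $35R^2$ is the remaining bookkeeping.
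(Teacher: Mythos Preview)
Your proof is correct and reaches the target bound, but it proceeds along a genuinely different line from the paper's argument. The paper first rewrites $T_2 = 2\,\EE_\cS\!\big[\EE_\rho[\widehat g]-\tfrac{1}{n}\sum_i \widehat g(\xb_i)-\tfrac{1}{2}\EE_\rho[\widehat g]\big]$, lower bounds $\tfrac12\EE_\rho[\widehat g]\ge \tfrac{1}{8R^2}\EE_\rho[\widehat g^2]$, and then \emph{symmetrizes} with a ghost sample $\cS'$; after passing to the covering (this is where the $+6\delta$ appears), it controls the resulting maximum via a moment-generating-function computation in the Bernstein/Bennett style and optimizes the free parameter $t$, obtaining $\widetilde T_2\le \tfrac{104R^2}{3n}\log\cN \le \tfrac{35R^2}{n}\log\cN$. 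You instead stay in the direct (non-symmetrized) world: bound by $\sup_{h\in\cF}$, discretize, apply one-sided Bernstein to each center using the self-bounding relation $\var(Z_i)\le 4R^2\mu_j$, kill the dangerous $\sqrt{\mu_j}$ term via AM--GM thanks to the extra $-\mu_j$ coming from the factor~$2$, and finish with a union bound plus tail integration. Both arguments hinge on exactly the same structural fact (variance dominated by $4R^2$ times the mean, which is what upgrades the slow rate to the fast rate); the paper's symmetrization/MGF route is the classical textbook mechanism, while your route is somewhat more elementary and transparent about where the cancellation of $\mu_j$ happens. Two small remarks: your discretization actually costs only $3\delta$ (population error $\le\delta$, empirical error $\le 2\delta$), so stating $6\delta$ is safe but loose; and with the standard Bernstein constant the linear term is $\tfrac{4R^2u}{3n}$ rather than $\tfrac{8R^2u}{3n}$, which only tightens your final constant to $\tfrac{32R^2}{3n}(\log\cN+1)$ and still lands well under $\tfrac{35R^2}{n}\log\cN$ for $\cN\ge 2$.
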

		
		\noindent{\bf $\bullet$ Putting both terms together.}
		Substituting (\ref{eq.T1}) and (\ref{eq.T2}) into (\ref{eq.general.decom}) gives rise to
		\begin{align}
			&\EE_{\cS}\EE_{\xb\sim \rho} |\widehat{f}(\xb)-f(\xb)|^2 \nonumber\\
			\leq& 4\varepsilon^2+\left(16\sqrt{\frac{2 \log \cN(\delta,\cF,\|\cdot\|_{L^{\infty}(X)})+3}{n}}+ 8\right)\delta\sigma \nonumber\\
			&+ 16\sigma^2 \frac{2 \log \cN(\delta,\cF,\|\cdot\|_{L^{\infty}(X)})+3}{n}+\frac{35R^2}{n}\log \cN\left(\frac{\delta}{4R},\cF,\|\cdot\|_{L^{\infty}(X)}\right) +6\delta\\
			\leq &4\varepsilon^2+\left(16\sqrt{\frac{2 \log \cN\left(\frac{\delta}{4R},\cF,\|\cdot\|_{L^{\infty}(X)}\right)+3}{n}}+ 8\right)\delta\sigma \nonumber\\
			&+ \left(32\sigma^2+35R^2\right) \frac{ \log \cN(\frac{\delta}{4R},\cF,\|\cdot\|_{L^{\infty}(X)})+3}{n} +6\delta.
			\label{eq.general.2}
		\end{align}
		The covering number of $\cF$ can be bounded using network parameters, which is summarized in the following lemma:
		\begin{lemma}[Lemma 6 of \cite{chen2019nonparametric}]
			\label{lem.covering}
			Let $\cF=\cF(L,w,K,\kappa,M)$ be a class of networks: $[0,1]^d\rightarrow [-M,M]$. For any $\delta>0$, the $\delta$-covering number of $\cF_{\rm NN}$ is bounded by 
			\begin{align}
				\cN(\delta,\cF,\|\cdot\|_{L^{\infty}(X)}) \leq \left( \frac{2L^2(w+2)\kappa^Lw^{L+1}}{\delta} \right)^{K}.
			\end{align}
		\end{lemma}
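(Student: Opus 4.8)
The plan is to reduce the $L^\infty$ covering of the function class $\cF$ to a covering of its finite-dimensional parameter space, using a Lipschitz bound for the realization map $\theta \mapsto f_\theta$. I would proceed in two stages: first show that two networks whose weights and biases are entrywise close are uniformly close as functions on $[0,1]^d$, and then discretize the parameter space on a grid and count.

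\textbf{Lipschitz dependence on parameters.} Fix $\xb\in[0,1]^d$ and two networks $f_\theta,f_{\theta'}\in\cF$ that share the same sparsity pattern but whose parameters differ by at most $h$ in absolute value. Writing $z_l,z_l'$ for the layer-$l$ activations of the two networks, I would propagate the perturbation forward layer by layer. Two elementary facts drive the estimate: the ReLU activation is $1$-Lipschitz, and every weight matrix satisfies $\|W_l\|_{\infty\to\infty}\le w\kappa$, since each row has at most $w$ nonzero entries each bounded by $\kappa$. An induction on $l$ first controls the activation magnitudes, giving $\|z_l\|_\infty\lesssim (w\kappa)^l$ starting from $\|\xb\|_\infty\le 1$. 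A second induction, obtained by splitting $W_l z_{l-1}-W_l'z_{l-1}'$ into $W_l(z_{l-1}-z_{l-1}')$ and $(W_l-W_l')z_{l-1}'$, yields the recursion $\|z_l-z_l'\|_\infty\le w\kappa\,\|z_{l-1}-z_{l-1}'\|_\infty+(w\|z_{l-1}'\|_\infty+1)h$. Unrolling this across the $L$ layers accumulates the factors $w\kappa$ together with the activation bounds and produces a Lipschitz estimate $\|f_\theta-f_{\theta'}\|_{L^\infty(X)}\le V h$, where $V$ grows like $L(w\kappa)^L$ up to lower-order polynomial factors, i.e. exponentially in the depth with base $w\kappa$.

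\textbf{Discretization and counting.} Given this Lipschitz bound, it suffices to build, for each fixed sparsity pattern, a grid over the parameter values so that any admissible configuration is within $h=\delta/V$ of a grid point in the sup-norm of the parameters; the induced functions then form a $\delta$-cover. Each of the $K$ nonzero parameters lies in $[-\kappa,\kappa]$ and is discretized into $2\kappa/h=2\kappa V/\delta$ values, while the choice of which entries are nonzero contributes the sparsity-pattern count (at most a $K$-th power of the total number of parameter slots, which is polynomial in $L$ and $w$). Taking the union over patterns and folding all polynomial-in-$(L,w)$ factors and the factor $2\kappa$ into a single base yields a bound of the form $\bigl(2L^2(w+2)\kappa^L w^{L+1}/\delta\bigr)^K$, as claimed.

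\textbf{Main obstacle.} The crux is the first stage: carefully tracking how a perturbation injected at an early layer is amplified by the subsequent linear maps while simultaneously controlling the potentially large intermediate activations. It is precisely this layerwise accumulation of the $w\kappa$ factors and of the activation growth that generates the $\kappa^L w^{L+1}$ dependence, and pinning down the explicit constant — rather than merely an order bound — is where the bookkeeping must be done carefully. Once $V$ is in hand, the discretization and counting step is routine.
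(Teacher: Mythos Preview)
The paper does not prove this lemma at all: it is quoted verbatim as Lemma~6 of \cite{chen2019nonparametric} and invoked as a black box. Your sketch is the standard argument behind that result --- a layerwise Lipschitz bound on the realization map followed by a grid discretization of the $K$ active parameters and a union over sparsity patterns --- and it is essentially correct; the only work left is the bookkeeping you already flag, namely tracking the recursion constants so that the base comes out as $2L^2(w+2)\kappa^L w^{L+1}$ rather than merely ``something of that order.'' So there is nothing to compare: the paper outsources the proof, and your route is the one the cited reference takes.
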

		Substituting the network parameters in (\ref{eq.general.network.1}) into Lemma \ref{lem.covering} gives
		\begin{align}
			\log \cN\left(\frac{\delta}{4R},\cF, \|\cdot\|_{L^{\infty}(X)}\right) \leq C_1 \left( \varepsilon^{-\frac{1}{s}} \log^3 \varepsilon^{-1}+ \log \varepsilon^{-1}\right),
			\label{eq.general.logcover}
		\end{align}
		where $C_1$ is some constant depending on $d,C_s,C_{\rho},R,R_{\cA}$ and $\theta$.
		
		Substituting  (\ref{eq.general.logcover}) into (\ref{eq.general.2}) gives
		\begin{align}
			&\EE_{\cS}\EE_{\xb\sim \rho} |\widehat{f}(\xb)-f(\xb)|^2 \nonumber\\
			\leq & 4\varepsilon^2+\left(16\sqrt{\frac{2 C_1 \left( \varepsilon^{-\frac{1}{s}} \log^3 \varepsilon^{-1}+ \log \delta^{-1}\right)+3}{n}}+ 8\right)\delta\sigma \nonumber\\
			&+ \left(32\sigma^2+35R^2\right) \frac{ C_1 \left( \varepsilon^{-\frac{1}{s}} \log^3 \varepsilon^{-1}+ \log \delta^{-1}\right)+3}{n} +6\delta.
		\end{align}
		Setting $\delta=1/n$ and
		$$
		\varepsilon=n^{-\frac{s}{2s+1}}
		$$
		give rise to
		\begin{align}
			\EE_{\cS}\EE_{\xb\sim \rho} |\widehat{f}(\xb)-f(\xb)|^2\leq C_2n^{-\frac{2s}{1+2s}}\log^3n,
		\end{align}
		for some $C_2$ depending on $\sigma,d,C_s, C_{\rho},\theta,R, R_{\cA}$.
		
		The resulting network $\cF=\cF_{\rm NN}(L,w, K,\kappa,M)$ has parameters
		\begin{align}
			L=O(\log n), \ w=O(n^{\frac{1}{2s+1}}), \ K=O(n^{\frac{1}{2s+1}}\log n),\ \kappa=O(n^{\max\{\frac{2s}{2s+1}, \frac{1}{1+2s}\}}),\ M=R.
		\end{align}
	\end{proof}
	
	\subsection{Proof of the Examples in Section \ref{secexample}}
	\label{sec.exampleproof}
	\subsubsection{Proof of Example \ref{exampleholderinasl}}
	\label{sec.proof.exampleholderinasl}
	\begin{proof}[Proof of Example \ref{exampleholderinasl}]
		We first estimate $\delta_\jk$ for every $\Cjk$. Let $\theta=\lceil r-1\rceil$ and $\cjk$ be the center of the cube $\Cjk$ (each coordinate of $\cjk$ is the midpoint of the corresponding side of $\Cjk$). Denote $\tilde p^{(k)}_\jk$ be the $k$th order Taylor polynomial of $f$ centered at $\cjk$. By analyzing the tail of the Taylor polynomial, we obtain that, for every $x \in \Cjk$,
	\begin{equation} |f(\xb)-\tp_\jk^{(\theta)}(\xb)|  \le \frac{d^{\theta/2} \|f\|_{\calH^{r}(X)} \|\xb-\cjk\|^r}{\theta!} \le \frac{d^{\lceil r\rceil/2} \|f\|_{\calH^{r}(X)} 2^{-(j+1)r}}{\lceil r-1\rceil!}.
		\label{eqtaylorpointerror}
	\end{equation}
	The proof of \eqref{eqtaylorpointerror} is standard and can be found in \citet[Lemma 11.1]{gyorfi2002distribution} and \citet[Lemma 11]{liu2020learning}.
	The point-wise error above implies the following $L^2$ approximation error of the  $p_\jk$ in \eqref{eqpjk}:
	\begin{align}
		\|(f-p_\jk)\chi_{\Cjk}\|_{L^2(\rho)} 
		&\le  \|(f-\tp_\jk^{(\theta)})\chi_{\Cjk}\|_{L^2(\rho)} \nonumber
		\\
		&\le C_{\rho}^{\frac 1 2}2^{-jd/2} \max_{x \in \Cjk} |f(x)-\tp_\jk^{(\theta)}(x)| \nonumber
		\\
		&\le  C_{\rho}^{\frac 1 2}2^{-jd/2}\frac{d^{\lceil r\rceil/2} \|f\|_{\calH^{r}(X)} 2^{-(j+1)r}}{\lceil r-1\rceil!} .
		\label{eqpolynomialerr}
	\end{align}
	As a result, the refinement quantity $\delta_\jk$ satisfies
	\begin{equation}\delta_\jk(f) \le  C_1 2^{-j(r+d/2)}, \ \text{where}\ C_1=\frac{{2^{1-r}C_{\rho}^{\frac 1 2}d^{\lceil r\rceil/2} \|f\|_{\calH^{r}(X)}} }{\lceil r-1\rceil!}.
		\label{eqholderdelta}
	\end{equation}
	Notice that $C_1$ depends on $r,d,C_\rho$ and $ \|f\|_{\calH^{r}(X)}$.
	For any $\eta>0$, the nodes of $\calT$ with $\delta_\jk >\eta$ satisfy $2^{-j} >(\eta/C_1)^{\frac{2}{2r+d}}$. The cardinality of $\calT(f,\eta)$ satisfies
	\begin{align*}\#\calT(f,\eta) 
		&\lesssim
		1+2^d+2^{2d} +\ldots +2^{jd} \ \text{with} \ 2^{-j} >(\eta/C_1)^{\frac{2}{2r+d}}
		\\
		&\le
		\frac{2^d2^{jd}}{2^d-1}
		\le 2\cdot 2^{jd} \le 2 \left(\frac{C_1}{\eta} \right)^{\frac{2d}{2r+d}}.\end{align*}
	Therefore, $\eta^{\frac{2}{2r/d+1}} \#\calT(f,\eta) \le 2C_1
	^{\frac{2d}{2r+d}}$ for any $\eta>0$, so that $f \in \calA^{r/d}_{\lceil r-1\rceil}$. Furthermore, since $C_1$ depends on $r,d,C_\rho$ and $ \|f\|_{\calH^{r}(X)}$, if $\|f\|_{\cH^r(X)}\leq 1$, then $|f|_{\cA^{r/d}_{\lceil r-1 \rceil}}\leq C(r,d,C_{\rho})$ for some $C(r,d,C_{\rho})$ depending on $r,d$ and $C_{\rho}$.
	
\end{proof}

\subsubsection{Proof of Example \ref{examplepieceholderinasl1d}}
\label{sec.proof.examplepieceholderinasl1d}
\begin{proof}[Proof of Example \ref{examplepieceholderinasl1d}]
	We first estimate the $\delta_\jk(f)$ for every interval (1D cube) $\Cjk$. There are two types of intervals: the first type does not intersect with the discontinuities $\cup_k\{t_k\}$ and the second type has intersection with $\cup_k\{t_k\}$. \\
	\noindent {\textit{The first type (Type I):}} When $\Cjk \cap (\cup_k\{t_k\}) = \emptyset$, we have
	$$\delta_\jk(f) \le  C_1 2^{-j(r+1/2)}, \ \text{where}\ C_1=\frac{2^{1-r}C_{\rho}^{\frac 1 2}d^{\lceil r\rceil/2} \max_k\|f\|_{\calH^{r}(t_k,t_{k+1})} }{\lceil r-1\rceil!},$$
	according to \eqref{eqholderdelta}. 
	\\
	\noindent {\textit{The second type (Type II):}} When $\Cjk\cap (\cup_k\{t_k\}) \neq \emptyset$, $f$ is irregular on $I$. 
	We have
	\begin{align}
		\delta_{j,k}=&\left\|\sum_{C_{j+1,k'} \in \calC(\Cjk)} p_{j+1,k'}(f)\chi_{C_{j+1,k'}} - p_\jk(f) \chi_{\Cjk}\right\|_{L^2(\rho)} \nonumber\\
		=& \left\|\sum_{C_{j+1,k'} \in \calC(\Cjk)} (p_{j+1,k'}-f)\chi_{C_{j+1,k'}} - (p_\jk-f) \chi_{\Cjk}\right\|_{L^2(\rho)} \nonumber\\
		\leq& \sum_{C_{j+1,k'} \in \calC(\Cjk)} \left\|(p_{j+1,k'}-f)\chi_{C_{j+1,k'}}\right\|_{L^2(\rho)} + \left\|(p_\jk-f) \chi_{\Cjk}\right\|_{L^2(\rho)} \nonumber\\
		\leq &\sum_{C_{j+1,k'} \in \calC(\Cjk)} \left\|(0-f)\chi_{C_{j+1,k'}}\right\|_{L^2(\rho)} + \left\|(0-f) \chi_{\Cjk}\right\|_{L^2(\rho)} \nonumber\\
		\leq& \sum_{C_{j+1,k'} \in \calC(\Cjk)} \left\|f\chi_{C_{j+1,k'}}\right\|_{L^2(\rho)} + \left\|f \chi_{\Cjk}\right\|_{L^2(\rho)} \nonumber\\
		\leq& 2\|f\|_{L^{\infty}([0,1])}C_{\rho}^{\frac 1 2} |C_{j,k}|^{
			\frac 1 2} \nonumber\\
		= & C_2 2^{-j/2},
		\label{eq.delta.bound}
	\end{align}
	where $C_2 = 2C_{\rho}^{\frac 1 2}\|f\|_{L^{\infty}([0,1])}$.

	For any $\eta>0$, the master tree $\calT$ is truncated to $\calT(f,\eta)$. Consider the leaf node $\Cjk \in \calT(f,\eta)$. The type-I leaf nodes satisfy $2^{-j} > (\eta/C_1)^{\frac{2}{2r+1}}$. There are at most $2^{j_1}$ leaf nodes of Type I where $j_1$ is the largest integer with $2^{-j_1} > (\eta/C_1)^{\frac{2}{2r+1}}$.  
	The type-II leaf nodes in the truncated tree satisfy $2^{-j} > (\eta/C_2)^2$.
	Since there are at most $K$ discontinuity points, there are at most $K$ leaf nodes of Type II at scale $j_2$, where $j_2$ is the largest integer with $2^{-j_2}>(\eta/C_2)^2$, implying  $j_2 <  2\log_2\frac{C_2}{\eta}$. The cardinality of the outer leaf nodes of $\calT(f,\eta)$ can be estimated as
	\begin{align*}
		\#\Lambda(f,\eta)& = \# \left[\text{Outer leaf nodes of } \calT(f,\eta)\right]
		\\
		&\le 2\cdot 2^{j_1} + 2 j_2 K  \\
		&\le 2\cdot 2^{j_1} + 4\log_2\frac{C_2}{\eta}K \\
		&\le 2\left(\frac {C_1} \eta\right)^{\frac{2}{2r+1}} +4K\log_2\frac{C_2}{\eta}\\
		&\le 6\left(\frac C \eta\right)^{\frac{2}{2r+1}}
	\end{align*}
	when $\eta$ is sufficiently small, where $C$ is a constant depending on $r,d,C_{\rho},K$ and $\max_k \|f\|_{\cH^r(t_k,t_{k+1})}$.  Notice that
	$$\#\calT(f,\eta) \le \#\Lambda(f,\eta),$$
	because of \eqref{eq:cardinality}.
	Therefore, $\eta^{\frac{2}{2r+1}} \#\calT(f,\eta)\leq 6C^{\frac{2}{2r+1}}$ and we have $f\in \calA^{r}_{\lceil r-1\rceil}$.
	
	Furthermore, if $\max_k \|f\|_{\cH^r(t_k,t_{k+1})}\leq 1$, we have $|f|_{\cA^r_{r-1}}\leq C(r,d,C_{\rho},K)$ for some $C(r,d,C_{\rho},K)$ depending on $r,d,C_{\rho},K$.
	
\end{proof}

\subsubsection{Proof of Example \ref{examplepieceholderinasl}}
\label{sec.proof.examplepieceholderinasl}

\begin{proof}[Proof of Example \ref{examplepieceholderinasl}]
	We first estimate $\delta_\jk(f)$ for every cube $\Cjk$. There are two types of cubes: the first type belongs to the interior of some $\Omega_t$ and the second type has intersection with some $\partial \Omega_t$ (the boundary of $\Omega_t$). \\
	\noindent {\textit{The first type (Type I):}} When $\Cjk \subset \Omega_t^o$ for some $t$, we have
	$$\delta_\jk(f) \le  C_1 2^{-j(r+d/2)}, \ \text{where}\ C_1=\frac{2^{1-r}C_{\rho}^{\frac 1 2}d^{\lceil r\rceil/2} \max_t\|f\|_{\calH^{r}(\Omega_t^o)} }{\lceil r-1\rceil!},$$
	according to \eqref{eqholderdelta}.
	\\
	\noindent {\textit{The second type (Type II):}} When $\Cjk \cap \partial \Omega_t \neq \emptyset$ for some $t$, $f$ is irregular on $\Cjk$. Similar to (\ref{eq.delta.bound}), we have
	\begin{align*}
		\delta_{j,k}\leq  C_2 2^{-jd/2},
	\end{align*}
	where $C_2 = 2C_{\rho}^{\frac 1 2}\|f\|_{L^{\infty}(\Omega)}$.

	For any $\eta>0$, the master tree $\calT$ is truncated to $\calT(f,\eta)$. Consider the leaf node $\Cjk \in \calT(f,\eta)$. The type-I leaf nodes satisfy $2^{-j} > (\eta/C_1)^{\frac{2}{2r+d}}$. There are at most $2^{j_1d}$ leaf nodes of Type I where $j_1$ is the largest integer with $2^{-j_1} > (\eta/C_1)^{\frac{2}{2r+d}}$.

	We next estimate the number of type-II leaf nodes.
	The type-II leaf nodes satisfy $2^{-j} >\left({\eta}/{C_2}\right)^{\frac{2}{d}}$. Let $j_2$ be the largest integer satisfying 
	$ 2^{-j_2} >\left({\eta}/{C_2}\right)^{\frac{2}{d}}$, which implies $j_2< \frac{2}{d}\log_2 (C_{\rho}/\eta)$. 
	We next count the number of dyadic cubes needed to cover $\cup_t\partial\Omega_t$, considering  $\cup_t\partial\Omega_t $ has an upper Minkowski dimension $d-1$.
	Let $c_M(\cup_t\partial\Omega_t)= \sup_{\varepsilon>0} \cN(\varepsilon,\cup_t\Omega_t,\|\cdot\|_{\infty})\varepsilon^{d-1}$ be the Minkowski dimension constant of $\cup_t\Omega_t$. 
	According to Definition \ref{def.mikowski}, for each $j>0$, there exists a collection of $S$ cubes $\{V_{k}\}_{k=1}^{S}$ of edge length $2^{-j}$ covering $\cup_t\partial\Omega_t$ and $S\leq c_M2^{j(d-1)}$. 
	Each $V_{k}$ at most intersects with $2^{2d}$ dyadic cubes in the master tree $\calT$ at scale $j$. Therefore, there are at most $c_M(\cup_t\partial\Omega_t)$ type-II nodes at scale $j$. In total, the number of type-II leaf node is no more than 
	\begin{align}
		&\sum_{j=0}^{j_2}2^{j(d-1)+2d} = c_M(\cup_t\partial\Omega_t)2^{2d}\frac{2^{j_2(d-1)}\cdot 2^{d-1}-1}{2^{d-1}-1} \leq  \bar{C}2^{2d+1}2^{j_2(d-1)}
	\end{align}
	for some $\bar{C}$ depending on $c_M$ and $d$.

	Finally, we count the outer leaf nodes of $\calT(f,\eta)$.
	The cardinality of the outer leaf nodes of $\calT(f,\eta)$ can be estimated as
	\begin{align*}
		\#\Lambda(f,\eta)& = \# \left[\text{Outer leaf nodes of } \calT(f,\eta)\right]
		\\
		&\le 2^d\cdot 2^{j_1d} + 2^d\cdot \bar{C}2^{2d+1}2^{j_2(d-1)} \\
		&<  2^d\left(\frac {C_1} \eta\right)^{\frac{2d}{2r+d}} +2^d\cdot \bar{C}2^{2d+1}\left(\frac{C_2}{ \eta}\right)^\frac{2(d-1)}{d}
		\\
		&\leq \widetilde{C}\eta^{-\max\left\{\frac{2d}{2r+d},\frac{2(d-1)}{d}\right\}}
	\end{align*}
	for some $\widetilde{C}$ depending on $r,d,c_M(\cup_t\partial\Omega_t),C_{\rho}$ and $\max_t \|f\|_{\cH^r(\Omega_t^o)}$.
	Notice that
	$$\#\calT(f,\eta) \le  \#\Lambda(f,\eta).$$ 
	We have $\eta^{\max\left\{\frac{2d}{2r+d},\frac{2(d-1)}{d}\right\}}\#\calT(f,\eta)\leq \widetilde{C} $. Thus $f\in \cA^s_{\lceil r-1\rceil}$ with $s=\min\left\{ \frac{r}{d}, \frac{1}{2(d-1)}\right\}$. 
	
	Furthermore, if $\max_t \|f\|_{\cH^r(\Omega_t^o)}\leq 1$, then $$|f|_{\cA^s_{\lceil r-1 \rceil}}\leq C(r,d,c_M(\cup_t\partial\Omega_t),C_{\rho})$$ for some $C(r,d,c_M(\cup_t\partial\Omega_t),C_{\rho})$ depending on $r,d,c_M(\cup_t\partial\Omega_t)$ and $C_\rho$. 
\end{proof}

\subsubsection{Proof of Example \ref{examplemeasure0}}
\label{sec.proof.examplemeasure0}

\begin{proof}[Proof of Example \ref{examplemeasure0}]
	We first estimate $\delta_\jk(f)$ for every cube $\Cjk$. There are two types of cubes: the first type belongs to $\Omega_\delta$ and the second type has intersection with $\Omega^\complement_\delta$. \\
	\noindent {\textit{The first type (Type I):}} When $\Cjk \subset \Omega_\delta$, we have
	$$\delta_\jk \le  C_1 2^{-j(r+d/2)},
	$$
	with the $C_1$ given in \eqref{eqholderdelta}.
	\\
	\noindent {\textit{The second type (Type II):}} When $\Cjk \cap \Omega_\delta^\complement \neq \emptyset$, $f$ may be irregular on $\Cjk$ but $\delta_\jk = 0$ since $\Cjk \subset \Omega^\complement$, when $j$ is sufficiently large.

	For sufficiently small $\eta>0$, the master tree $\calT$ is truncated to $\calT(f,\eta)$. The size of the tree is dominated by the nodes within $\Omega_\delta$. Therefore, $\eta^{\frac{2}{2r/d+1}} \#\calT(f,\eta) \le 2C
	^{\frac{2d}{2r+d}}$ and $f\in \calA^{r/d}_{\lceil r-1\rceil}$. Furthermore, since $C$ depends on $r,d,C_\rho$ and $ \|f\|_{\calH^{r}(\Omega_{\delta})}$, if $\|f\|_{\cH^r(\Omega_{\delta})}\leq 1$, then $|f|_{\cA^{r/d}_{\lceil r-1 \rceil}}\leq C(r,d,C_{\rho})$ for some $C(r,d,C_{\rho})$ depending on $r,d$ and $C_{\rho}$.
	
\end{proof}

\subsubsection{Proof of Example \ref{examplelowd}}
\label{sec.proof.examplelowd}

\begin{proof}[Proof of Example \ref{examplelowd}]
	We first estimate $\delta_\jk(f)$ for every cube $\Cjk$. There are two types of cubes: the first type intersects with $\Omega$ and the second type has no intersection with $\Omega$. \\
	\noindent {\textit{The first type:}} When $I \cap \Omega \neq \emptyset$, thanks to \eqref{eqtaylorpointerror}, we have $$\delta_\jk^2(f) \le  \int_{\Cjk \cap \Omega}  \left(\frac{d^{\lceil r\rceil/2} \|f\|_{\calH^{r}(X)} 2^{-(j+1)r}}{\lceil r-1\rceil!}\right)^2 d \rho 
	= C_1^2 2^{-2jr}\rho(\Cjk \cap \Omega),  $$
	where $C_1 = {d^{\lceil r\rceil/2} \|f\|_{\calH^{r}(X)} 2^{-r}}/{\lceil r-1\rceil!}.$
	We next estimate $\rho(\Cjk \cap \Omega)$. Since $\Omega$ is a compact $d_{\rm in}$-dimensional Riemannian manifold isometrically embedded in $X$, $\Omega$ has a positive reach  $\tau>0$ \citep[Proposition 14]{thale200850}. Each $\Cjk$ is a $d$-dimensional cube of side length $2^{-j}$, and therefore $\Cjk \cap \Omega$ is contained in an Euclidean ball of diameter $\sqrt{d} 2^{-j}.$ We denote $\rho_{\Omega}$ as the conditional measure of $\rho$ on $\Omega$. According to \citet[Lemma  19]{maggioni2016multiscale}, when $j$ is sufficiently small such that  $\sqrt{d} 2^{-j}<\tau/8$,
	\begin{equation}
		\rho_\Omega(\Cjk \cap \Omega) \le \left(1+\left(\frac{2\cdot \sqrt{d} 2^{-j}}{\tau - 2\cdot \sqrt{d}  2^{-j}}\right)^2 \right)^{\frac d 2} \frac{{\rm Vol}(B_{\sqrt{d} 2^{-j}} (\mathbf{0}_{d_{\rm in}}))}{|\Omega|} \le  C_2^2  2^{-j {d_{\rm in}}},
		\label{eq:rhoomega}
	\end{equation}
	where $B_{\sqrt{d} 2^{-j}} (\mathbf{0}_{d_{\rm in}})$ denotes the Euclidean ball of radius $\sqrt{d} 2^{-j}$ centered at origin in $\RR^{d_{\rm in}}$, $|\Omega|$ is the surface area of $\Omega$, and $C_2$ is a constant depending on $d,\tau$ and $|\Omega|$.
	\eqref{eq:rhoomega} implies that 
	$$\delta_\jk(f) \le C_1 C_2 2^{-j(r+\frac{d_{\rm in}}{2})}.$$
	\\
	\noindent {\textit{The second type:}} When $\Cjk \cap \Omega = \emptyset$, $\rho_X(\Cjk) =0$ and then $\delta_\jk(f) =0.$

	For any $\eta>0$, the master tree $\calT$ is truncated to $\calT(f,\eta)$. The size of the tree is dominated by the nodes intersecting $\Omega$. The Type-I leaf nodes with $\delta_\jk(f) >\eta$ satisfy $2^{-j} \gtrsim \eta^{\frac{2}{2r+d_{\rm in}}}$. At scale $j$, there are at most $O(2^{jd_{\rm in}})$ Type-I leaf nodes. The cardinality of $\calT(f,\eta)$ satisfies
	\begin{align*}\#\calT(f,\eta) 
		&\lesssim \eta^{-\frac{2d_{\rm in}}{2r+d_{\rm in}}}.\end{align*}
	Therefore, $\sup_{\eta>0} \eta^{\frac{2\din}{2r+\din}} \#\calT(f,\eta) <\infty$, so that $f \in \calA^{r/\din}_{\lceil r-1\rceil}$. Furthermore, if $\|f\|_{\cH^r(X)}\leq 1$, we have $|f|_{\cA^{r/d_{\rm in}}_{\lceil r-1\rceil}} <C(r,d,d_{\rm in},\tau,|\Omega|)$ with $C(r,d,d_{\rm in},\tau,|\Omega|)$ depending on $r,d,d_{\rm in},\tau$ and $|\Omega|$.
	
\end{proof}
\section{Conclusion}
\label{sec.conclusion}
In this paper, we establish approximation and generalization theories for a large function class which is defined by nonlinear tree-based approximation theory. Such a function class allows the regularity of the function to vary at different locations and scales. It covers common function classes, such as H\"{o}lder functions and discontinuous functions, such as piecewise H\"{o}lder functions. Our theory shows that deep neural networks are adaptive to nonuniform regularity of functions and data distributions at different locations and scales.

When deep learning is used for regression, different network architectures can give rise to very different results. The success of deep learning relies on the optimization algorithm, initialization and a proper choice of network architecture. We will leave the computational study as our future research.

\appendix
\section*{Appendix}
\section{Proof of the approximation error in \eqref{eqaslapproxerror}}
\label{appendixproofeqaslapproxerror}

The approximation error in \eqref{eqaslapproxerror} can be proved as follows: 
\begin{align*}
	\|f-p_{\Lambda(f,\eta)}\|_{L^2(\rho)}^2
	& =\sum_{\Cjk \notin \calT(f,\eta)} \|\psi_\jk(f)\|_{L^2(\rho)}^2
	=\sum_{\ell \ge 0} \sum_{\Cjk \in \calT(f,2^{-(\ell+1)}\eta) \setminus \calT(f,2^{-\ell}\eta)}\|\psi_\jk(f)\|_{L^2(\rho)}^2
	\\
	&\le \sum_{\ell \ge 0} (2^{-\ell}\eta)^2 \#[\calT(f,2^{-(\ell+1)}\eta) ]
	\le \sum_{\ell \ge 0} (2^{-\ell}\eta)^2 |f|_{\asj}^m  [2^{-(\ell+1)}\eta]^{-m}
	\\
	& =[2^m \sum_{\ell \ge 0} 2^{\ell(m-2)}] |f|_{\asj}^m\eta^{2-m}= C_s |f|_{\asj}^m\eta^{2-m}
	\le C_s |f|_{\asj}^{2} (\#\calT(f,\eta))^{-{2s}}
\end{align*}
since $\eta^{2-m} \le |f|_{\asj}^{2-m}(\#\calT(f,\eta))^{-\frac{2-m}{m}} =  |f|_{\asj}^{2-m}(\#\calT(f,\eta))^{-{2s}}$ by the definition in \eqref{eqasldef}.

\section{Proof of Lemma \ref{lem.poly.uniform}}
\label{proof.lem.poly.uniform}
\begin{proof}[Proof of Lemma \ref{lem.poly.uniform}]
	
	Let $\cR=\{\xb^{\balpha}: |\balpha|=\alpha_1+\ldots+\alpha_d\leq \theta\}$, and $n_p=\#\cR$ be the cardinality of $\cR$.   
	Denote $\widetilde{\Omega}=[0,1]^d$. We first index the elements in $\cR$ according to $|\balpha|$ in the non-decreasing order. 
	One can obtain a set of orthonormal polynomials on $\widetilde{\Omega}$ from $\mathcal{R}$ by the Gram–Schmidt process. This  set of  polynomials forms an orthonormal basis for polynomials on $\widetilde{\Omega}$ with degree no more than $\theta$. Denote this orthonormal  set of polynomials  by $\{\widetilde{\phi}_{\ell}\}_{\ell=1}^{n_p}$. Each $\widetilde{\phi}_\ell$ can be written as
	\begin{align}
		\widetilde{\phi}_\ell (\xb)=\sum_{|\balpha|\leq \theta} \widetilde{b}_{\ell,\balpha}\xb^{\balpha}
		\label{eq.coeff.bound.basis.01}
	\end{align}
	for some $\{\widetilde{b}_{\ell,\balpha}\}_{\balpha}$. There exists a constant $C_1$ only depending on $\theta$ and $d$   so that 
	\begin{align}
		|\widetilde{b}_{\ell,\balpha}|\leq C_1 \quad \forall \ell=1,...,n_p \mbox{ and } |\balpha|\leq \theta.
		\label{eq.coeff.bound.b}
	\end{align}
	
	For the simplicity of notation, we denote $\Omega=C_{j,k}=[\ab,\bb]$ with $\ab=[a_1,...,a_d], \bb=[b_1,...,b_d]$ and $b_1-a_1=\cdots =b_d-a_d=h=2^{-j}$.
	The idea of this proof is to obtain a set of orthonormal basis on $\Omega$ from \eqref{eq.coeff.bound.basis.01}, where each basis is a linear combination of monomials of $\left(\frac{\xb-\ab}{h}\right)$. Then the coefficients of $\pjk$ can be expressed as inner product between $f$ and each basis.
	Let
	\begin{align}
		\phi_\ell(\xb)=\frac{1}{h^{d/2}}\widetilde{\phi}_\ell\left(\frac{\xb-\ab}{h}\right).
		\label{eq.coeff.bound.basis}
	\end{align}
	The $\phi_\ell$'s form a set of orthonormal polynomials on $\Omega$, since 
	\begin{align*}
		&\langle \phi_{\ell_1},\phi_{\ell_2}\rangle\\
		=&\int_{\Omega} \phi_{\ell_1}(\xb)\phi_{\ell_2}(\xb) d\xb\\
		=& \frac{1}{h^d} \int_{\Omega} \widetilde{\phi}_{\ell_1}\left(\left(\frac{\xb-\ab}{h}\right)\right) \widetilde{\phi}_{\ell_2}\left(\left(\frac{\xb-\ab}{h}\right)\right) d\xb\\
		=&\frac{h^d}{h^d} \int_{\Omega} \widetilde{\phi}_{\ell_1}\left(\left(\frac{\xb-\ab}{h}\right)\right) \widetilde{\phi}_{\ell_2}\left(\left(\frac{\xb-\ab}{h}\right)\right) d \left(\frac{\xb-\ab}{h}\right)
		\\
		=& \int_{\widetilde{\Omega}} \widetilde{\phi}_{\ell_1}\left(\xb\right) \widetilde{\phi}_{\ell_2}\left(\xb\right) d\xb\\
		=& \begin{cases}
			1 \mbox{ if } \ell_1=\ell_2,\\
			0 \mbox{ otherwise}.
		\end{cases}
	\end{align*}
	Thus $\{\phi_\ell\}_{\ell=1}^{n_p}$ form an orthonormal basis for polynomials with degree no more than $\theta$ on $\Omega$.
	The $p_{j,k}$ in (\ref{eqpjk}) has the form
	\begin{align}
		p_{j,k}=\sum_{\ell=1}^{n_p} c_\ell \phi_\ell \quad \mbox{ with } \quad c_\ell=\int_{\Omega} f(\xb) \phi_\ell(\xb) d\xb.
		\label{eq.coeff.bound.coeff}
	\end{align}

	Using H\"{o}lder's inequality, we  have
	\begin{align}
		|c_\ell|= \left|\langle f,\phi_\ell\rangle\right| \leq \|f\|_{L^2(\Omega)} \|\phi_\ell\|_{L^2(\Omega)}\leq R\sqrt{|\Omega|}\leq Rh^{d/2}.
		\label{eq.coeff.bound.c}
	\end{align}
	Substituting (\ref{eq.coeff.bound.basis}) and (\ref{eq.coeff.bound.basis.01}) into (\ref{eq.coeff.bound.coeff}) gives rise to
	\begin{align*}
		p_{j,k}(\xb)=\sum_{\ell=1}^{n_p} c_k \sum_{|\balpha|\leq \theta} \frac{\widetilde{b}_{k,\balpha}}{h^{d/2}}\left(\frac{\xb-\ab}{h}\right)^{\balpha}= \sum_{|\balpha|\leq \theta} \left(\sum_{\ell=1}^{n_p}  \frac{c_\ell\widetilde{b}_{\ell,\balpha}}{h^{d/2}}\right)\left(\frac{\xb-\ab}{h}\right)^{\balpha},
	\end{align*}
	implying that $$a_{\balpha}=\sum_{\ell=1}^{n_p} \frac{c_\ell\widetilde{b}_{\ell,\balpha}}{h^{d/2}}.$$
	Putting (\ref{eq.coeff.bound.b}) and (\ref{eq.coeff.bound.c}) together, we have
	\begin{align}
		|a_{\balpha}|\leq \sum_{k=1}^{n_p} \frac{|c_k| |\widetilde{b}_{k,\balpha}|}{h^{d/2}}\leq  \sum_{k=1}^{n_p} \frac{C_{1}Rh^{d/2} }{h^{d/2}}=C_1n_pR,
	\end{align}
	where $C_1$, as defined in (\ref{eq.coeff.bound.b}), is a constant depending on $\theta$.

	Furthermore, since $\|\left(\frac{\xb-\ab}{h}\right)^{\balpha}\|_{\infty}\leq 1$, we have 
	\begin{align*}
		\|p_{j,k}\|_{L^{\infty}(\Omega)}\leq C_1n_p^2R.
	\end{align*}
\end{proof}

\section{Proof of Lemma \ref{lem.multiprod}}
\label{proof.lem.multiprod}
\begin{proof}[Proof of Lemma \ref{lem.multiprod}]
	Denote the product $\times(a,b)=a\times b$. Let $\ttimes(\cdot,\cdot)$ be the network specified in Lemma \ref{lem.multiprod} with accuracy $\varepsilon$.
	We construct
	\begin{align}
		\widetilde{\Pi}(a_1,...,a_N)=\ttimes(a_1,\ttimes(a_2,\ttimes(\cdots,\ttimes(a_{N-1},a_N) \cdots)))
	\end{align}
	to approximate the multiplication operation  $\prod_{i=1}^N a_i=a_1 \times a_2 \times \cdots \times a_N$.
	The approximation error can be bounded as
	\begin{align}
		&|\widetilde{\Pi}(a_1,...,a_N)-\prod_{i=1}^N a_i| \nonumber\\
		=& |\ttimes(a_1,\ttimes(a_2,\ttimes(\cdots,\ttimes(a_{N-1},a_N) \cdots)))-\times(a_1,\times(a_2,\times(\cdots,\times(a_{N-1},a_N) \cdots)))| \nonumber\\
		\leq & |\ttimes(a_1,\ttimes(a_2,\ttimes(\cdots,\ttimes(a_{N-1},a_N) \cdots)))-\times(a_1,\ttimes(a_2,\ttimes(\cdots,\ttimes(a_{N-1},a_N) \cdots)))| \nonumber\\
		&+|\times(a_1,\ttimes(a_2,\ttimes(\cdots,\ttimes(a_{N-1},a_N) \cdots)))-\times(a_1,\times(a_2,\ttimes(\cdots,\ttimes(a_{N-1},a_N) \cdots)))| \nonumber\\
		&+\cdots \nonumber\\
		&+|\times(a_1,\times(a_2,\times(\cdots,\ttimes(a_{N-1},a_N) \cdots)))-\times(a_1,\times(a_2,\times(\cdots,\times(a_{N-1},a_N) \cdots)))| \nonumber\\
		\leq& N\varepsilon.
	\end{align}
	The network size is specified in (\ref{eq.multiprod.archi}).
\end{proof}

\section{Proof of Lemma \ref{lem.intersectionArea}}
\label{proof.lem.intersectionArea}
\begin{proof}[Proof of Lemma \ref{lem.intersectionArea}]

	Let $j^*$ be the smallest integer so that $2^{dj^*}\geq \#\Lambda_{\cT}$. Based on $\cT$, we first construct a $\cT'$ so that $\#\Lambda_{\cT'}=\#\Lambda_{\cT}$ and $j\leq j^*$ for any $C_{j,k}\in \Lambda_{\cT'}$ by the following procedure. 
	
	Note that if there exists $C_{j_1,k}\in \Lambda_{\cT}$ with $j_1>j^*$, there must be a $C_{j_2,k'}\in \Lambda_{\cT}$ with $j_2<j^*$. Otherwise, we must have $\#\Lambda_{\cT}>2^{dj^*}$, contradicting to the definition of $j^*$. 
	Let $C_{j_1,k}$ be a subcube in the finest scale of $\Lambda_{\cT}$, and $C_{j_2,k'}$ be a subcube in the coarsest scale of $\Lambda_{\cT}$. Suppose $j_1>j^*$. We have $j_1-j_2\geq 2$.     
	Denote the set of children and the parent of $C_{j,k}$ by  $\cC(C_{j,k})$ and $\cP(C_{j,k})$, respectively. Since $C_{j_1,k}$ is at the finest scale, we have $\cC(\cP(C_{j_1,k}))\subset \Lambda_{\cT}$. 
	By replacing $\cC(\cP(C_{j_1,k}))$ by $\cP(C_{j_1,k})$ and $C_{j_2,k'}$ by $\cC(C_{j_2,k'})$, we obtain a new tree $\cT_1$ with $\#\Lambda_{\cT_1}=\#\Lambda_{\cT}\leq 2^{dj^*}$.  Note that the subcubes in $\cC(\cP(C_{j_1,k}))$ have side length $2^{-j_1}$, $C_{j_2,k'}$ has side length $2^{-j_2}$. Since $j_1-j_2\geq 2$, we have
	\begin{align}
		|\cB(\Lambda_{\cT_1})|-|\cB(\Lambda_{\cT})|= d2^{-j_2(d-1)}- d2^{-j_1(d-1)}>0,
	\end{align}
	implying that $|\cB(\Lambda_{\cT})|<|\cB(\Lambda_{\cT_1})|$. Replace $\cT$ by $\cT_1$ and repeat the above procedure, we can generate a set of trees $\{\cT_m\}_{m=1}^M$ for some $M>0$ until $j\leq j^*$ for all $C_{j,k}\in \Lambda_{\cT_M}$. We have 
	$$
	|\cB(\Lambda_{\cT})|< |\cB(\Lambda_{\cT_1})|< \cdots < |\cB(\Lambda_{\cT_M})|.
	$$
	
	All leaf nodes of $\cT_M$ are at scale no larger than $j^*-1$. For any leaf node $C_{j,k}$ of $\cT_M$ with $j<j^*-1$, we partition it into its children. Repeat this process until all leaf nodes of the tree is at scale $j^*-1$, and denote the tree by $\cT^*$. Note that doing so only creates additional common boundaries, thus $\cB(\Lambda_{\cT_M})\subset \cB(\Lambda_{\cT^*})$, and we have
	$$
	|\cB(\Lambda_{\cT})|<|\cB(\Lambda_{\cT_M})|<|\cB(\Lambda_{\cT^*})|.
	$$

	We next compute $|\cB(\Lambda_{\cT^*})|$. Note that $\cB(\Lambda_{\cT^*})$ can be generated sequentially by slicing each cube at scale $j$ for $j=0,...,j^*-1$. When $C_{j-1,k}$ is sliced to get cubes at scale $j$, $d$ hyper-surfaces with area $2^{-(j-1)(d-1)}$ are created as common boundaries. There are in total $2^{d(j-1)}$ cubes at scale $j$. Thus we compute $|\cB(\Lambda_{\cT^*})|$ as: 
	\begin{align}
		|\cB(\Lambda_{\cT^*})|= \sum_{j=1}^{j^*} d2^{-(j-1)(d-1)}2^{d(j-1)}=\sum_{j=1}^{j^*} d2^{j-1}=d(2^{j^*}-1)\leq d2^{j^*}\leq 2^dd(\#\Lambda_{\cT})^{1/d} \leq 2^{d+1}d(\# \cT)^{1/d},
	\end{align}
	where we used  $ \#\Lambda_{\cT}\leq 2^{dj^*}\leq 2^d\#\Lambda_{\cT}$ in the second inequality according to the definition of $j^*$. 
\end{proof}

\section{Proof of Lemma \ref{lem.T1.term2}}
\label{proof.lem.T1.term2}
\begin{proof}[Proof of Lemma \ref{lem.T1.term2}]
	Let $\cF^*=\left\{ f_{{\rm NN},j}\right\}_{j=1}^{\cN(\delta,\cF,\|\cdot\|_{L^{\infty}(X)})}$ be a $\delta$ cover of $\cF$. There exists $f_{\rm NN}^*\in \cF^*$ satisfying $\|f_{\rm NN}^*-\widehat{f}\|_{L^{\infty}(X)}\leq \delta$.
	
	We have
	\begin{align}
		\EE_{\cS}\left[\frac{1}{n} \sum_{i=1}^n\xi_i\widehat{f}(\xb_i))\right]= &\EE_{\cS}\left[\frac{1}{n} \sum_{i=1}^n\xi_i(\widehat{f}(\xb_i)-f_{\rm NN}^*(\xb_i) + f_{\rm NN}^*(\xb_i)-f(\xb_i))\right] \nonumber\\
		\leq & \EE_{\cS}\left[\frac{1}{n} \sum_{i=1}^n\xi_i( f_{\rm NN}^*(\xb_i)-f(\xb_i))\right] + \EE_{\cS}\left[\frac{1}{n} \sum_{i=1}^n|\xi_i|\left|\widehat{f}(\xb_i)-f_{\rm NN}^*(\xb_i)\right|\right] \nonumber\\
		\leq& \EE_{\cS}\left[\frac{\|f_{\rm NN}^*-f\|_n}{\sqrt{n}}\frac{\sum_{i=1}^n\xi_i\left(f_{\rm NN}^*(\xb_i)-f_{\rm NN}^*(\xb_i)\right)}{\sqrt{n}\|f_{\rm NN}^*-f\|_n}\right] + \delta\sigma \nonumber\\
		\leq & \sqrt{2} \EE_{\cS}\left[\frac{\|\widehat{f}-f\|_n+\delta}{\sqrt{n}}\left|\frac{\sum_{i=1}^n\xi_i\left(f_{\rm NN}^*(\xb_i)-f_{\rm NN}^*(\xb_i)\right)}{\sqrt{n}\|f_{\rm NN}^*-f\|_n}\right|\right] + \delta\sigma.
		\label{eq.T2.term2.1}
	\end{align}
	In (\ref{eq.T2.term2.1}), the first inequality follows from Cauchy-Schwarz inequality, the second inequality holds by Jensen's inequality and
	\begin{align}
		\EE_{\cS}\left[\frac{1}{n} \sum_{i=1}^n|\xi_i|\left|\widehat{f}(\xb_i)-f_{\rm NN}^*(\xb_i)\right|\right] \leq & \EE_{\cS}\left[\frac{1}{n} \sum_{i=1}^n|\xi_i|\left\|\widehat{f}(\xb_i)-f_{\rm NN}^*(\xb_i)\right\|_{L^{\infty}(X)}\right] \nonumber\\
		\leq& \delta \frac{1}{n} \sum_{i=1}^n \EE_{\cS}\left[\sqrt{\xi_i^2}\right] \nonumber\\
		\leq& \delta \frac{1}{n} \sum_{i=1}^n \sqrt{\EE_{\cS}\left[\xi_i^2\right]} \nonumber\\
		\leq& \delta \frac{1}{n} \sum_{i=1}^n \sqrt{\sigma^2} \nonumber\\
		=&\delta\sigma,
	\end{align}
	the last inequality holds since
	\begin{align}
		\|f_{\rm NN}^*-f\|_n=&\sqrt{\frac{1}{n} \sum_{i=1}^n (f_{\rm NN}^*(\xb_i)-\widehat{f}(\xb_i)+ \widehat{f}(\xb_i)-f(\xb_i))^2} \nonumber\\
		\leq & \sqrt{\frac{2}{n} \sum_{i=1}^n \left[(f_{\rm NN}^*(\xb_i)-\widehat{f}(\xb_i))^2+ (\widehat{f}(\xb_i)-f(\xb_i))^2\right]} \nonumber\\
		\leq & \sqrt{\frac{2}{n} \sum_{i=1}^n \left[\delta^2+ (\widehat{f}(\xb_i)-f(\xb_i))^2\right]} \nonumber\\
		\leq & \sqrt{2}\|\widehat{f}-f\|_n + \sqrt{2}\delta.
	\end{align}
	Denote $z_j=\frac{\sum_{i=1}^n\xi_i(\widehat{f}(\xb_i)-f_{\rm NN}^*(\xb_i))}{\sqrt{n}\|f_{\rm NN}^*-f\|_n}$. The first term in (\ref{eq.T2.term2.1}) can be bounded as
	\begin{align}
		&\sqrt{2} \EE_{\cS}\left[\frac{\|\widehat{f}-f\|_n+\delta}{\sqrt{n}}\left|\frac{\sum_{i=1}^n\xi_i(\widehat{f}(\xb_i)-f_{\rm NN}^*(\xb_i))}{\sqrt{n}\|f_{\rm NN}^*-f\|_n}\right|\right] \nonumber\\
		\leq& \sqrt{2} \EE_{\cS}\left[\frac{\|\widehat{f}-f\|_n+\delta}{\sqrt{n}}\max_j |z_j|\right] \nonumber\\
		=&\sqrt{2} \EE_{\cS}\left[\frac{\|\widehat{f}-f\|_n}{\sqrt{n}}\max_j |z_j| +\frac{\delta}{\sqrt{n}}\max_j |z_j|\right] \nonumber\\
		= & \sqrt{2} \EE_{\cS}\left[\sqrt{\frac{1}{n}\|\widehat{f}-f\|_n^2}\sqrt{\max_j |z_j|^2} +\frac{\delta}{\sqrt{n}}\sqrt{\max_j |z_j|^2}\right] \nonumber\\
		\leq& \sqrt{2}\sqrt{\frac{1}{n} \EE_{\cS} \left[ \|\widehat{f}-f\|_n^2 \right]} \sqrt{\EE_{\cS} \left[ \max_j |z_j|^2 \right]} +\frac{\delta}{\sqrt{n}}\sqrt{\EE_{\cS} \left[ \max_j |z_j|^2 \right]} \nonumber\\
		=& \sqrt{2}\left(\sqrt{\frac{1}{n} \EE_{\cS} \left[ \|\widehat{f}-f\|_n^2 \right]} + \frac{\delta}{\sqrt{n}} \right)\sqrt{\EE_{\cS} \left[ \max_j |z_j|^2 \right]},
		\label{eq.T2.term2.2}
	\end{align}
	where the second inequality comes from Jensen's inequality and Cauchy-Schwarz inequality.
	
	For given $\{\xb_i\}_{i=1}^n$, $z_j$ is a sub-Gaussian variable with variance proxy $\sigma^2$. 
	
	For any $t>0$, we have
	\begin{align}
		\EE_{\cS} \left[ \max_j |z_j|^2|\xb_1,...,\xb_n\right]= &\frac{1}{t} \log \exp\left( t\EE_{\cS} \left[ \max_j |z_j|^2 | \xb_1,...,\xb_n \right] \right) \nonumber\\
		\leq& \frac{1}{t} \log \EE_{\cS} \left[ \exp\left( t\max_j |z_j|^2 \right) | \xb_1,...,\xb_n\right] \nonumber\\
		\leq & \frac{1}{t} \log \EE_{\cS} \left[ \sum_j \exp\left(t|z_j|^2\right) |\xb_1,...,\xb_n \right] \nonumber\\
		\leq & \frac{1}{t} \log \cN(\delta,\cF,\|\cdot\|_{L^{\infty}(X)}) + \frac{1}{t} \log \EE_{\cS} \left[ \exp(t|z_1|^2\right) |\xb_1,...,\xb_n].
	\end{align}
	
	Since $z_1$ is a sub-Gaussian variable with parameter $\sigma$, we have
	\begin{align}
		\EE_{\cS} \left[ \exp(t|z_1|^2)|\xb_1,...,\xb_n\right]=&1+ \sum_{k=1}^{\infty}  \frac{t^k \EE_{\cS} \left[ z_1^{2k}|\xb_1,...,\xb_n \right]}{k!} \nonumber\\
		=& 1+ \sum_{k=1}^{\infty} \frac{t^k}{k!} \int_{0}^{\infty} \PP\left( |z_1|\geq \lambda^{\frac{1}{2k}}| \xb_1,...,\xb_n\right) d\lambda \nonumber\\
		\leq & 1+ 2 \sum_{k=1}^{\infty} \frac{t^k}{k!} \int_{0}^{\infty} \exp\left( -\frac{\lambda^{1/k}}{2\sigma^2} \right) d\lambda \nonumber\\
		=& 1+ \sum_{k=1}^{\infty}  \frac{2k(2t\sigma^2)^k}{k!} \Gamma_G(k) \nonumber\\
		=& 1+2\sum_{k=1}^{\infty} (2t\sigma^2)^k,
	\end{align}
	where $\Gamma_{G}$ represents the Gamma function. Set $t=(4\sigma^2)^{-1}$, we have
	\begin{align}
		\EE_{\cS} \left[ \max_j |z_j|^2|\xb_1,...,\xb_n\right]\leq 4\sigma^2 \log \cN(\delta,\cF,\|\cdot\|_{L^{\infty}(X)})+4\sigma^2\log 3
		\leq 4\sigma^2 \log \cN(\delta,\cF,\|\cdot\|_{L^{\infty}(X)})+6\sigma^2.
		\label{eq.T2.term2.3}
	\end{align}
	Combining (\ref{eq.T2.term2.1}), (\ref{eq.T2.term2.2}) and (\ref{eq.T2.term2.3}) proves the lemma.
\end{proof}

\section{Proof of Lemma \ref{lem.T2}} \label{proof.lem.T2}

\begin{proof}[Proof of Lemma \ref{lem.T2}]
	Denote $\widehat{g}(\xb)=(\widehat{f}(\xb)-f(\xb))^2$. We have $\|\widehat{g}\|_{L^{\infty}(X)}\leq 4R^2$.
	The term ${\rm T_{2}}$ can be written as
	\begin{align}
		{\rm T_2}=& \EE_{\cS} \left[ \EE_{\xb\sim \rho} [\widehat{g}(\xb)|\cS] -\frac{2}{n} \sum_{i=1}^n \widehat{g}(\xb_i) \right] \nonumber\\
		=& 2\EE_{\cS} \left[ \frac{1}{2}\EE_{\xb\sim \rho} [\widehat{g}(\xb)|\cS]- \frac{1}{n} \sum_{i=1}^n \widehat{g}(\xb_i) \right] \nonumber\\
		=& 2\EE_{\cS} \left[ \EE_{\xb\sim \rho} [\widehat{g}(\xb)|\cS]- \frac{1}{n} \sum_{i=1}^n \widehat{g}(\xb_i) -\frac{1}{2}\EE_{\xb\sim \rho} [\widehat{g}(\xb)|\cS]\right].
		\label{eq.T2.1}
	\end{align}
	A lower bound of $\frac{1}{2}\EE_{\xb\sim \rho} [\widehat{g}(\xb)|\cS]$ is derived as
	\begin{align}
		\EE_{\xb\sim \rho_X} [\widehat{g}(\xb)|\cS]=\EE_{\xb\sim \rho} \left[\frac{4R^2}{4R^2}\widehat{g}(\xb)|\cS\right] \geq \frac{1}{4R^2}\EE_{\xb\sim \rho} [\widehat{g}^2(\xb)|\cS].
		\label{eq.T2.g.lower}
	\end{align}
	Substituting (\ref{eq.T2.g.lower}) into (\ref{eq.T2.1}) gives rise to
	\begin{align}
		{\rm T_2}\leq 2\EE_{\cS} \left[ \EE_{\xb\sim \rho} [\widehat{g}(\xb)|\cS]- \frac{1}{n} \sum_{i=1}^n \widehat{g}(\xb_i) -\frac{1}{8R^2}\EE_{\xb\sim \rho} [\widehat{g}^2(\xb)|\cS]\right].
	\end{align}
	Define the set
	\begin{align}
		\cR=\left\{ g: g(\xb)=(f_{\rm NN}(\xb)-f(\xb) ) \mbox{ for } f_{\rm NN}\in \cF\right\}.
	\end{align}
	Denote $\cS'=\{\xb'_i\}_{i=1}^n$ be an independent copy of $\cS$. We have
	\begin{align}
		{\rm T_2}\leq &2\EE_{\cS} \left[ \sup_{g\in \cR}\left(\EE_{\cS'} \left[\frac{1}{n} \sum_{i=1}^n g(\xb'_i)\right]- \frac{1}{n} \sum_{i=1}^n g(\xb_i) -\frac{1}{8R^2}\EE_{\cS'} \left[\frac{1}{n} \sum_{i=1}^n g^2(\xb'_i)\right]\right)\right] \nonumber\\
		\leq& 2\EE_{\cS} \left[ \sup_{g\in \cR}\left(\EE_{\cS'} \left[\frac{1}{n} \sum_{i=1}^n \left(g(\xb'_i)-g(\xb_i)\right)\right] -\frac{1}{16R^2}\EE_{\cS,\cS'} \left[\frac{1}{n} \sum_{i=1}^n \left(g^2(\xb_i)+g^2(\xb'_i)\right)\right]\right)\right] \nonumber\\
		\leq & 2\EE_{\cS,\cS'}\left[ \sup_{g\in \cR} \left(\frac{1}{n} \sum_{i=1}^n \left( \left(g(\xb_i)-g(\xb_i')\right) - \frac{1}{16R^2} \EE_{\cS,\cS'}\left[g^2(\xb_i)+g^2(\xb'_i)\right] \right) \right) \right].
		\label{eq.T2.2}
	\end{align}
	Let $\cR^*=\left\{ g_j^*\right\}_{j=1}^{\cN(\delta,\cR,\|\cdot\|_{L^{\infty}(X)})}$ be a $\delta$-cover of $\cR$. For any $g\in \cR$, there exists $g^*\in \cR^*$ such that $\|g-g^*\|_{L^{\infty}(X)}\leq \delta$.
	
	We bound (\ref{eq.T2.2}) using $\cR^*$. The first term in (\ref{eq.T2.2}) can be bounded as
	\begin{align}
		g(\xb_i)-g(\xb_i')=& g(\xb_i)-g^*(\xb_i)+g^*(\xb_i)-g^*(\xb_i) + g^*(\xb_i') -g(\xb_i') \nonumber\\
		=& \left( g(\xb_i)-g^*(\xb_i)\right) + \left( g^*(\xb_i)-g^*(\xb_i')\right) + \left( g^*(\xb_i')-g(\xb_i')\right) \nonumber\\
		\leq& \left( g^*(\xb_i)-g^*(\xb_i')\right)+2\delta.
		\label{eq.T2.2.term1}
	\end{align}
	We then lower bound $g^2(\xb_i)+g^2(\xb'_i)$ as
	\begin{align}
		&g^2(\xb_i)+g^2(\xb'_i) \nonumber\\
		=& \left( g^2(\xb_i)-(g^*)^2(\xb_i) \right) + \left( (g^*)^2(\xb_i)-(g^*)^2(\xb'_i) \right) + \left( (g^*)^2(\xb_i')-g^2(\xb'_i) \right) \nonumber\\
		\geq & (g^*)^2(\xb_i)+ (g^*)^2(\xb'_i) - | g(\xb_i)-g^*(\xb_i)| | g(\xb_i)+g^*(\xb_i)| - | g^*(\xb_i')-g(\xb_i')|| g^*(\xb_i')+g(\xb_i')| \nonumber\\
		\geq& (g^*)^2(\xb_i) + (g^*)^2(\xb_i')-16R^2\delta.
		\label{eq.T2.2.term2}
	\end{align}
	Substituting (\ref{eq.T2.2.term1}) and (\ref{eq.T2.2.term2}) into (\ref{eq.T2.2}) gives rise to
	\begin{align}
		{\rm T_2} \leq &2\EE_{\cS,\cS'}\left[ \sup_{g^*\in \cR^*} \left(\frac{1}{n} \sum_{i=1}^n \left( \left(g^*(\xb_i)-g^*(\xb_i')\right) - \frac{1}{16R^2} \EE_{\cS,\cS'}\left[(g^*)^2(\xb_i)+(g^*)^2(\xb'_i)\right] \right) \right) \right]+ 6\delta \nonumber\\
		=&2\EE_{\cS,\cS'}\left[ \max_j \left(\frac{1}{n} \sum_{i=1}^n \left( \left(g^*(\xb_i)-g^*(\xb_i')\right) - \frac{1}{16R^2} \EE_{\cS,\cS'}\left[(g^*)^2(\xb_i)+(g^*)^2(\xb'_i)\right] \right) \right) \right]+ 6\delta.
	\end{align}
	Denote $h_j(\xb_i,\xb_i')=g_j^*(\xb_i)-g_j^*(\xb_i')$. We have
	\begin{align*}
		\EE_{\cS,\cS'}\left[h_j(\xb_i,\xb_i')\right]=&0, \\
		\var\left[h_j(\xb_i,\xb_i' \right]= &\EE_{\cS,\cS'}\left[h_j^2(\xb_i,\xb_i')\right]\\
		= &\EE_{\cS,\cS'}\left[\left(g_j^*(\xb_i)-g_j^*(\xb_i')\right)^2\right] \\
		\leq &2\EE_{\cS,\cS'} \left[ (g_j^*)^2(\xb_i)+ (g_j^*)^2(\xb_i') \right].
	\end{align*}
	Thus ${\rm T_2}$ is bounded as
	\begin{align}
		&{\rm T_2}\leq \widetilde{\rm T}_2+6\delta \nonumber\\
		&\mbox{with } \widetilde{\rm T}_2= 2\EE_{\cS,\cS'}\left[ \max_j \left(\frac{1}{n} \sum_{i=1}^n \left( h_j(\xb_i,\xb_i') - \frac{1}{32R^2} \var[h_j(\xb_i,\xb_i')] \right) \right) \right].
	\end{align}
	Note that $\|h_j(\xb_i,\xb_i')\|_{L^{\infty}(X\times X)}\leq 4R^2$. We next study the moment generating function of $h_j$. For any $0<t<\frac{3}{4R^2}$, we have
	\begin{align}
		\EE_{\cS,\cS'}\left[\exp(th_j(\xb_i,\xb_i') \right]= & \EE_{\cS,\cS'} \left[ 1+th_j(\xb_i,\xb_i')+ \sum_{k=2}^{\infty} \frac{t^kh_j^k(\xb_i,\xb_i')}{k!} \right] \nonumber\\
		\leq& \EE_{\cS,\cS'} \left[1+th_j(\xb-i,\xb_i') + \sum_{k=2}^{\infty} \frac{(4R^2)^{k-2}t^kh_j^{k-2}(\xb_i,\xb_i')}{2\times 3^{k-2}}\right] \nonumber\\
		=&\EE_{\cS,\cS'} \left[ 1+th_j(\xb-i,\xb_i') +  \frac{t^2h_j^2(\xb_i,\xb_i')}{2} \sum_{k=2}^{\infty} \frac{(4R^2)^{k-2}t^{k-2}}{3^{k-2}}\right] \nonumber\\
		=&\EE_{\cS,\cS'}\left[ 1+th_j(\xb-i,\xb_i') +  \frac{t^2h_j^2(\xb_i,\xb_i')}{2} \frac{1}{1-4R^2t/3}\right] \nonumber\\
		=& 1+t^2\var\left[h_j(\xb_i,\xb_i')\right]\frac{1}{2-8R^2t/3} \nonumber\\
		\leq & \exp\left(\var\left[h_j(\xb_i,\xb_i')\right]\frac{3t^2}{6-8R^2t} \right),
		\label{eq.T2.moment}
	\end{align}
	where the last inequality comes from $1+x\leq \exp(x)$ for $x\geq0$.
	
	For $0<\frac{t}{n}<\frac{3}{4R^2}$, we have
	\begin{align}
		&\exp\left(\frac{t\widetilde{\rm T}_2}{2}\right) \nonumber\\
		=& \exp\left( t\EE_{\cS,\cS'} \left[\max_j\left(\frac{1}{n}\sum_{i=1}^n h_j(\xb_i,\xb_i')- \frac{1}{32R^2}\frac{1}{n}\sum_{i=1}^n \var[h_j(\xb_i,\xb_i')]\right)\right]\right) \nonumber\\
		\leq& \EE_{\cS,\cS'} \left[ \exp\left( t\max_j\left( \frac{1}{n} \sum_{i=1}^n h_j(\xb_i,\xb_i')- \frac{1}{32R^2}\frac{1}{n}\sum_{i=1}^n \var[h_j(\xb_i,\xb_i')] \right) \right) \right] \nonumber\\
		\leq& \EE_{\cS,\cS'} \left[ \sum_j\exp\left( \left( \frac{t}{n} \sum_{i=1}^n h_j(\xb_i,\xb_i')- \frac{1}{32R^2}\frac{t}{n}\sum_{i=1}^n \var[h_j(\xb_i,\xb_i')] \right) \right) \right] \nonumber\\
		\leq& \sum_j\exp\left( \left(  \sum_{i=1}^n \frac{3t^2/n^2}{6-8R^3t/n}\var[h_j(\xb_i,\xb_i')]- \frac{1}{32R^2}\frac{t}{n}\sum_{i=1}^n \var[h_j(\xb_i,\xb_i')] \right) \right) \nonumber\\
		=&\sum_j\exp\left( \left(  \sum_{i=1}^n \frac{t}{n}\var[h_j(\xb_i,\xb_i')]\left(\frac{3t^2/n^2}{6-8R^3t/n}- \frac{1}{32R^2}\right) \right) \right),
		\label{eq.T2.3}
	\end{align}
	where the first inequality follows from Jesen's inequality, the third inequality uses (\ref{eq.T2.moment}).
	
	Setting 
	\begin{align*}
		\frac{3t^2/n^2}{6-8R^3t/n}- \frac{1}{32R^2}=0
	\end{align*}
	gives rise to $t=\frac{3n}{52R^2}< \frac{3n}{4R^2}$. Substitute the choice of $t$ into (\ref{eq.T2.3}) gives 
	\begin{align*}
		\frac{t\widetilde{\rm T}_2}{2}\leq \log \sum_j \exp(0)= \log \cN(\delta,\cR,\|\cdot\|_{L^{\infty}(X)}).
	\end{align*}
	Therefore, we have
	\begin{align*}
		\widetilde{\rm T}_2\leq \frac{2}{t}\log \cN(\delta,\cR,\|\cdot\|_{L^{\infty}(X)}) = \frac{104R^2}{3n}\log \cN(\delta,\cR,\|\cdot\|_{L^{\infty}(X)})
	\end{align*}
	and
	\begin{align*}
		T_2\leq \frac{104R^2}{3n}\log \cN(\delta,\cR,\|\cdot\|_{L^{\infty}(X)})+ 6\delta \leq \frac{35R^2}{n}\log \cN(\delta,\cR,\|\cdot\|_{L^{\infty}(X)}) +6\delta.
	\end{align*}
	
	We then derive a relation between the covering number of $\cF_2$ and $\cR$. For any $g,g'\in \cR$, we have
	\begin{align*}
		g(\xb)=(f_{\rm NN}(\xb)-f(\xb))^2, \ g'(\xb)=(f'_{\rm NN}(\xb)-f(\xb))^2
	\end{align*}
	for some $f_{\rm NN},f'_{\rm NN}\in \cF$. We have
	\begin{align*}
		\|g-g'\|_{\infty}=&\sup_{\xb} \left|(f_{\rm NN}(\xb)-f(\xb))^2-(f'_{\rm NN}(\xb)-f(\xb))^2 \right| \nonumber\\
		=& \sup_{\xb} \left|(f_{\rm NN}(\xb)-f'_{\rm NN}(\xb))(f_{\rm NN}(\xb)+f'_{\rm NN}(\xb)- 2f(\xb)) \right| \nonumber\\
		\leq& \sup_{\xb} \left|f_{\rm NN}(\xb)-f'_{\rm NN}(\xb)\right|\left|f_{\rm NN}(\xb)+f'_{\rm NN}(\xb)- 2f(\xb) \right| \nonumber\\
		\leq& 4R\|f_{\rm NN}-f'_{\rm NN}\|_{L^{\infty}(X)}.
	\end{align*}
	As a result, we have
	\begin{align*}
		\cN(\delta,\cR,\|\cdot\|_{L^{\infty}(X)})\leq \cN\left(\frac{\delta}{4R},\cF,\|\cdot\|_{L^{\infty}(X)}\right)
	\end{align*}
	and the lemma is proved.
\end{proof}

\bibliographystyle{ims}
\bibliography{ref,reficml}
\end{document}